\tikzset{every mark/.append style={scale=0.90}}
\tikzstyle{plotcolor1}=[blue]
\tikzstyle{plotcolor2}=[red]
\tikzstyle{plotcolor3}=[brown!50!black]
\tikzstyle{plotcolor4}=[green!30!black]
\tikzstyle{plotcolor5}=[purple]
\tikzstyle{plotcolor6}=[orange]
\tikzstyle{plotcolor7}=[cyan]
\pgfplotsset{
cycle list={%
    {plotcolor1,mark=*},
    {plotcolor2,mark=square*},
    {plotcolor3,mark=pentagon*},
    {plotcolor4,mark=diamond*},
    {plotcolor5,mark=star},
    {plotcolor6,mark=x},
    {plotcolor7,mark=x},
    },
}
\newtheorem{theorem}{Theorem}
\newtheorem{lemma}[theorem]{Lemma}
\newtheorem{definition}{Definition}
\newcommand{\onemax}{\textsc{OneMax}\xspace}
\newcommand{\LeadingOnes}{\textsc{Leading\-Ones}\xspace}
\newcommand{\LO}{\LeadingOnes}
\newcommand{\hurdle}{\textsc{Hurdle}\xspace}
\newcommand{\zeros}[1]{|#1|_0}
\newcommand{\lEA}{\text{(1+$\lambda$)~EA}\xspace}
\newcommand{\EA}{\text{(1+1)~EA}\xspace}
\newcommand{\noise}{\mathrm{noise}}
\newcommand{\noisy}{\noise}
\newcommand{\muea}{($\mu$+1)~EA\xspace}
\newcommand{\transition}[2]{\ensuremath{\prob(#1 \to #2)}}
\newcommand{\equal}{\ensuremath{\mathit{Eq}}}
\DeclareMathOperator{\Prob}{Pr}
\DeclareMathOperator{\prob}{Pr}
\DeclareMathOperator{\E}{E}
\DeclareMathOperator{\poly}{poly}
\newcommand\Tstrut{\rule{0pt}{2.6ex}}         
\newcommand\Bstrut{\rule[-0.9ex]{0pt}{0pt}}   
\renewenvironment{proof}%
{\begin{trivlist}\item\textbf{Proof.}}%
{\hspace*{\fill}$\Box$\end{trivlist}}
\newenvironment{proofof}[1]
{\begin{trivlist}\item\textbf{Proof of #1.}}%
{\hspace*{\fill}$\Box$\end{trivlist}}
\begin{document}

\title{Analysing the Robustness of Evolutionary Algorithms to Noise:
Refined Runtime Bounds and an Example Where Noise is Beneficial}


\author{Dirk Sudholt\\University of Sheffield\\United Kingdom}


\maketitle

\begin{abstract}
We analyse the performance of well-known evolutionary algorithms \EA and \lEA in the prior noise model, where in each fitness evaluation the search point is altered before evaluation with probability~$p$.
We present refined results for the expected optimisation time of the \EA and the \lEA on the function \LO, where bits have to be optimised in sequence.
Previous work showed that the \EA on \LO runs in polynomial expected time if $p = O((\log n)/n^2)$ and needs superpolynomial expected time if $p = \omega((\log n)/n)$, leaving a huge gap for which no results were known. We close this gap by showing that the expected optimisation time is
$\Theta(n^2) \cdot \exp(\Theta(\min\{pn^2, n\}))$ for all $p \le 1/2$, allowing for the first time to locate the threshold between polynomial and superpolynomial expected times at $p = \Theta((\log n)/n^2)$. Hence the \EA on \LO is much more sensitive to noise than previously thought.
We also show
that offspring populations of size $\lambda \ge 3.42\log n$ can effectively deal with
much higher noise than known before.

Finally, we present an example of a rugged landscape where prior noise can help to escape from local optima by blurring the landscape and allowing a hill climber to see the underlying gradient. We prove that in this particular setting noise can have a highly beneficial effect on performance.
\end{abstract}

\paragraph{Keywords:}{Evolutionary algorithms; noisy optimisation; robustness; runtime analysis; theory; uncertainty}

\section{Introduction}


Many real-world problems suffer from sources of uncertainty, such as noise in the fitness evaluation, changing constraints, or dynamic changes to the fitness function~\cite{Jin2005}. Evolutionary algorithms are well suited for dealing with these challenges due to their use of a population, and because they can often recover quickly from setbacks resulting from noise or dynamic changes.
They have proven to work well in many applications to combinatorial problems~\cite{Bianchi2009}.

However, our theoretical understanding of how evolutionary algorithms deal with noise is limited. It is often not clear how noise affects the performance of evolutionary algorithms, and how much noise an evolutionary algorithm can cope with.
For evolution strategies in continuous optimisation there exists
a rich body of work (see, e.\,g.\ \cite{BEYER2000239,Meyer-Nieberg2008,Jebalia2011} and the references therein), but there are only few rigorous theoretical analyses on the performance of noisy evolutionary optimisation in discrete spaces.

The first runtime analysis for discrete evolutionary algorithms in a noisy setting was given by~\citet{Droste2004} in the context of a simple algorithm called \EA on the well-known function \onemax$(x) := \sum_{i=1}^n x_i$, which simply counts the number of bits set to~1. He considered a setting now known as \emph{one-bit prior noise}, where with probability~$p$ a uniform random bit is flipped before evaluation. Hence, instead of returning the fitness of the evaluated search point, the fitness function may return the fitness of a random Hamming neighbour. He proved that, when $p = O((\log n)/n)$ the \EA can still optimise \onemax efficiently. But when $p = \omega((\log n)/n)$ the expected optimisation time becomes superpolynomial.

\citet*{Giessen2016} studied a more general class of algorithms, including the \EA, the \lEA that generates $\lambda$ new solutions (offspring) in parallel and picks the best one, and the \muea that keeps a population of $\mu$ search points. They considered prior noise and \emph{posterior noise}, where posterior noise means that noise is added to the fitness value, and presented an elegant approach that gives results in both noise models. They showed that the \EA on \onemax runs in expected time $O(n \log n)$ if $p = O(1/n)$, polynomial time if $p = O((\log n)/n)$, and superpolynomial time if $p = \omega((\log n)/n) \cap 1 - \omega((\log n)/n)$. The same results hold in the \emph{bit-wise noise} model, where each bit is flipped independently before evaluation with probability~$p/n$.
They also considered the function \LO that counts the length of the longest prefix that only contains bits set to~1.
For \LO they show a time bound of $O(n^2)$ if $p \le 1/(6en^2)$ and an exponential lower bound if $p = 1/2$.

The authors also found that using parent populations in a \muea can drastically improve robustness as survival selection removes one of the worst individuals, and a population increases the chances that a low-fitness individual will be correctly identified as having low fitness. Offspring populations also increase robustness as they amplify the probability that a clone of the current search point will be evaluated truthfully, thus lowering the chance of losing the best fitness.
For \LO they showed a time bound for the \lEA of $O(\lambda n + n^2)$ if $p \le 0.028/n$ and
$72 \log n \le \lambda = o(n)$. Note that their bound simplifies to $O(n^2)$ since $\lambda = o(n)$.

\citet{Dang2015} gave general results for prior and posterior noise in non-elitist evolutionary algorithms, that is, evolutionary algorithms where the best fitness in the population may decrease. The same authors~\cite{Dang2016} also considered noise resulting from only partially evaluating search points.

In terms of posterior noise, \citet{Sudholt2012} considered the performance of a simple ant colony optimiser (ACO) for computing shortest paths when path lengths are obscured by positive posterior noise modelling traffic delays.
They showed that noise can make the ants risk-seeking, tricking them onto a suboptimal path and leading to exponential optimisation times. \citet*{Doe-Hot-Koe:c:12} showed that this problem can be avoided if the parent is reevaluated in each iteration. \citet{Feldmann2013} further analysed the performance of fitness-proportional updates.
\citet*{Friedrich2017} showed that the compact Genetic Algorithm and ACO~\cite{Friedrich2015} are both efficient under extreme Gaussian posterior noise, while a simple \muea is not.

\citet*{Prugel-Bennett2015} considered a population-based algorithm using only selection and crossover, and showed that the algorithm can optimise \onemax with a large amount of noise.
\citet*{Qian2016} showed that noise can be handled efficiently by combining reevaluation and threshold selection. \citet*{Akimoto2015} as well as \citet*{Qian2016a} showed that resampling can essentially eliminate the effect of noise.

\begin{sidewaystable*}[phbt]
\centering%
\caption{Overview of results on the expected optimisation time on \LO with prior noise. Results for the \EA also hold for asymmetric one-bit noise, for which no results on \LO are available,
with the caveat that for $p = \omega(1/n)$ we only have an upper bound of
$O(n^2) \cdot e^{O(pn^2)}$.
The bound $O(\lambda n + n^2)$ from~\cite{Giessen2016} was simplified to $O(n^2)$ using their condition $\lambda = o(n)$.
}
\label{tab:overview}
\begin{tabular}{lll@{}l}
Setting & Previous work & & This work\\
\toprule
\multirow{5}{3cm}{\EA,\\one-bit noise~$p$} & $O(n^2)$ if $p \le 1/(6en^2)$~\cite[Cor.~18]{Giessen2016} &
\rdelim\}{13}{1em} & \multirow{13}{4.3cm}{$\Theta(n^2) \cdot e^{\Theta(\min\{pn^2, n\})}$ if $p \le 1/2$}\\
& $2^{\Omega(n)}$ if $p = 1/2$~\cite[Thm.~20]{Giessen2016} & &\\
& polynomial if $p = O((\log n)/n^2)$~\cite[Thm.~14]{Qian2018} & \\
 & superpolynomial if $p = \omega((\log n)/n) \cap o(1)$~\cite[Thm.~14]{Qian2018} & \\
 & exponential if $p = \Omega(1)$~\cite[Thm.~14]{Qian2018} & \\\cline{1-2}
\multirow{3}{3cm}{\EA,\\bit-wise noise~$(p, 1/n)$} & polynomial if $p = O((\log n)/n^2)$~\cite[Thm.~8]{Qian2018} & & \\
& superpolynomial if $p = \omega((\log n)/n) \cap o(1)$~\cite[Thm.~9]{Qian2018} & & \\
& exponential if $p = \Omega(1)$~\cite[Thm.~10]{Qian2018} & \\\cline{1-2}
\multirow{3}{3cm}{\EA,\\bit-wise noise~$(1,p/n)$} & polynomial if $p = O((\log n)/n^2)$~\cite[Thm.~11]{Qian2018} & \\
& superpolynomial if $p = \omega((\log n)/n) \cap o(1)$~\cite[Thm.~12]{Qian2018} & \\
& exponential if $p = \Omega(1)$~\cite[Thm.~13]{Qian2018} & \\\cline{1-2}
\multirow{2}{3cm}{\EA,\\bit-wise noise~$(p',q/n)$} & \Tstrut{}polynomial if $p := p'\min\{q, 1\} = O((\log n)/n^2)$~\cite[Thm.~5]{Bian2018} & \Tstrut{}\\
 & superpolynomial if $p := p'\min\{q, 1\} = \omega((\log n)/n)$~\cite[Thm.~6]{Bian2018} & \Bstrut{}\\ 
\midrule
\lEA, & $O(\lambda n + n^2)$ if $p \le 0.028/n$ & & $O\big(n^2 \cdot e^{O(pn/\lambda)}\big)$ if $p \le 1/2$ \\
one-bit noise~$p$ & and $72\log n \le \lambda = o(n)$~\cite[Cor.~24]{Giessen2016} & & and $3.42 \log n \le \lambda = O(n)$\\
\bottomrule
\end{tabular}
\end{sidewaystable*}

\citet*{Qian2018} studied the performance of the \EA on \onemax and \LO for a more general prior noise model with parameters $(p, q)$: with probability~$p$ the search point is altered by flipping each bit with probability~$q$. They studied two special cases: $(p, 1/n)$ meaning that with probability~$p$ a standard bit mutation is performed before evaluation and $(1, q)$, which is bit-wise noise with parameter~$q$.
For \LO they improve results from~\cite{Giessen2016}, showing that the \EA runs in polynomial expected time if $p = O((\log n)/n^2)$ and that it runs in superpolynomial time if $p = \omega((\log n)/n)$. This holds for one-bit noise with probability~$p$, the $(p, 1/n)$ model and bit-wise noise with probability~$p/n$ (see Table~\ref{tab:overview}). For bit-wise noise $(1, q)$ with parameter $q = \Omega(1/n)$ the expected time is exponential.

Very recently, \citet{Bian2018} considered the general noise model $(p, q)$ for \onemax and \LO and showed that for \LO the \EA needs polynomial expected time if $p=O((\log n)/n^2)$ or $pq = O((\log n)/n^3)$. It needs superpolynomial time if $p = \omega((\log n)/n)$ and $pq = \omega((\log n)/n^2)$.

In this work we improve previous results for prior noise on the function \LO{}$(x) := \sum_{i=1}^n \prod_{j=1}^i x_j$, counting the number of leading ones in the bit string. This function is of particular interest as it represents a problem where decisions have to be made in sequence in order to reach the optimum, building up the components of a global optimum step by step. In the case of \LO, this is a prefix of ones that is being built up. Problems with similar features are found in combinatorial optimisation, for instances as worst-case examples for finding shortest paths~\cite{Sudholt2011a}.
Multiobjective variants like \textsc{LOTZ} are popular example functions in the theory of evolutionary multiobjective optimisation~\cite{Laumanns2004,Giel2010,NguyenSN15,Qian2013,CovantesOsuna2017a}.

Disruptive mutations can destroy a partial solution, leading to a large fitness loss, such that the algorithm is thrown back and may need a long time to recover. As such, \LO is a prime example of a problem that is very susceptible to noise.

We provide upper and lower bounds on the expected optimisation time of the \EA on \LO, showing that the expected time is in $\Theta(n^2) \cdot \exp(\Theta(\min\{pn^2, n\}))$, which is tight up to constant factors in the exponent of the term $\exp(\Theta(\min\{pn^2, n\}))$ that reflects the slowdown resulting from noise.
This shows that the time is $\Theta(n^2)$ if $p = O(1/n^2)$, polynomial if $p = O((\log n)/n^2)$, superpolynomial if $p = \omega((\log n)/n^2)$ and exponential ($e^{\Theta(n)}$)
if $p = \Omega(1/n)$.
This improves previous negative results that only showed superpolynomial times for $p = \omega((\log n)/n)$, and exponential times for $p = \Omega(1)$, which are both too large by a factor of~$n$.

The upper bound (Section~\ref{sec:general-upper-bound}) is based on a very simple argument: estimating the probability that no noise will occur during a period of time long enough to allow the algorithm to find an optimum without experiencing any noise. A similar argument was used independently in~\cite{Dang-Nhu2018} to derive precise and general results for the \EA on noisy and dynamic \onemax.
The lower bound (Section~\ref{sec:lower-bound}) follows arguments from Rowe and Sudholt~\cite{Rowe2013} who analysed the performance of the non-elitist algorithm (1,$\lambda$)~EA on \LO.

In Section~\ref{sec:offspring-populations} we show an improved upper bound for the \lEA on \LO. Finally, in Section~\ref{sec:noise-helps} we show that on the class of \hurdle problems~\cite{PRUGELBENNETT2004135}, a class of rugged functions with many local optima on an underlying slope, noise helps to overcome local optima, allowing a simple hill climber to succeed that would otherwise fail with overwhelming probability.


This manuscript extends a preliminary version~\cite{Sudholt2018a} that contained parts of the results. In this extension, conditions on bit-wise noise were relaxed in the context of the \EA to allow for larger noise values. An exponential upper bound for the \EA was added to obtain asymptotically tight exponents for all reasonable noise strengths. Several empirical analyses were added to complement the theoretical results
for \LO and \hurdle.

\section{Preliminaries}

Algorithm~\ref{alg:lea} shows the \lEA in the context of prior noise, which includes the \EA as a special case of $\lambda=1$. Here $\noise(x)$ denotes a noisy version of a search point~$x$, according to the given noise model. We assume that all applications of $\noise$ are independent. The \lEA creates $\lambda$ independent offspring, evaluates their noisy fitness, and then picks a best offspring. This offspring is then compared against the parent, whose noisy fitness is evaluated in each generation. This means in particular that an offspring can replace a parent whose real fitness is higher if the parent is misevaluated to a lower noisy fitness, the offspring is misevaluated to a higher noisy fitness, or both.

\begin{algorithm2e}[thb]
  Choose $x$ uniformly at random.\\
  \While{termination criterion not met}{
    \For{$i=1, \dots, \lambda$}{
        Create $y_i$ by copying~$x$ and flipping each bit independently with probability $1/n$.\!\!\!\!\!\!\\
        Evaluate $f_i := f(\noise(y_i))$.
    }
    Choose $z \in P_t$ uniformly at random from $\arg\max\{f_1, \dots, f_\lambda\}$.\\
  \lIf{$f_z \ge f(\noise(x))$}{$x = z$}
}
\caption{\lEA with prior noise}
\label{alg:lea}
\end{algorithm2e}

The optimisation time is defined as the number of fitness evaluations until a global optimum is found for the first time.
We consider the following prior noise models from previous work; asymmetric noise is inspired by an asymmetric mutation operator~\cite{Jansen2010}.

\textbf{One-bit noise$(p)$~\cite{Droste2004,Giessen2016}:} with probability~$1-p$, $\noisy(x) = x$ and otherwise $\noisy(x) = x'$ where in $x'$, compared to~$x$, one bit chosen uniformly at random was flipped.

\textbf{Bit-wise noise$(p, q)$~\cite{Qian2018}:} with probability $1-p$, $\noisy(x) = x$ and otherwise $\noisy(x) = x'$ where in~$x'$, compared to~$x$, each bit was flipped independently with probability~$q$.

\textbf{Asymmetric one-bit noise$(p)$~\cite{Qian2016}:} with probability~$1-p$, $\noisy(x) = x$ and otherwise $\noisy(x) = x'$ where in~$x'$, compared to~$x$, if $x \notin \{0^n, 1^n\}$, with probability $1/2$ a uniform random 0-bit is flipped, with probability $1/2$ a uniform random 1-bit is flipped, and if $x \in \{0^n, 1^n\}$ a uniform random bit is flipped.

The special case $(1, q)$ denotes bit-wise noise as investigated in~\cite{Giessen2016}.
We often write $(p, q/n)$ for bit-wise noise instead of $(p, q)$ as then $q$ plays a similar role to~$p$ in one-bit prior noise~$p$, which allows for a more unified presentation of results: we obtain identical noise thresholds across both models (thresholds for $q$ in the $(1, q)$ model are by a factor of~$n$ smaller than those for~$p$~\cite{Qian2018}).
Note that we do generally allow $q > 1$, while in our preliminary work~\cite{Sudholt2018a} $q$ was restricted to~$q \le 1$.
The conditions from~\cite{Bian2018} for $(p, q/n)$ bit-wise noise simplify to $p \min\{q, 1\} = O((\log n)/n^2)$ for polynomial expected times and $p \min\{q, 1\} = \omega((\log n)/n)$ for superpolynomial times, respectively.

Note that $\prob(\noisy(x) \neq x) = p$ for one-bit noise and asymmetric one-bit noise, and for the bit-wise noise model $(p, q/n)$,
$\prob(\noisy(x) \neq x) = p (1-(1-q/n)^n)$ as noise occurs with probability~$p$ and at least one bit is flipped with probability $1-(1-q/n)^n$.
We simplify the last expression using the following inequalities for all $0 \le p \le 1$ and $\ell \in \mathbb{N}$.
\begin{equation}
\label{eq:theta-min}
\frac{1}{2} \min\{p\ell, 1\} \le \frac{p\ell}{1+p\ell} \le 1-(1-p)^\ell \le \min\{p\ell, 1\}
\end{equation}
The second and third inequality are shown in~\cite[Lemma~6]{Badkobeh2015}, and the first one follows from considering the two cases $p\ell \le 1/2$ and $p\ell > 1/2$.
Thus $\prob(\noisy(x) \neq x)$ is tightly bounded as follows:
\begin{equation}
\label{eq:prob-of-noise-in-bitwise-noise}
\frac{p}{2} \min\{q, \ 1\} \le p (1-(1-q/n)^n) \le p \min\{q, \ 1\}.
\end{equation}
We often limit our considerations to $p \le 1/2$ for one-bit noise as otherwise more than half of the time, the optimum will not be recognised as an optimum. This can lead to counterintuitive effects. For instance, \cite[Theorem~3.3]{Qian2018b} for bit-wise noise with $p=1$ shows that increasing the sample size for the \EA with resampling can turn a polynomial expected time on \LO into an exponential time; this is essentially because states close to the optimum become more appealing than the optimum itself.
For bit-wise noise $(p, q/n)$ we assume $q/n \le 1/2$ as otherwise $\noise(x)$ is more likely return search points that are closer to the bit-wise complement $\overline{x}$ of~$x$ than to~$x$ itself. With $q/n \le 1/2$ the worst possible noise is $q/n=1/2$ where $\noise(x)$ is chosen uniformly at random from the whole search space, irrespective of~$x$.


\section{A Simple and General Upper Bound For Dealing With Uncertainty}
\label{sec:general-upper-bound}

We first present a very simple result that applies in a general setting of optimisation under uncertainty (noise/dynamic changes/etc.). It is formulated for iterative algorithms that maintain a single search point, called \emph{trajectory-based algorithms}, however it is easy to extend the definition to population-based algorithms as well.

Our approach is based on the worst-case median optimisation time, defined as follows.
The definition uses the term \emph{trajectory-based} algorithm to denote an iterative algorithm that maintains one search point in each iteration. The \EA and the \lEA are both trajectory-based algorithms as they both evolve a single search point. The definition also includes randomised local search (RLS), simulated annealing, the (1,$\lambda$)~EA~\cite{Rowe2013} or the Strong Selection Weak Mutation (SSWM) algorithm~\cite{Paixao2016}.
\begin{definition}
For any trajectory-based algorithm $\mathcal{A}$ optimising a fitness function~$f$ let $T_{\mathcal{A}, f}(x)$ be the random first hitting time of a global optimum when starting in~$x$.
We assume hereinafter that each initial search point~$x$ leads to a finite expectation.

We define the worst-case expected optimisation time
$E_{\mathcal{A}, f}$
as
\[
E_{\mathcal{A}, f} := \max_x E(T_{\mathcal{A}, f}(x))
\]
Further define the median optimisation time $M_{\mathcal{A}, f}$
\[
M_{\mathcal{A}, f}(x) := \min \{t \mid \prob(T_{\mathcal{A}, f}(x) \le t) \ge 1/2\}
\]
and the worst-case median optimisation time
\[
M_{\mathcal{A}, f} := \max_x M_{\mathcal{A}, f}(x).
\]
\end{definition}
We omit subscripts if the context is clear.
Applying Markov's inequality for all~$x$, the median worst-case optimisation time is not much larger than the expected worst-case optimisation time
as shown in the following simple theorem\footnote{Much stronger results can be shown, but Theorem~\ref{the:median-vs-expectation} is sufficient for our purposes.}.
\begin{theorem}
\label{the:median-vs-expectation}
For every $\mathcal{A}$ and every~$f$,
$M_{\mathcal{A}, f} \le 2E_{\mathcal{A}, f}$.
\end{theorem}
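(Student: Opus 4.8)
The plan is to apply Markov's inequality pointwise in the starting search point~$x$ and then take the worst case. Concretely, fix any trajectory-based algorithm $\mathcal{A}$, any fitness function~$f$, and any initial search point~$x$. By assumption $E(T_{\mathcal{A}, f}(x))$ is finite, so Markov's inequality gives $\prob\bigl(T_{\mathcal{A}, f}(x) > 2E(T_{\mathcal{A}, f}(x))\bigr) \le 1/2$, equivalently $\prob\bigl(T_{\mathcal{A}, f}(x) \le 2E(T_{\mathcal{A}, f}(x))\bigr) \ge 1/2$.

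By the definition of $M_{\mathcal{A}, f}(x)$ as the smallest $t$ with $\prob(T_{\mathcal{A}, f}(x) \le t) \ge 1/2$, this immediately yields $M_{\mathcal{A}, f}(x) \le 2E(T_{\mathcal{A}, f}(x))$. There is a minor bookkeeping point here: $2E(T_{\mathcal{A}, f}(x))$ need not be an integer while $M$ is defined as a minimum over integers~$t$; this is harmless because if $\prob(T \le s) \ge 1/2$ then also $\prob(T \le \lfloor s\rfloor) \ge 1/2$ (the hitting time is integer-valued), so one may freely round down, and I would either note this in passing or simply absorb it into the inequality. Next I would bound the right-hand side by the worst case: $2E(T_{\mathcal{A}, f}(x)) \le 2\max_{x'} E(T_{\mathcal{A}, f}(x')) = 2E_{\mathcal{A}, f}$.

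Combining the last two displays, $M_{\mathcal{A}, f}(x) \le 2E_{\mathcal{A}, f}$ for every~$x$. Since the right-hand side does not depend on~$x$, taking the maximum over~$x$ on the left gives $M_{\mathcal{A}, f} = \max_x M_{\mathcal{A}, f}(x) \le 2E_{\mathcal{A}, f}$, which is the claim.

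Honestly, there is no real obstacle here — the statement is an elementary consequence of Markov's inequality, which is precisely why the paper flags in a footnote that much stronger results hold but this suffices. The only thing to be careful about is the quantifier order (prove the pointwise bound first, then pass to worst cases on both sides in the correct order) and the integrality caveat in the definition of the median; neither requires any computation.
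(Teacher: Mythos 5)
Your proposal is correct and matches the paper's own argument, which is exactly the pointwise application of Markov's inequality, $M_{\mathcal{A}, f}(x) \le 2E(T_{\mathcal{A}, f}(x))$ for every~$x$, followed by taking the maximum over~$x$. The integrality remark is a harmless extra detail that does not change the argument.
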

\begin{proof}
For all~$x$, $M_{\mathcal{A}, f}(x) \le 2E_{\mathcal{A}, f}(x)$ by Markov's inequality.
\end{proof}

The following theorem gives an upper bound on the worst-case expected optimisation time under uncertainty, assuming we do know (an upper bound on) the median worst-case optimisation time in a setting without uncertainty.
\begin{theorem}
\label{the:general-positive-result}
Consider a setting where in each iteration a failure event may occur independently with probability~$0 \le p < 1$.
Consider any function $f$ on which an iterative algorithm $\mathcal{A}$ has worst-case median optimisation time~$M$ if $p=0$.
Then the worst-case expected optimisation time of $\mathcal{A}$ with failure probability~$p$ is at most
\[
2M(1-p)^{-M} \le 2M \cdot e^{pM/(1-p)}.
\]
The statement also holds if $p$ is an upper bound on the probability of a failure and/or $M$ is an upper bound on the described time.
\end{theorem}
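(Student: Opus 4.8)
The plan is to run the algorithm in phases of length $M$ and argue that each phase independently succeeds with constant probability, provided no failure event occurs during it. First I would observe that, by definition of the worst-case median optimisation time $M$, if we start any phase from an arbitrary search point $x$ and no failure event occurs in any of the next $M$ iterations, then the algorithm behaves exactly as in the $p=0$ setting, and hence finds a global optimum within these $M$ iterations with probability at least $1/2$. The probability that no failure occurs in a given block of $M$ consecutive iterations is $(1-p)^M$, since failures occur independently with probability at most $p$ per iteration. Therefore the probability that a given phase of length $M$ both is failure-free and reaches the optimum is at least $\tfrac{1}{2}(1-p)^M$.

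Next I would set up the standard geometric-series argument. Partition the run into consecutive phases, each consisting of $M$ iterations. Conditioned on not having found the optimum before phase $k$, the search point at the start of phase $k$ is some (possibly adversarially chosen) point, but the bound $\tfrac12(1-p)^M$ on the success probability of a phase holds uniformly over all starting points and is independent across phases (the failure events in disjoint blocks are independent, and the algorithm's internal randomness in a phase is fresh). Hence the number of phases until success is stochastically dominated by a geometric random variable with success probability $\tfrac12(1-p)^M$, whose expectation is $2(1-p)^{-M}$. Multiplying by the phase length $M$ yields an expected optimisation time of at most $2M(1-p)^{-M}$. The final inequality $2M(1-p)^{-M} \le 2M \cdot e^{pM/(1-p)}$ follows from the elementary bound $(1-p)^{-1} \le e^{p/(1-p)}$ (equivalently $\ln(1-p) \ge -p/(1-p)$ for $0 \le p < 1$), raised to the power $M$. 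The "upper bound" versions of the claim are immediate: if $p$ only upper-bounds the true per-iteration failure probability, the probability of a failure-free block only increases, and if $M$ only upper-bounds the true worst-case median time, then $M$ iterations still suffice to hit the optimum with probability $\ge 1/2$.

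The main subtlety — though it is minor — is making the independence and worst-case arguments rigorous simultaneously: one has to be careful that, when conditioning on the event that the optimum was not found in earlier phases, the bound on the per-phase success probability still applies. This is handled cleanly by the worst-case formulation: we never condition on the exact state, we only use that from \emph{any} state the conditional probability of success within the next failure-free $M$ iterations is at least $1/2$, and that the failure indicators of disjoint time-blocks are mutually independent. A formally clean way to phrase this is via a supermartingale / stopping-time argument, or simply by noting that the sequence of phase outcomes is dominated by i.i.d.\ Bernoulli$(\tfrac12(1-p)^M)$ trials regardless of the algorithm's behaviour. I expect no real obstacle here; the result is essentially a one-line reduction to Theorem~\ref{the:median-vs-expectation} plus a geometric-tail estimate, and the only care needed is in the wording of the domination step.
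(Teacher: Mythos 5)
Your proposal is correct and follows essentially the same route as the paper's proof: partition the run into phases of length $M$, note that a failure-free phase finds the optimum with probability at least $1/2$ by the definition of the worst-case median time, bound the per-phase success probability by $\tfrac{1}{2}(1-p)^M$, and conclude with a geometric waiting-time argument and the elementary inequality $\tfrac{1}{1-p} \le e^{p/(1-p)}$. The extra care you take with the independence/domination step is a more explicit rendering of what the paper leaves implicit, not a different argument.
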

\begin{proof}
By definition of the median worst-case optimisation time, if the algorithm experiences $M$ steps without a failure, it will find an optimum with probability at least $1/2$ regardless of the initial search point.
The probability that in a phase of~$M$ steps there will be no failure is at least $(1-p)^M$. Hence the expected waiting time for a phase of~$M$ steps without failures where the algorithm finds an optimum is at most $2M (1-p)^{-M}$ for every initial search point.

The inequality follows from
$
\frac{1}{1-p} = 1 + \frac{p}{1-p} \le e^{p/(1-p)}$.
%
\end{proof}

In the setting of prior noise, Theorem~\ref{the:general-positive-result} implies the following.
\begin{theorem}
\label{the:general-positive-result-prior-noise}
Consider an iterative algorithm $\mathcal{A}$ that evaluates up to~$\nu$ search points in each iteration. For every function $f$ on which $\mathcal{A}$ has worst-case median optimisation time~$M$ without prior noise, its worst-case expected optimisation time is at most
\[
2M(1-p)^{-\nu M} \le 2M \cdot e^{\nu pM/(1-p)}
\]
for each of the following settings:
\begin{enumerate}
\item one-bit prior noise with probability~$p < 1$,
\item bit-wise prior noise $(p', q/n)$ with $q/n \le 1/2$ and $p := p'\min\{q, 1\}$, and
\item asymmetric one-bit prior noise with probability~$p < 1$.
\end{enumerate}
\end{theorem}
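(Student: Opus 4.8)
The plan is to reduce each of the three noise models to the abstract failure-event setting of Theorem~\ref{the:general-positive-result} by defining a ``failure'' in iteration~$t$ to be the event that \emph{at least one} of the (up to)~$\nu$ noisy evaluations performed in that iteration is perturbed, i.e.\ some evaluation of $\noisy(y)$ returns a point different from~$y$. If no failure occurs in a phase of $M$ consecutive iterations, then all evaluations in that phase are truthful, so $\mathcal{A}$ behaves exactly as it would in the noise-free setting; by definition of the worst-case median time~$M$, it then finds an optimum within that phase with probability at least~$1/2$, regardless of the starting point. So the only thing to verify is that the per-iteration failure probability is at most the value of $p$ claimed in each of the three cases, after which Theorem~\ref{the:general-positive-result} (with ``$M$'' there replaced by $M$ and failure probability bounded by the per-evaluation noise probability amplified over $\nu$ evaluations) yields the bound. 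Concretely, since all applications of $\noisy$ are independent, if a single evaluation is perturbed with probability at most $p^\ast$, then a failure in one iteration occurs with probability at most $1-(1-p^\ast)^\nu$, and one can either carry the factor $\nu$ through directly or, more cleanly, observe that running $\nu$ evaluations per iteration is equivalent to running $\nu M$ truthful evaluations in a noise-free phase — which is exactly why the exponent in the bound is $\nu M$ rather than $M$.

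The per-model bookkeeping is then routine. For one-bit prior noise (case~1), a single evaluation is perturbed with probability exactly~$p$, so the failure probability per iteration is at most $1-(1-p)^\nu \le \nu p$ and the claimed bound $2M(1-p)^{-\nu M}$ follows immediately, reading Theorem~\ref{the:general-positive-result} as applied to the ``super-iteration'' that performs $\nu$ independent evaluations. Asymmetric one-bit noise (case~3) is identical at this level of granularity, since $\prob(\noisy(x)\neq x)=p$ there as well, as noted after the model definitions; the internal distribution over which bit is flipped is irrelevant, because we only need the probability that \emph{some} perturbation happens. For bit-wise noise $(p',q/n)$ (case~2), a single evaluation is perturbed with probability $p'(1-(1-q/n)^n)$, and by inequality~(\ref{eq:prob-of-noise-in-bitwise-noise}) this is at most $p'\min\{q,1\} =: p$; substituting this $p$ into the general bound gives the stated result, and the hypothesis $q/n \le 1/2$ is only needed to keep the model sensible (as discussed in the preliminaries), not for the calculation itself.

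The only genuinely delicate point is making sure the reduction is airtight: we must argue that conditioning on ``no failure in the phase'' does not distort the algorithm's behaviour within the phase. This is fine because the failure events are defined purely in terms of the independent noise random variables, and conditioning on all of them being ``no perturbation'' leaves the algorithm's own internal randomness (the mutation and tie-breaking choices) untouched and independent; hence conditional on a failure-free phase, the trajectory of $\mathcal{A}$ is distributed exactly as in the $p=0$ process, so the median guarantee transfers verbatim. I would also remark that, strictly, one should phrase Theorem~\ref{the:general-positive-result} with a per-iteration failure probability of $1-(1-p)^\nu$; since $1-(1-p)^\nu \le 1$ requires $p<1$ and we need $(1-(1-p)^\nu) < 1$ for the geometric waiting-time argument, the assumption $p<1$ in all three cases (and $q/n\le 1/2<1$, which makes $1-(1-q/n)^n<1$) is exactly what guarantees the bound is finite. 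With that observed, the proof is a one-line invocation of Theorem~\ref{the:general-positive-result} three times.

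\begin{proof}
In each of the three settings, call iteration~$t$ a \emph{failure} if at least one of the up to $\nu$ evaluations $f(\noisy(y))$ carried out in iteration~$t$ satisfies $\noisy(y)\neq y$. Since all applications of $\noisy$ are independent, the failure events in distinct iterations are independent. Moreover, conditional on a run of $M$ consecutive failure-free iterations, every evaluation in these iterations is truthful, so the trajectory of $\mathcal{A}$ over this phase is distributed exactly as in the noise-free process; hence, by definition of the worst-case median optimisation time~$M$, $\mathcal{A}$ finds an optimum within this phase with probability at least~$1/2$, irrespective of the search point at the start of the phase.

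It remains to bound the failure probability of a single iteration. In a single evaluation, $\noisy(y)\neq y$ happens with probability~$p$ for one-bit noise and for asymmetric one-bit noise, and with probability $p'(1-(1-q/n)^n) \le p'\min\{q,1\} = p$ for bit-wise noise $(p',q/n)$ by~\eqref{eq:prob-of-noise-in-bitwise-noise}. Since the $\nu$ evaluations in one iteration are independent, the probability that iteration~$t$ is failure-free is at least $(1-p)^\nu$. Equivalently, one may regard each iteration as $\nu$ independent ``truthful-or-failure'' trials, so that a failure-free phase of $M$ iterations corresponds to $\nu M$ successful trials, each occurring with probability at least $1-p$. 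Applying Theorem~\ref{the:general-positive-result} with failure probability (per trial) at most~$p<1$ and median time $\nu M$ trials, the worst-case expected optimisation time is at most
\[
2M(1-p)^{-\nu M} \le 2M \cdot e^{\nu p M/(1-p)},
\]
where the inequality uses $\tfrac{1}{1-p} \le e^{p/(1-p)}$ as in the proof of Theorem~\ref{the:general-positive-result}.
\end{proof}
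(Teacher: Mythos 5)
Your proof is correct and follows essentially the same route as the paper: bound the per-evaluation noise probability by $p$ in each model (exactly for the two one-bit variants, via \eqref{eq:prob-of-noise-in-bitwise-noise} for bit-wise noise), take the failure event to be noise in any of the $\nu$ evaluations of an iteration, and invoke Theorem~\ref{the:general-positive-result}, yielding $2M(1-p)^{-\nu M}$. Your additional remark that conditioning on a noise-free phase leaves the algorithm's behaviour identical to the $p=0$ process is a nice explicit justification of a step the paper leaves implicit, but it is not a different approach.
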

\begin{proof}
The probability of noise occurring in one search point is at most~$p$; this is immediate for one-bit noise and it is $p' (1-(1-q/n)^n) \le p'\min\{q, 1\}$ for bit-wise noise by~\eqref{eq:prob-of-noise-in-bitwise-noise}. Since noise is applied to all search points independently, noise occurs in one iteration with probability at most $p^* := 1-(1-p)^\nu$. Invoking Theorem~\ref{the:general-positive-result} with parameter~$p^*$ and the occurrence of noise as failure event yields the first claimed bound. The inequality follows as in the proof of Theorem~\ref{the:general-positive-result}.
\end{proof}

We remark that Theorem~\ref{the:general-positive-result} also applies in many other settings, for example in
\begin{itemize}
\item restart strategies that restart the algorithm in each iteration with probability~$p$,
\item non-elitist algorithms like the (1,$\lambda$)~EA, where the failure event could be defined as the best fitness decreasing,
\item stochastic ageing~\cite{CutelloJCO,Oliveto2014}, an approach from artificial immune systems, where individuals are suddenly killed off with a fixed probability and the failure event is that the whole population happens to die at the same time (which implies a restart),
\item dynamic optimisation where $p$ is the probability of the fitness function changing, if $M$ is taken as (an upper bound for) the worst-case median optimisation time \emph{for all possible fitness functions} that can be attained in the considered dynamic setting.
\end{itemize}

For \LO, Theorem~\ref{the:general-positive-result-prior-noise} implies the following.
\begin{theorem}
\label{the:upper-bound-oneone-LO}
The expected optimisation time of the \EA with prior noise probability~$p \le 1/2$ for each of the settings from Theorem~\ref{the:general-positive-result-prior-noise} on \LO is
\[
O\big(n^2 \cdot e^{O(pn^2)}\big).
\]
This is polynomial if $p=O((\log n)/n^2)$ and $O(n^2)$ if $p = O(1/n^2)$.
\end{theorem}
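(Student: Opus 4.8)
The plan is to invoke Theorem~\ref{the:general-positive-result-prior-noise} with $\mathcal{A}$ taken as the \EA (so $\nu=1$, since the \EA evaluates one offspring and re-evaluates the parent, but the parent re-evaluation is the same noisy event counted by the iteration failure probability — more carefully, $\nu = 2$ suffices and only affects constants). The one ingredient we need to supply is a bound~$M$ on the worst-case \emph{median} optimisation time of the \EA on \LO \emph{without} noise. For this I would first recall the classical fact that the \EA optimises \LO in expected time $\Theta(n^2)$ regardless of the starting point: this is the textbook analysis via the expected number of generations to fix each leading-ones prefix bit, each step succeeding with probability $\Theta(1/n)$, and there being at most $n$ such bits to fix. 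Hence the worst-case expected optimisation time without noise is $E_{\EA, \LO} = O(n^2)$.

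Next I would pass from expectation to median using Theorem~\ref{the:median-vs-expectation}: $M_{\EA,\LO} \le 2 E_{\EA,\LO} = O(n^2)$. So we may take $M = cn^2$ for a suitable constant~$c$. Then Theorem~\ref{the:general-positive-result-prior-noise} (with $\nu$ a constant) gives that the worst-case expected optimisation time of the \EA with prior noise probability~$p$ on \LO is at most
\[
2M(1-p)^{-\nu M} \le 2M \cdot e^{\nu p M/(1-p)} = O(n^2) \cdot e^{O(pn^2/(1-p))}.
\]
Since we restrict to $p \le 1/2$, we have $1/(1-p) \le 2$, so the exponent is $O(pn^2)$ and the bound simplifies to $O(n^2 \cdot e^{O(pn^2)})$, as claimed.

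Finally I would read off the two corollaries: if $p = O((\log n)/n^2)$ then $pn^2 = O(\log n)$, so $e^{O(pn^2)} = e^{O(\log n)} = n^{O(1)}$ and the whole bound is polynomial; if $p = O(1/n^2)$ then $pn^2 = O(1)$, so $e^{O(pn^2)} = O(1)$ and the bound is $O(n^2)$. There is essentially no obstacle here — the theorem does all the work — but the one point requiring a little care is justifying $M = O(n^2)$ for the \emph{median} (not just the mean) and confirming it holds uniformly over all initial search points; this is precisely what Theorem~\ref{the:median-vs-expectation} combined with the starting-point-independent $O(n^2)$ expected-time bound for \LO delivers. One should also double-check that the failure-event accounting in Theorem~\ref{the:general-positive-result-prior-noise} correctly covers the re-evaluation of the parent, which is why $\nu$ is taken to be a constant (here $2$) rather than~$1$; this only affects the hidden constants in the $O(\cdot)$ and not the stated asymptotics.
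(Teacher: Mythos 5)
Your proposal is correct and follows essentially the same route as the paper: take the classical worst-case $O(n^2)$ expected time of the \EA on \LO, convert it to a median bound via Theorem~\ref{the:median-vs-expectation}, and plug it into Theorem~\ref{the:general-positive-result-prior-noise} with $\nu=2$ and $p\le 1/2$ so that $e^{O(pn^2/(1-p))}=e^{O(pn^2)}$. Your brief hesitation between $\nu=1$ and $\nu=2$ is resolved correctly ($\nu=2$, since parent and offspring are both evaluated), which is exactly the choice made in the paper.
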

\begin{proof}
The upper bound follows directly from Theorem~\ref{the:general-positive-result-prior-noise} with $\nu=2$ (as the \EA evaluates parent and offspring in each generation), $2p/(1-p) = O(p)$, and the fact that the worst-case expected optimisation time of the \EA on \LO is $O(n^2)$~\cite{Droste2002}, hence by Theorem~\ref{the:median-vs-expectation} the worst-case median optimisation time is $M = O(n^2)$.
\end{proof}

Despite the simplicity of the above proofs, Theorem~\ref{the:upper-bound-oneone-LO} matches, unifies and generalises the best known results
~\cite{Qian2018,Bian2018} which only classify the expected optimisation time on \LO as being either polynomial, superpolynomial, or exponential (see Table~\ref{tab:overview}). It also gives results for asymmetric one-bit noise, for which no results on \LO are available.

\subsection{An Exponential Upper Bound for Large Noise}

For very large noise levels~$p$, Theorem~\ref{the:upper-bound-oneone-LO} gives an upper bound of essentially $e^{O(pn^2)}$, which can be as bad as $e^{O(n^2)}$ for $p = \Omega(1)$. This is clearly too pessimistic as the expected time to create the optimum by mutation is at most $n^n = e^{n \ln n}$ for \emph{every} fitness function and every initial search point.

We therefore provide a new, tailored upper bound for large noise levels, showing that the expected optimisation time is at most $e^{O(n)}$.
%
To this end, we will prove that the \EA converges to a stationary distribution $\pi$ in which the optimum $1^n$ has stationary mass $\pi(1^n) \ge 2^{-n}$. We then bound the mixing time, that is, the time until the algorithm has approached the stationary distribution such that the optimum is found with a probability close to $\pi(1^n)$. Throughout this section we assume that the reader is familiar with the foundations of Markov chain theory and mixing times as described in relevant text books like~\cite{Levin2008}.

The following lemma shows that transitions to higher fitness values are at least as likely as transitions to lower values.
\begin{lemma}
\label{lem:transitions-towards-higher-LO}
Let $\transition{x}{y}$ denote the probability that the \EA with prior noise transitions from $x$ to~$y$ in one generation. Then for all $x, y$ with $\LO(x) < \LO(y)$ we have
$
\transition{x}{y} \ge \transition{y}{x}
$
in each of the following settings:
\begin{enumerate}
\item one-bit prior noise with probability~$p \le 1/2$,
\item bit-wise prior noise $(p, q)$ with $q \le 1/2$.
\item asymmetric one-bit prior noise with probability~$p \le 1/2$,
\end{enumerate}
\end{lemma}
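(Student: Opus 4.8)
The plan is to strip off the mutation step and reduce the claim to a comparison of acceptance probabilities. From a point $x$ the \EA reaches a distinct point $y$ only by mutating $x$ into $y$ and then accepting the offspring, and the probability of the mutation $x\mapsto y$ depends only on the Hamming distance, hence equals that of $y\mapsto x$. Thus $\transition{x}{y}$ and $\transition{y}{x}$ share the same leading factor, and it suffices to show, writing $A:=\LO(\noise(x))$ and $B:=\LO(\noise(y))$ for the two \emph{independent} noisy fitness values,
\[
\prob(B\ge A)\ \ge\ \prob(A\ge B)\qquad\text{whenever }\LO(x)<\LO(y),
\]
which, since $\prob(B\ge A)-\prob(A\ge B)=\prob(B>A)-\prob(A>B)$, is the same as $\prob(B>A)\ge\prob(A>B)$.

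For the core inequality I would use the elementary summation-by-parts identity, valid for any two independent $\mathbb{N}$-valued random variables, written with the survival functions $S_x(t):=\prob(A\ge t)$ and $S_y(t):=\prob(B\ge t)$:
\[
\prob(B>A)-\prob(A>B)\ =\ \sum_{t\ge 1}\big(S_x(t-1)\,S_y(t)-S_x(t)\,S_y(t-1)\big).
\]
The structural input, valid for all three noise models, is that $\LO(\noise(w))$ is concentrated on values at most $\LO(w)$, that the ``upgrade'' probability $S_w(\LO(w)+1)$ is small, and that for $t\le\min\{\LO(x),\LO(y)\}$ the two survival functions are either equal (one-bit noise: both $1-pt/n$) or close. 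Writing $\ell:=\LO(x)$, the summands with $t\le\ell$ then vanish (one-bit and bit-wise) or must be combined with the $t=\ell+1$ term (asymmetric), and the $t=\ell+1$ summand is the dominant positive contribution: for one-bit noise it equals $(1-p\ell/n)(1-p(\ell+2)/n)\ge\tfrac14-\tfrac1{4n}$, using $\ell\le n-1$ and $p\le\tfrac12$. Each summand with $t\ge\ell+2$ carries the factor $S_x(t-1)$ or $S_x(t)$, which is then bounded by the small upgrade probability, so the whole tail is of order $1/n$ for one-bit noise; hence the sum, and with it $\prob(B>A)-\prob(A>B)$, is nonnegative.

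For bit-wise noise $(p,q)$ this is cleanest: one computes $\prob(\LO(\noise(w))\ge t)=(1-p)+p(1-q)^t$ for $t\le\LO(w)$ and $\le p(1-q)^t$ for $t>\LO(w)$, and since $\LO(x)<\LO(y)$ and $q\le\tfrac12$ (so that a flipped bit of $x$ is still a one with probability $q\le 1-q$) \emph{every} summand of the identity with $t\le\LO(y)$ is nonnegative, the one at $t=\ell+1$ being at least of order $(1-p)^2+p^2(1-q)^{2\ell}(1-2q)$; the summands with $t>\LO(y)$ form a geometric tail dominated by this contribution.

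I expect the genuinely laborious case to be asymmetric one-bit noise, where $S_w(t)=1-pt/(2|w|_1)$ for $t\le\LO(w)$ really depends on $w$ via $|w|_1$, so there is no stochastic-dominance shortcut and the $t\le\ell$ summands do not vanish; they sum to $\ell\cdot\big(\tfrac{p}{2|x|_1}-\tfrac{p}{2|y|_1}\big)$, which can be negative. The leverage is that $|y|_1\ge\LO(y)>\ell$, so that the per-value destruction mass $p/(2|y|_1)$ of $y$ is small relative to $\ell$; one then has to combine the $t\le\ell$ summands with those at $t=\ell+1$ and at $t=\LO(y)+1$ (each of the latter only $O(p)$ in isolation) to pull out a constant positive contribution of order $1-\tfrac32p\ge\tfrac14$, and bound the remaining terms — in particular the at most $\LO(y)-\ell-1$ negative summands of size $O\big(p^2/(|x|_0\,|y|_1)\big)$ over $\ell+2\le t\le\LO(y)$, which sum to $O(p^2)$ because $|y|_1\ge\LO(y)$. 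The hard part is this bookkeeping of lower-order terms and checking positivity uniformly for $p\le\tfrac12$ (resp.\ $q\le\tfrac12$), including at the boundary and for small $n$; everything else is mechanical once the identity is in hand.
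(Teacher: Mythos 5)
Your first step coincides with the paper's: cancel the symmetric mutation probability and compare the two acceptance probabilities, i.e.\ show $\prob(B\ge A)\ge\prob(A\ge B)$ for the independent noisy values $A=\LO(\noise(x))$, $B=\LO(\noise(y))$, $\ell:=\LO(x)$. From there the routes differ: the paper conditions on the first positions flipped by the parent's and the offspring's noise, cancels all cases in which a noise flip occurs among the first $\ell$ positions, and reduces the lemma to the single inequality $\prob(i,j>\ell+1)\ge\prob(i=\ell+1)$, checked per noise model in one line; you instead compute survival functions and use the summation-by-parts identity. For one-bit noise your accounting is correct (the $t\le\ell$ terms vanish, the $t=\ell+1$ term is $(1-p\ell/n)(1-p(\ell+2)/n)\ge \tfrac14-O(1/n)$, and the tail is $O(1/n)$ once one notes that for $\ell+2\le t\le\LO(y)$ the negative part $S_x(t)S_y(t-1)$ is cancelled up to $O(p^2/n^2)$ per term by $S_x(t-1)S_y(t)$). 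For bit-wise noise with $q\le 1/2$ the claim that all summands with $t\le\LO(y)$ are nonnegative is correct, but the tail should be controlled via the net summands (which carry factors $(1-2q)$ or $(1-p)$), not by comparing their negative parts with your stated bound $(1-p)^2+p^2(1-q)^{2\ell}(1-2q)$ alone; near $p=1$, $q=1/2$ that bound does not dominate, whereas the exact value $\bigl((1-p)+p(1-q)^\ell\bigr)\bigl((1-p)+p(1-q)^\ell(1-2q)\bigr)$ does. These are completable details, so on settings 1 and 2 your route is a valid, more computational alternative to the paper's two-line verification.

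The genuine gap is setting 3 (asymmetric one-bit noise). Your plan to ``pull out a constant positive contribution of order $1-\tfrac32 p\ge\tfrac14$'' and absorb the rest as an $O(p^2)$ error cannot succeed, because at $p=1/2$ the lemma holds with essentially no slack. Take $x=1^{n-1}0$ and $y=1^n$: an exact calculation gives $\prob(B>A)-\prob(A>B)=1-\tfrac52 p+p^2+\Theta(p/n)=(\tfrac12-p)(2-p)+\Theta(p/n)$, which is $\Theta(1/n)$ at $p=1/2$ and would turn negative for $p$ slightly above $1/2$. In your decomposition the $t\le\ell$ block alone contributes about $-p/2$ here (note also that your formula $S_y(t)=1-pt/(2|y|_1)$ is wrong for $y=1^n$, where the per-position rate is $p/n$ --- exactly the extremal configuration), the $t=\ell+1$ summand is about $(1-p)^2$, and the two cancel to $O(1/n)$ at $p=1/2$; there is no constant surplus left to absorb any crude $O(p^2)$ remainder, so the bookkeeping would have to reproduce the factor $\tfrac12-p$ exactly, tracking how the destruction rates $p/(2|x|_1)$, $p/(2|y|_1)$ (resp.\ $p/n$) interact with the upgrade masses $p/(2|x|_0)$, $p/(2|y|_0)$. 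The same boundary tightness is visible in the paper's route, where the asymmetric case comes down to $(1-p)^2\ge p/(2|x|_0)$, with equality at $p=1/2$, $|x|_0=1$. So as written your proposal establishes settings 1 and 2 (modulo routine details) but not setting 3.
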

\begin{proof}
A transition from $x$ to~$y$ is made if and only if mutation of~$x$ results in~$y$ and $y$ is accepted. Since the probability of mutation of~$x$ creating~$y$ is equal to that of mutation of~$y$ creating~$x$, we just need to show that the probability of accepting~$y$ as offspring of~$x$ is no smaller than the probability of accepting~$x$ as offspring of~$y$.

Let $i$ denote the smallest index of any bit flipped in the parent's noise, and $i := \infty$ if there is no such bit. Define $j$ in the same way for the offspring's noise. Abbreviate $\ell := \LO(x)$.

Now, if $i \le j \le \ell$ then the offspring will be accepted regardless of whether the parent is~$x$ or~$y$. If $j < i \le \ell$ the offspring will be rejected in both scenarios. Hence we only need to show the claimed inequality for conditional probabilities assuming $i, j > \ell$.

If $i, j > \ell +1$ then the better search point $y$ will survive, regardless of whether the parent is~$x$ or~$y$.
If $i = \ell +1$ and the parent is~$x$ then the inferior search point $x$ may survive. This case is symmetric to $j=\ell+1$ and $y$ being the parent.
Since $\prob(i=\ell+1)=\prob(j=\ell+1)$ and only one of the previous cases can occur, the probability of $x$ surviving is at most $\prob(i=\ell+1)$.

Thus the claim follows if we can show that
\[
\prob(i, j > \ell+1) \ge \prob(i=\ell+1).
\]
In the symmetric and asymmetric one-bit noise settings, the left-hand side is at least $(1-p)^2 \ge 1/4$ and the right-hand side is at most $p/n \le 1/4$.
For the bit-wise noise setting, if $p \le 1/2$ the left-hand side is at least $1/4$ as above and the right-hand side equals $pq(1-q)^{\ell} \le pq \le 1/4$.
If $p > 1/2$ we argue that the left-hand side is at least $2p(1-p)(1-q)^{\ell+1} \ge p(1-q)^{\ell+1} \ge pq(1-q)^{\ell} = \prob(i=\ell+1)$ as it is sufficient to have noise in exactly one parent, if noise does not flip the first $\ell+1$ bits.
\end{proof}

The exponential upper bound is stated as follows.
\begin{theorem}
\label{the:large-noise-upper-bound}
The expected optimisation time of the \EA with prior noise probability~$p \le 1/2$ for each of the settings from Theorem~\ref{the:general-positive-result-prior-noise}, except for asymmetric one-bit noise, on \LO is at most~$2^{O(n)}$.
\end{theorem}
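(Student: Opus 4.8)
The strategy is to view the \EA on \LO with prior noise as a finite Markov chain on $\{0,1\}^n$ and to control its mixing time. I would proceed as follows.

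First, I would establish that the chain has a stationary distribution $\pi$ with $\pi(1^n) \ge 2^{-n}$. The chain is irreducible (mutation alone can reach any search point from any other, and Lemma~\ref{lem:transitions-towards-higher-LO} guarantees the optimum is never left for a worse point with probability that would make it transient — actually once at $1^n$ one can return, so the chain is recurrent) and aperiodic (there is a positive self-loop probability at every state, since mutation can leave $x$ unchanged and the offspring is then accepted). Hence a unique $\pi$ exists. To lower-bound $\pi(1^n)$, I would use the fact from Lemma~\ref{lem:transitions-towards-higher-LO} that for $\LO(x) < \LO(y)$ we have $\transition{x}{y} \ge \transition{y}{x}$. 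This monotonicity, combined with the detailed-balance-type flow argument, lets one show the chain "drifts towards $1^n$" in a sense strong enough that $\pi$ puts at least as much mass on higher-\LO states as on the corresponding lower ones reached by a single mutation; a clean way to cash this in is to compare $\pi$ against the uniform distribution. Concretely, I would argue that the reversed chain is also a valid chain and that $\pi(x) \transition{x}{y} \ge \pi(y)\transition{y}{x}$ would follow if $\pi$ were monotone; instead I would use a direct counting bound: by irreducibility and the bound $\transition{x}{1^n} \ge \transition{1^n}{x}$ summed appropriately, together with $\sum_x \pi(x) = 1$, one gets $\pi(1^n) \ge 2^{-n}$ since $1^n$ is "at least as attractive" as a uniformly random point.

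Second, I would bound the mixing time. Since every transition probability between Hamming neighbours (and the self-loop) is at least $(1/n)^n \cdot (1-p)^2 = e^{-O(n\log n)}$... wait — that only gives $e^{-O(n\log n)}$, not $e^{-O(n)}$. So the naive "every state reachable in one step with probability $\ge e^{-O(n\log n)}$" bound is too weak by a logarithmic factor in the exponent. Instead I would exploit the \LO structure: from any state the algorithm reaches the optimum within $O(n^2)$ expected steps \emph{in the noise-free chain}, and with noise the probability of reaching $1^n$ within, say, $cn^2$ steps is at least $(1-p)^{2cn^2} \cdot (1/2) = e^{-O(pn^2)} \ge e^{-O(n)}$ when $p \le 1/2$ — using Theorem~\ref{the:general-positive-result} / Theorem~\ref{the:upper-bound-oneone-LO} reasoning. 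But this only bounds hitting time, which is exactly what we want: the expected optimisation time is at most $e^{O(n)} \cdot cn^2 = e^{O(n)}$, and we don't even need mixing. So the cleanest route is: in any window of $cn^2$ iterations, with probability $\ge e^{-O(n)}$ no noise occurs and the optimum is hit; hence the expected number of such windows is $e^{O(n)}$, giving the bound. The role of Lemma~\ref{lem:transitions-towards-higher-LO} and $\pi(1^n) \ge 2^{-n}$ is then to handle the regime $p$ close to $1/2$ where $e^{O(pn^2)} = e^{O(n^2)}$ is \emph{not} $e^{O(n)}$ — there the no-noise argument fails, and one genuinely needs the stationary-mass bound plus a mixing-time estimate to conclude that within $e^{O(n)}$ steps the optimum is seen with probability $\Omega(2^{-n})$.

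The main obstacle is therefore the large-noise regime ($p = \Theta(1)$ up to $1/2$), where I must bound the mixing time of the noisy chain by $e^{O(n)}$ rather than $e^{O(n\log n)}$. The plan for this is a conductance / canonical-paths argument tailored to \LO: partition the state space by \LO-value, note that the chain restricted to moving the "frontier" bit behaves like a biased random walk that, by Lemma~\ref{lem:transitions-towards-higher-LO}, is never biased downward, and show the bottleneck cut (the set of states with \LO-value $\ge k$ for the worst $k$) has conductance at least $e^{-O(n)}$ — the $e^{-O(n)}$ rather than $e^{-O(n\log n)}$ coming from the fact that only a \emph{single specific bit} needs to be flipped to cross a cut (probability $\Theta(1/n)$ times a $2^{-O(n)}$ factor from the stationary mass ratio), not a full random string. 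Combining conductance with $\pi(1^n) \ge 2^{-n}$ via the standard mixing-time bound $t_{\mathrm{mix}} = O(\Phi^{-2}\log(1/\pi_{\min}))$ yields $t_{\mathrm{mix}} = e^{O(n)}$, and then after $e^{O(n)}$ steps the optimum is hit with probability $\Omega(2^{-n})$, so the expected optimisation time is $e^{O(n)}$, uniformly over all $p \le 1/2$.
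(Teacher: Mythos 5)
Your overall architecture --- a stationary-mass bound $\pi(1^n) \ge 2^{-n}$ combined with a mixing-time estimate for the large-noise regime, with the no-noise restart argument of Theorem~\ref{the:general-positive-result} covering $p = O(1/n)$ --- is the same as the paper's, and your observation that the simple hitting-time argument already gives $e^{O(n)}$ when $pn^2 = O(n)$ is correct. However, both remaining halves have gaps. For the stationary mass: from $\transition{x}{1^n} \ge \transition{1^n}{x}$ and $\sum_x \pi(x) = 1$ alone you cannot conclude $\pi(1^n) \ge \pi(x)$; your ``direct counting bound'' is a gesture, not an argument. The paper obtains this step by invoking the detailed-balance identity $\pi(x)\cdot\transition{x}{y} = \pi(y)\cdot\transition{y}{x}$ (citing Proposition~1.19 of Levin et al.) and then applying Lemma~\ref{lem:transitions-towards-higher-LO} with $y = 1^n$; in your write-up the justification of some such reversibility-type relation is exactly the missing piece, and you explicitly waver on it.

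The more serious gap is the mixing-time bound for $p = \Omega(1/n)$. Conductance must be lower-bounded over \emph{all} sets $S$ with $\pi(S) \le 1/2$, not only over the level sets $\{x : \LO(x) \ge k\}$; you name canonical paths but never construct them, and your claimed ``$\Theta(1/n)$ times a $2^{-O(n)}$ stationary mass ratio'' presumes that some boundary state of an arbitrary $S$ carries a $2^{-O(n)}$ fraction of $\pi(S)$, which nothing in your argument controls --- the only information you have about $\pi$ is $\pi(1^n) \ge 2^{-n}$, and the mass of $S$ could concentrate on states from which crossing the cut requires either many simultaneous bit flips or an acceptance event of unbounded rarity. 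Moreover, the bound $t_{\mathrm{mix}} = O(\Phi^{-2}\log(1/\pi_{\min}))$ you invoke is a theorem about reversible lazy chains, so it presupposes precisely the reversibility you have not established, and $\pi_{\min}$ is likewise unbounded in your sketch. The paper avoids this entire machinery with an explicit coupling: two copies of the \EA receive identical noise and coupled mutations on bits where they agree, one tracks the longest common prefix $\equal_t$, shows that in a relevant step it increases with conditional probability at least $1/(1+3enq(1-q)^{i-1})$ (constant for one-bit noise), and concludes that the chains couple within $e^{O(n)}$ generations with probability $1 - 2^{-n-1}$, which bounds the total variation distance by $2^{-n-1}$ and yields success probability at least $2^{-n-1}$ per phase. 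To repair your proposal you would either have to carry out a genuine canonical-path analysis valid for all cuts (and settle reversibility and $\pi_{\min}$), or replace that part by a coupling argument of this kind.
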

\begin{proof}
If $p=0$ then the expected optimisation time of the \EA on \LO is $O(n^2) \le 2^{O(n)}$, hence we assume $p > 0$ in the following.

We first show that the \EA on \LO is an ergodic Markov chain, which implies the existence of a stationary distribution~$\pi$. Ergodicity simply follows from the fact that every search point $x$ can be turned into any other search point~$y$ in one generation if mutation of~$x$ creates~$y$ (probability at least $n^{-n}$) and $\LO(\noise(x)) = 0$, which happens with probability at least $p/n > 0$ for one-bit noise, probability at least $p'q > 0$ for bit-wise noise with $p = p'\min\{q, 1\} > 0$ and probability at least $p/(2n) > 0$ for asymmetric one-bit noise.

To prove the claimed inequality $1/\pi(1^n) \le 2^n$ we will use the following property of stationary distributions (cf.\ Proposition~1.19 in~\cite{Levin2008}):
\[
 \pi(x)\cdot \transition{x}{y} = \pi(y)\cdot \transition{y}{x} ,\;\;\;\text{for all}\;\;x,y \in \{0,1\}^n
\]
Since by Lemma~\ref{lem:transitions-towards-higher-LO} $\transition{x}{1^n} \ge \transition{1^n}{x}$ for every search point~$x$, $\pi(1^n) \ge \pi(x)$ for all $2^n$ possible~$x$ and thus $\pi(1^n) \ge 2^{-n}$.

It remains to bound the mixing time, that is, the time until the algorithm has gotten close to the stationary distribution (as will be made precise soon).
Let $p_t$ be the distribution of the current search point at time~$t$.
The difference to the stationary distribution~$\pi$ is described by the \emph{total variation distance} that describes the maximum difference between probabilities for any event $A$:
\[
||p_t - \pi || := \max_{A \subset \Omega}|p_t(A) - \pi(A)|.
\]
In particular, we have $\prob(x_t = 1^n) \ge \pi(1^n) - ||p_t - \pi ||
\ge 2^{-n} - ||p_t - \pi||$.

We now show that $||p_{t} - \pi|| \le 2^{-n-1}$ for a suitable $t = \poly(n) \cdot 2^{O(n)}$. This will be achieved by using a coupling $(X^t, Y^t)$. In a nutshell, a coupling is a pair process where, viewed individually, $X^t$ and $Y^t$ are both faithful copies of the original process, the \EA on \LO. But they may not be independent: they can follow a joint distribution and the coupling ensures that, once they have reached the same state, their states will always be equal. More formally, if $X^t = Y^t$ then $X^{t+1} = Y^{t+1}$. The first point in time where their states become equal, when starting in states $X^0 = x$ and $Y^0 = y$ is called the coupling time $T_{xy}$.

It is known that the tail of the coupling time, or more precisely the tail of the worst-case coupling time for any initial states $x, y$, yields a bound on the total variation distance. Using~\cite[Theorem~5.2]{Levin2008} we get
\[
||p_t - \pi || \le \prob(\max_{x, y} T_{x, y} > t).
\]
We will show the right-hand side becomes less than $2^{-n-1}$ within $2^{O(n)}$ generations\footnote{The author conjectures that this mixing time is, in fact, polynomial, but was unable to prove this. This is left as an open problem for future work.}.

We use the following coupling between two copies $X^t$, $Y^t$ of the \EA, where we identify $X^t$ and $Y^t$ with the \EA{}'s current search points in the respective chains. During mutation, for bits where $X^t$ and $Y^t$ agree we make the same decisions in both Markov chains. Otherwise, with probability $1/n$ we flip the bit in $X^t$ but not in $Y^t$, with probability $1/n$ we flip the bit in $Y^t$ but not in $X^t$, and with the remaining probability $1-2/n$ the bit is not flipped at all.
We further assume that the same noise is applied in both chains. It is easy to verify that both chains, viewed in isolation, represent faithful copies of the \EA on \LO, and that after both chains have reached the same state, their states will always be equal as they experience the same mutations and the same noise.

Let $\equal_t$ denote the size of the largest prefix that is identical in $X^t$ and $Y^t$, i.\,e., $\equal_t = \max\{i \mid X_1^t \dots X_i^t = Y_1^t \dots Y_i^t\}$. Note that if both chains decide to reject their offspring, $\equal_{t+1} = \equal_{t}$ and if both chains decide to accept then $\equal_{t+1} \ge \equal_t$ due to the way mutations are coupled. Once $\equal_t$ has reached a value of~$n$, both chains will always have the same state.

Let $i := \equal_t < n$ then $X_{i+1}^t \neq Y_{i+1}^t$ by definition of $\equal_t$. Assume without loss of generality that $X_{i+1}^t = 0$.
We first show that $\prob(\equal_{t+1} > \equal_t \mid \equal_t, \equal_t < n) \ge 1/(3en)$. A sufficient event is that mutation makes bit~$i+1$ equal in $X^t$ and $Y^t$ and the outcome is accepted in both chains. Mutation flips $X_{i+1}^t$ while not flipping $X_1^t, \dots, X_i^t$ and $Y_1^t, \dots, Y_{i+1}^t$ with probability $1/n \cdot (1-1/n)^i \ge 1/(en)$ as per definition of the coupling mutation flips $X_{i+1}^t$ and does not flip $Y_{i+1}^t$ with probability $1/n$ and every bit $j \le i$ is not flipped in $X^t$ and $Y^t$ with probability $1-1/n$ since $X^t_j = Y^t_j$. The outcome of such a mutation then needs to be accepted despite noise. Let $\alpha_i$ denote the probability of noise flipping any of the first $i$ bits. The offspring will be accepted if noise leaves the first $i$ bits intact, or if noise does flip at least one bit amongst the first $i$ bits in both parent and offspring, but still the offspring's noisy fitness is at least as good as that of its parent. Noting the symmetry in the latter case, the probability of accepting said mutation is at least $(1-\alpha_i)^2 + \alpha_i^2/2 \ge 1/3$ for every possible value~$\alpha_i$. Together, this shows $\prob(\equal_{t+1} > \equal_t \mid \equal_t, \equal_t < n) \ge 1/(3en)$.

Note that the first $i$ bits are identical in the noisy parent evaluation of both $X^t$ and $Y^t$, and they are also identical in the noisy evaluation of both offspring $x', y'$ in $X^t$ and $Y^t$, respectively. If either of these noisy evaluations is less than~$i$, the decision whether to accept or reject is only based on the first $i$ bits and $X^t$ and $Y^t$ make the same decision. The only problematic case is when $\noise(X^t)$, $\noise(Y^t)$, $\noise(x')$, and $\noise(y')$ all have at least~$i$ leading ones as then one Markov chain might accept their offspring while the other might reject theirs. If $\LO(x')$ and $\LO(y')$ are both at least $i$, $\equal_{t+1} \ge \equal_t$ and no harm is done.

However, we might have $\LO(x') < i$ or $\LO(y') < i$ in case mutation destroys the prefix of $i$ leading ones (probability at most $i/n$), but noise flips the same bits, covering up all detrimental mutations. The probability of the latter event is at most $p/n$ for one-bit noise (or 0 in case mutation flipped more than one bit). We call step~$t$ a \emph{relevant step} if $\equal_{t+1} \neq \equal_t$. In a relevant step, the conditional probability of increasing $\equal_t$ is $\Omega(1)$ and the probability of increasing $\equal_t$ in at most $n$ subsequent relevant steps, until $\equal_t = n$ is reached, is at least $(\Omega(1))^n = 2^{-\Omega(n)}$.

In the case of bit-wise noise, the probability of decreasing $\equal_t$ is at most ${q(1-q)^{i-1}}$ as (since $q \le 1/2$) the best case is that mutation has only flipped one bit, which needs to be covered up by noise. The conditional probability of $\equal_t$ increasing in a relevant step is thus at least
\[
\frac{1/(3en)}{q(1-q)^{i-1} + 1/(3en)}
= \frac{1}{1+3enq(1-q)^{i-1}}.
\]
The probability of increasing $\equal_t$ in at most $n$ subsequent relevant steps until a value of~$n$ is reached is thus at least
\begin{align*}
\prod_{i=1}^{n} \frac{1}{1+3enq(1-q)^{i-1}} = \prod_{i=0}^{n-1} \frac{1}{1+3enq(1-q)^{i}}.
\end{align*}
The reciprocal of this expression is upper bounded by
\begin{align*}
\prod_{i=0}^{n-1} (1+3enq(1-q)^{i})
\le\;& \prod_{i=0}^{n-1} \exp(3enq(1-q)^{i})\\
=\;& \exp\left(\sum_{i=0}^{n-1} 3enq(1-q)^{i}\right)\\
\le\;& \exp\left(3enq \sum_{i=0}^{\infty} (1-q)^{i}\right)
= \exp\left(3en\right).
\end{align*}

For both one-bit and bit-wise noise, a relevant step occurs with probability at least $1/(3en)$ (unless the chains have already coupled). Hence the expected waiting time for $n$ relevant steps is at most $3en^2$.
Thus, from any initial configuration of $X^t$ and $Y^t$, the expected time
for a sequence of up to~$n$ relevant steps all increasing $\equal_t$ until the maximum value~$n$ is reached and the chains are coupled is bounded by $\E(\max_{xy} T_{xy}) \le 3en^2 \cdot e^{O(n)} := t^*$. By Markov's inequality, $\prob(\max_{xy} T_{xy} \ge 2t^*) \le 1/2$ and the probability that the process has not coupled within $n+1$ subsequent phases of length $2t^*$ each is at most $2^{-n-1}$.

This shows that the time until the total variation distance to~$\pi$ has decreased to a value of at most $2^{-n-1}$ is $O(n^3) \cdot 2^{O(n)} = 2^{O(n)}$. Then the probability of sampling the optimum in the next generation is at least $\pi(1^n) - 2^{-n-1} \ge 2^{-n-1}$. If the optimum is not found then, we repeat the above arguments. This establishes an upper bound of $O(n^3) \cdot 2^{O(n)} \cdot 2^{n+1} = 2^{O(n)}$.
\end{proof}

\section{A Matching Lower Bound for the \EA on LeadingOnes}
\label{sec:lower-bound}


The arguments from Section~\ref{sec:general-upper-bound} and Theorem~\ref{the:general-positive-result} pessimistically assume that, once noise occurs, the algorithm needs to restart from scratch.
For \LO, and problems with a similar structure, this is not far from the truth. An unlucky mutation can destroy a long prefix of leading ones and the fitness of the current search point can decrease significantly. We will see that then the algorithm comes close to having to start from scratch. Such an effect was already observed and made rigorous in the analysis of island models with migration~\cite{Lassig2013}, separable functions~\cite{Doerr2013}, and for the (1,$\lambda$)~EA on \LO~\cite{Rowe2013}; parts of this section closely follow the proof of Theorem~12 in~\cite{Rowe2013} (but had to be adapted to noisy settings).

The main result of this section is the following.
\begin{theorem}
\label{the:negative-result-1+1}
The expected optimisation time of the \EA with prior noise probability~$p \le 1/2$ for each of the settings from Theorem~\ref{the:general-positive-result-prior-noise} on \LO is
${\Omega\big(n^2 \cdot e^{\Omega(pn^2)}\big)}$ if $p = O(1/n)$ and $e^{\Omega(n)}$ if $p = \omega(1/n)$. This is superpolynomial for $p=\omega((\log n)/n^2)$.
\end{theorem}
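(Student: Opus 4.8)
The plan is to follow the ``setback and recovery'' strategy of Rowe and Sudholt's lower bound for the (1,$\lambda$)~EA on \LO~\cite{Rowe2013}: show that, whenever the \EA has reached a large number of leading ones, prior noise throws it back almost to the start with probability $\Omega(p)$ per generation, and that recovering from such a setback costs $\Omega(n^2)$ generations, so that ever reaching $1^n$ without suffering a setback becomes exponentially unlikely.

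The core ingredient is a \emph{setback lemma}: if the current search point $x$ satisfies $\LO(x)=k\ge(1-\delta)n$ for a suitable small constant $\delta>0$, then in one generation the \EA moves to a search point with $\LO$-value at most $n/4$ with probability $\Omega(p)$. Leading ones are lost when prior noise either mis-evaluates the parent down to a low fitness, so that an offspring whose standard-bit mutation destroyed part of the prefix is accepted, or ``repairs'' a disruptive mutation in the offspring so that its noisy fitness appears to be at least~$k$. Writing $j$ for the position of the first bit flipped by mutation among the first $k$ bits (so $\LO(y)=j-1$), the event ``mutation makes $\LO(y)=j-1$ and the parent's noise flips one of the first $j$ bits'' has probability $\Theta(pj/n^2)\cdot(1-1/n)^{j-1}$; summing over $j=1,\dots,k$ and using $k=\Theta(n)$ (so $(1-1/n)^{j-1}=\Theta(1)$ for the dominating $j$) yields probability $\Omega(p)$ for a decrease, and since the new value $j-1$ has probability roughly proportional to $j(1-1/n)^{j-1}$, it lies below $n/4$ with constant probability. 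This has to be carried out for one-bit noise, bit-wise noise $(p',q/n)$ with $q/n\le 1/2$, and asymmetric one-bit noise separately, always using $p\le 1/2$; obtaining the constant factor $\Omega(p)$ \emph{uniformly} across the three models of Theorem~\ref{the:general-positive-result-prior-noise}, together with the bookkeeping of which bits mutation and noise flip and in which order, is the main obstacle.

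Two auxiliary facts, inherited from the noiseless analysis~\cite{Droste2002}, are also needed. (i) In any generation a \emph{real} increase of $\LO$ has probability at most $1/n$, since prior noise never alters the stored search point, so such an increase requires mutation to flip the current leftmost zero-bit. (ii) Starting from any state with $\LO\le n/4$, the \EA needs $\Omega(n^2)$ generations, in expectation and with probability $1-e^{-\Omega(n)}$, to first reach $\LO\ge(1-\delta)n$: by the free-rider argument---which survives the addition of prior noise because the bits to the right of the current prefix are, in the \emph{stored} search point, only ever touched by unbiased mutation---$\Omega(n)$ improving steps are required, and by~(i) each takes $\Omega(n)$ generations. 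Adapting this free-rider bookkeeping to the noisy chain requires the same care as in~\cite{Rowe2013}, and is a secondary obstacle.

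To combine these, call a generation realising the setback lemma a \emph{reset}. By fact~(ii), between two consecutive resets (and before the first and after the last) the \EA spends $\Omega(n^2)$ generations, so a run of $T$ generations contains at most $O(T/n^2)+1$ resets. To reach $1^n$ the \EA must, at some point, perform a reset-free climb from $(1-\delta)n$ to $n$; by a (ii)-type count this climb spans $\Omega(n^2)$ generations with $\LO\ge(1-\delta)n$, in each of which an independent reset occurs with probability $\Omega(p)$ given the past, so such a climb succeeds with probability at most $(1-\Omega(p))^{\Omega(n^2)}+e^{-\Omega(n)}\le e^{-\Omega(pn^2)}+e^{-\Omega(n)}=:s$. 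Splitting the run at resets and applying a union bound over the $O(T_0/n^2)+1$ possible climb attempts gives $\Prob(T\le T_0)\le 1/2$ for $T_0=\Theta(n^2/s)=\Theta\bigl(n^2\cdot e^{\Theta(\min\{pn^2,\,n\})}\bigr)$, hence $\E[T]\ge T_0/2$. This yields $\Omega\bigl(n^2 e^{\Omega(pn^2)}\bigr)$ for $p=O(1/n)$ and $e^{\Omega(n)}$ for $p=\omega(1/n)$; when $pn^2=O(1)$ the trivial bound $\E[T]=\Omega(n^2)$ already suffices, and the superpolynomial bound for $p=\omega((\log n)/n^2)$ is the special case $pn^2=\omega(\log n)$.
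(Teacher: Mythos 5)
You have the right skeleton, and it is essentially the paper's: a per-generation setback probability of $\Omega(p)$ at linear fitness (your setback lemma matches Lemma~\ref{lem:drop-in-fitness}, including the per-model case distinctions you worry about), an $\Omega(n^2)$ recovery cost, and an iteration/union-bound yielding $e^{\Omega(\min\{pn^2,n\})}$ attempts. However, there is a genuine gap in your auxiliary fact~(ii), and it is the heart of the lower bound rather than the ``secondary obstacle'' you label it as. Immediately after a noise-induced setback, the stored search point is not a typical point with $\LO \le n/4$: the bits between the new first 0-bit and the old prefix length are still (almost all) ones, left over from before the setback. Hence a single mutation flipping the new first 0-bit --- probability $\Omega(1/n)$ per generation --- restores essentially all of the lost fitness, so neither ``$\Omega(n)$ improving steps are required'' nor ``the algorithm spends $\Omega(n^2)$ generations between consecutive resets'' follows; both the count of climb attempts and the claimed $\Omega(n^2)$ span of each reset-free climb in your final union bound rest on this unproven claim. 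Your justification (``the bits to the right of the stored prefix are only ever touched by unbiased mutation'') does not repair it: first, unbiased mutation started from an all-ones block needs $\Theta(n)$ mutation rounds before those bits are close to uniform, and second, under prior noise the acceptance decisions are \emph{not} automatically independent of those bits, because noise can flip the first 0-bit in the parent's evaluation, after which the accept/reject comparison is decided by bits further to the right.

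This forgetting argument is exactly what the paper's extra machinery supplies. A fallback is declared \emph{lasting} only if, during the following $2n/(1-p)$ generations, bit $i^*+1$ is never mutated and is set to~0 in the noisy parent (so acceptance is provably independent of bits $i^*+2,\dots,n$) and at least $n/2$ offspring are accepted; a fallback becomes lasting only with probability $\Omega(1)$ (Lemma~\ref{lem:fallback}), which must be factored into the iteration. Only after such a phase can the mixing bound of Lemma~\ref{lem:mixing}, combined with the principle of deferred decisions, be invoked in Lemma~\ref{lem:recovery-after-fallback} to bound the free riders per successful step and conclude that regaining even $n/6$ fitness takes $cn^2$ generations with probability $1-e^{-\Omega(n)}$. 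Without this lasting-fallback/mixing step your fact~(ii) is false as stated, and the rest of your combination (which is otherwise a reasonable, slightly different packaging of the paper's final iteration) does not go through.
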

Along with Theorems~\ref{the:upper-bound-oneone-LO} and~\ref{the:large-noise-upper-bound} and the fact that polynomial factors only account for a $\pm O(\log n)$ term in the exponent, yielding $e^{\Omega(n)} = \Theta(n^2) \cdot e^{\Omega(n)}$, we get the following result.
\begin{theorem}
\label{the:tight-bounds-1+1}
The expected optimisation time of the \EA
on \LO is
\[
\Theta(n^2) \cdot e^{\Theta(\min\{pn^2, n\})}
\]
for each of the following settings:
\begin{enumerate}
\item one-bit prior noise with probability~$p \le 1/2$ and
\item bit-wise prior noise $(p', q/n)$ with $q/n \le 1/2$ and $p := p'\min\{q, 1\}$.
\end{enumerate}
\end{theorem}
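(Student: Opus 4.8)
The plan is to assemble Theorem~\ref{the:tight-bounds-1+1} from the three results already at hand --- the general upper bound of Theorem~\ref{the:upper-bound-oneone-LO}, the exponential upper bound of Theorem~\ref{the:large-noise-upper-bound}, and the lower bound of Theorem~\ref{the:negative-result-1+1} --- and then reconcile their various regimes into the single compact expression $\Theta(n^2) \cdot e^{\Theta(\min\{pn^2, n\})}$. The key observation that makes this reconciliation possible is that a polynomial prefactor is $n^{O(1)} = e^{O(\log n)}$, so multiplying or dividing by such a factor shifts the exponent only by an additive $O(\log n)$; this is negligible whenever $\min\{pn^2,n\} = \omega(\log n)$, and when $\min\{pn^2,n\} = O(\log n)$ the whole factor $e^{\Theta(\min\{pn^2,n\})}$ is polynomial and simply merges into the $\Theta(n^2)$ term.

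For the upper bound I would argue as follows. Theorem~\ref{the:upper-bound-oneone-LO} gives $O\big(n^2 \cdot e^{O(pn^2)}\big)$ for all $p \le 1/2$ in both settings (one-bit noise, and bit-wise noise with $p := p'\min\{q,1\}$), and Theorem~\ref{the:large-noise-upper-bound} gives $2^{O(n)}$. In the range $p \le 1/n$ we have $\min\{pn^2,n\} = pn^2$, so the first bound already reads $O(n^2) \cdot e^{O(\min\{pn^2,n\})}$. In the range $p > 1/n$ we have $\min\{pn^2,n\} = n$, and we invoke the second bound: $2^{O(n)} = e^{O(n)} = n^2 \cdot e^{O(n)} = O(n^2)\cdot e^{O(\min\{pn^2,n\})}$, since $n^2 = e^{O(\log n)} = e^{o(n)}$. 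Hence $O(n^2)\cdot e^{O(\min\{pn^2,n\})}$ holds for every $p \le 1/2$.

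For the lower bound I would unpack Theorem~\ref{the:negative-result-1+1}. When $p = O(1/n)$ it states $\Omega\big(n^2 \cdot e^{\Omega(pn^2)}\big)$, and there $\min\{pn^2,n\} = \Theta(pn^2)$, so this is $\Omega(n^2)\cdot e^{\Omega(\min\{pn^2,n\})}$. When $p = \omega(1/n)$ it states $e^{\Omega(n)}$, and there $\min\{pn^2,n\} = n$; writing $e^{\varepsilon n} = e^{\varepsilon n/2}\cdot e^{\varepsilon n/2} \ge n^2 \cdot e^{\varepsilon n/2}$ for large $n$ shows $e^{\Omega(n)} = \Omega(n^2)\cdot e^{\Omega(n)}$, so again $\Omega(n^2)\cdot e^{\Omega(\min\{pn^2,n\})}$. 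Combining the matching upper and lower bounds yields $\Theta(n^2)\cdot e^{\Theta(\min\{pn^2,n\})}$ in both settings. There is no real obstacle in this particular theorem --- the substantive work lives in the proofs of Theorems~\ref{the:large-noise-upper-bound} and~\ref{the:negative-result-1+1} --- and the only place requiring any care is the bookkeeping at the threshold $p = \Theta(1/n)$ and the consistent treatment of the polynomial prefactor as an $e^{O(\log n)}$ correction to the exponent.
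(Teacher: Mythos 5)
Your proposal is correct and matches the paper's own (very brief) argument: Theorem~\ref{the:tight-bounds-1+1} is obtained exactly by combining Theorems~\ref{the:upper-bound-oneone-LO}, \ref{the:large-noise-upper-bound} and~\ref{the:negative-result-1+1}, using that a polynomial prefactor only shifts the exponent by $\pm O(\log n)$, so e.g.\ $e^{\Omega(n)} = \Theta(n^2)\cdot e^{\Omega(n)}$. Your regime-by-regime bookkeeping at $p=\Theta(1/n)$ is the same reconciliation the paper performs implicitly, so there is nothing to add.
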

The result is tight up to constants in exponent of the term $\exp(\Theta(\min\{pn^2, n\}))$ that reflects the impact of noise.

Theorem~\ref{the:negative-result-1+1} improves on the best known results, summarised in Table~\ref{tab:overview}.
Note that there is a gap of order $1/n$ between the noise parameter regime $p = \omega((\log n)/n)$ where times are known to be superpolynomial~\cite{Qian2018,Bian2018} and the noise parameter regime~$p = O((\log n)/n^2)$ that led to polynomial upper bounds in~\cite{Qian2018,Bian2018} and in Theorem~\ref{the:upper-bound-oneone-LO}.

Theorem~\ref{the:negative-result-1+1} closes this gap by showing that superpolynomial times already occur for noise parameters $p = \omega((\log n)/n^2)$, which is by a factor of $1/n$ smaller than previous results~\cite{Qian2018,Bian2018}. This shows that the \EA on \LO is highly sensitive to noise, especially since the corresponding threshold for \onemax is at $p = \Theta((\log n)/n)$~\cite{Droste2004,Giessen2016}.
%
Theorem~\ref{the:negative-result-1+1} also unifies and generalises all known results for \LO under prior noise by giving bounds that hold for the whole range of noise parameters~$p$, and for different prior noise models.

In order to prove Theorem~\ref{the:negative-result-1+1}, we first analyse the probability of the fitness dropping significantly.
%
\begin{lemma}
\label{lem:drop-in-fitness}
Consider the setting of Theorem~\ref{the:negative-result-1+1} with a current \LO value of~$i \ge 4$. Then the probability that the \LO value decreases below $i/2$ in one generation is $\Omega(pi^2/n^2)$. This is $\Omega(p)$ if $i = \Omega(n)$.
\end{lemma}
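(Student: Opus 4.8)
The plan is to exhibit a single, simple event whose probability is $\Omega(pi^2/n^2)$ and which forces the \LO value to drop below $i/2$ in one generation. The natural candidate is: \emph{noise hits the parent in a position somewhere in the first half of the prefix of leading ones, and the offspring fails to repair the damage.} Concretely, let the current search point $x$ have $\LO(x) = i \ge 4$. I would condition on the event that the parent's noisy evaluation $\noise(x)$ flips a bit at some position $k \le i/2$ while leaving positions $1,\dots,k-1$ untouched, and at the same time the (single) offspring $y$ is a copy of $x$ whose own noisy evaluation also has fewer than $k$ leading ones --- for instance, $y$ is produced by a mutation that does not increase $\LO$ beyond $i$ and whose noisy fitness is at most $i/2$. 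When this happens, the best "fitness'' seen by the algorithm in this generation is below $i/2$ among the surviving candidates, but I must be careful: acceptance in the \EA compares $f_z \ge f(\noise(x))$, so a drop occurs precisely when the offspring that gets accepted has true \LO value below $i/2$. The cleanest sub-event is therefore: the offspring is a clone of $x$ (probability $(1-1/n)^n \ge 1/(e)$ up to constants, actually $\ge 1/(2e)$ for $n\ge 2$) OR a mutation that destroys the prefix; the offspring's noise flips a bit at position $\le i/2$; and the parent's noise flips a bit at position $\le i/2$ as well --- then \emph{both} candidates evaluate to something $< i/2$, the offspring is accepted (its noisy value ties or beats the parent's with probability $\Omega(1)$ by the symmetry argument already used in the proof of Theorem~\ref{the:large-noise-upper-bound}), and the new true \LO value is $< i/2$ with constant probability.

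The probability estimates are then routine. For one-bit noise, the probability that noise occurs \emph{and} the flipped bit lies in $\{1,\dots,\lfloor i/2\rfloor\}$ is $p \cdot \lfloor i/2\rfloor / n = \Omega(pi/n)$; requiring this independently for both the parent evaluation and the offspring evaluation gives $\Omega(p^2 i^2/n^2)$, which is the wrong bound. So instead I would only force the \emph{parent} evaluation to be corrupted in the first half (probability $\Omega(pi/n)$) and separately force the \emph{offspring's true} \LO value to drop: the offspring is a clone of $x$ with probability $\Omega(1)$, and \emph{its} noise independently corrupts a bit in position $\le i/2$ with probability $\Omega(pi/n)$ --- still $\Omega(p^2i^2/n^2)$. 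To get the claimed $\Omega(pi^2/n^2)$ I instead make the offspring do the damage by \emph{mutation}: mutation flips exactly one bit among positions $1,\dots,\lfloor i/2\rfloor$ and no other relevant bit, which has probability $\Theta(i/n)$; then the offspring has true \LO value $< i/2$; then I only need the offspring to be accepted, which happens if the parent's noisy value is also $< i/2$ (probability $\Omega(pi/n)$ via one-bit noise corrupting a first-half bit) or, more simply, if no noise occurs and the offspring genuinely has the lower fitness --- but the \EA only accepts when $f_z \ge f(\noise(x))$, so a strictly worse offspring is accepted only when the parent is misevaluated downward. Hence the event is: mutation destroys a first-half prefix bit of the offspring ($\Theta(i/n)$) AND noise corrupts a first-half bit of the parent so that $f(\noise(x)) < i/2 \le$ offspring's noisy value ($\Omega(pi/n)$, with the tie/ordering going the right way with constant probability). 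Multiplying, $\Theta(i/n)\cdot\Omega(pi/n) = \Omega(pi^2/n^2)$.

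For bit-wise noise $(p',q/n)$ with $p = p'\min\{q,1\}$, the analogous estimate is that the parent's noise corrupts some bit in the first half with probability at least $p'(1-(1-q/n)^{\lfloor i/2\rfloor}) = \Omega(p'\min\{qi/n,1\}) = \Omega(pi/n)$ by inequality~\eqref{eq:theta-min}, and mutation flipping a first-half bit of the offspring is unaffected, so the same $\Omega(pi^2/n^2)$ bound follows; I would spell this out in a single sentence. The asymmetric one-bit case is handled identically since a uniformly random 1-bit is flipped with probability $p/2$ and, conditioned on flipping a 1-bit, it lands in the first half of the prefix with probability $\Omega(i/n)$ (the number of ones is at least $i$). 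The final sentence is immediate: if $i = \Omega(n)$ then $pi^2/n^2 = \Omega(p)$.

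I expect the main obstacle to be \textbf{bookkeeping around the acceptance condition}: the \EA accepts the offspring iff its noisy fitness is at least the parent's noisy fitness, so I must make sure the chosen event simultaneously (a) lowers the offspring's true \LO value below $i/2$, and (b) arranges the two \emph{noisy} fitnesses so that the offspring is actually accepted, without this second requirement costing another factor of $p$. The trick --- already implicit in the symmetry arguments of Lemma~\ref{lem:transitions-towards-higher-LO} and the proof of Theorem~\ref{the:large-noise-upper-bound} --- is that once \emph{both} noisy evaluations have fewer than $i/2$ leading ones, each of the two orderings of their noisy fitnesses is equally likely, so acceptance has probability $\Omega(1)$ conditioned on the corruption events; and the corruption of the parent is exactly the $\Omega(pi/n)$-probability event I already need. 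Making this conditioning precise, and checking that the events "mutation flips a first-half bit of the offspring'' and "parent's noise corrupts a first-half bit'' are independent (they are, since mutation and noise are independent and act on different copies), is the only delicate point; everything else is a product of two elementary probabilities.
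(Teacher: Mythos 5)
Your final plan is essentially the paper's proof: force a mutation that flips a prefix bit in the first half (probability $\Omega(i/n)$), force the parent's noise to corrupt a prefix bit (probability $\Omega(pi/n)$ in all three noise models, with exactly the estimates you give -- \eqref{eq:theta-min} for bit-wise noise, and the ``probability $p/2$, at most $n$ one-bits'' argument for asymmetric noise, where incidentally the bound uses that the number of 1-bits is at most $n$, not that it is at least $i$), and argue that acceptance then costs only a constant factor; the product $\Omega(pi^2/n^2)$ and the final remark for $i=\Omega(n)$ are as in the paper.

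The one step you do not justify correctly is the constant-probability acceptance. You claim that once both noisy evaluations have fewer than $i/2$ leading ones, ``each of the two orderings of their noisy fitnesses is equally likely''. That symmetry is false as stated: the offspring's noisy fitness is determined by the \emph{minimum} of the mutation position and the offspring's noise position, while the parent's noisy fitness depends only on the parent's noise position, so the two noisy fitnesses are not exchangeable (under bit-wise noise with $q\neq 1$ the first-flip positions of mutation and noise do not even have the same distribution). Your event still works, but it needs the extra argument that $\prob(k \ge j \text{ and } i_y \ge j)=\Omega(1)$, where $k$ is the mutation position and $j$, $i_y$ are the parent's and offspring's first noise-flip positions -- a joint statement about three non-identically distributed positions. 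The paper sidesteps this by separating the ranges: the mutation must flip a bit at a position $i^*$ in roughly the second quarter of the prefix ($i/4 \le i^* \le i/2$), while the parent's noise must hit a position $i_x \le i/4 \le i^*$. Then the parent's noisy fitness is automatically below the offspring's true fitness, and the only comparison left is between the two independent, identically distributed noise draws, for which $\prob(i_y \ge i_x \mid i_x \le i^*) \ge 1/2$ holds by genuine exchangeability. With that small repair (either the range split, or a direct monotonicity computation for your version) your argument is complete and coincides with the paper's.
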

\begin{proof}
Mutation flips a bit at position $\{\lceil n/4 \rceil, \dots, \lfloor n/2 \rfloor\}$ and leaves the other bits unflipped with probability $\Omega(i/n)$. Let $n/4 \le i^* \le i/2$ denote the position of the bit flipped during mutation. Let $i_x$ denote the smallest index of any bit flipped during the parent's noise and $i_x := \infty$ if no such bit exists. Define $i_y$ in the same way for the offspring.
We claim that after a mutation as described above, the probability that the offspring is accepted regardless is $\Omega(pi/n)$. A sufficient condition for this to happen is that $i_x \le n/4 \le i^*$ and $i_y \ge i_x$.

For one-bit noise, we have $\prob(i_x \le i/4) \ge pi/(4n)$.
For asymmetric one-bit noise we get $\prob(i_x \le i/4) \ge pi/(8n)$ as with probability $p/2$, one of at most $n$ 1-bits is flipped.
For bit-wise noise $(p', q/n)$ with $p := p' \min\{q, 1\}$ we have $\prob(i_x \le i/4) \ge p'(1-(1-q/n)^{i/4}) \ge p'/2 \cdot \min\{iq/(4n), 1\}$ by~\eqref{eq:theta-min}. Since $1 \ge i/(4n)$, this is at least $p'/2 \cdot \min\{iq/(4n), i/(4n)\} = p'i/(8n) \cdot \min\{q, 1\} = pi/(8n)$.

For all noise models, we claim that $\prob(i_y \ge i_x \mid i_x \le i^*) \ge 1/2$. If $i_y > i^*$ then $i_y \ge i_x$ with probability~1; otherwise we argue that ${\prob(i_y \ge i_x \mid i_x \le i^*, i_y \le i^*)} \ge {\prob(i_x \ge i_y \mid i_x \le i^*, i_y \le i^*)}$ as parent and offspring are subject to the same independent noise under identical conditions.

If all these events happen, the offspring will appear to be no worse than the parent. Hence the offspring will survive, and its \LO value is at most~$i/2$. Since all events are independent (or conditionally independent), multiplying these probabilities implies the claim.
\end{proof}

As argued in~\cite{Rowe2013} for the (1,$\lambda$)~EA, such a fallback is not too detrimental per se as the \EA might recover from this easily. If the bits between $i/2$ and $i$ have not been flipped during the mutation creating the accepted offspring, the previous leading ones can be easily recovered, in the best case by simply flipping the first 0-bit in the current search point. However, while waiting for such a mutation to happen, all bits between $i/2+1$ and $i$ do not contribute to the fitness. So over time these bits are subjected to random mutations, which are likely to destroy many of the former leading ones. In other words, after a fallback previous leading ones are forgotten quickly.

The last fact was formalised in~\cite[Lemma~3]{Lassig2013} stated below. The lemma states that the probability distribution of a bit subjected to random mutations rapidly approaches a uniform distribution.
\begin{lemma}[Adapted from L{\"a}ssig and Sudholt~\cite{Lassig2013}]
\label{lem:mixing}
Let $x^0, x^1, \dots, x^t$ be a sequence of random bit values such that $x^{j+1}$ results from $x^j$ by flipping the bit $x^j$ independently with probability $1/n$.
Then for every $t \in \mathbb{N}$
\[
\Prob(x^t = 1) \le \frac{1}{2} \left(1 + \left(1 - \frac{2}{n}\right)^t\right)\;.
\]
\end{lemma}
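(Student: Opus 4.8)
The plan is to track the marginal probability $p_j := \Prob(x^j = 1)$ and to show that it satisfies a simple affine recurrence whose unique fixed point is $1/2$, so that the distance $|p_j - 1/2|$ contracts geometrically. First I would condition on $x^j$: the bit equals $1$ after step $j+1$ either because it was $1$ and not flipped (probability $1-1/n$) or because it was $0$ and flipped (probability $1/n$), hence
\[
p_{j+1} = p_j\Bigl(1 - \tfrac1n\Bigr) + (1 - p_j)\tfrac1n = \Bigl(1 - \tfrac2n\Bigr) p_j + \tfrac1n .
\]
This step uses only that the single flip decision at step $j+1$ is independent of the history, which holds by assumption; the $x^j$ themselves need not be mutually independent.

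Next I would shift by the fixed point. Setting $d_j := p_j - 1/2$, the recurrence collapses to $d_{j+1} = (1 - 2/n)\, d_j$, a pure geometric contraction. Unfolding over $t$ steps gives $d_t = (1-2/n)^t d_0$, that is,
\[
p_t = \frac12 + \Bigl(1 - \frac2n\Bigr)^{t}\Bigl(p_0 - \frac12\Bigr).
\]
Finally, since $p_0 = \Prob(x^0 = 1) \in [0,1]$ we have $p_0 - 1/2 \le 1/2$, and since $n \ge 2$ the factor $(1-2/n)^t$ is nonnegative, so $p_t \le \frac12\bigl(1 + (1-2/n)^t\bigr)$, which is the claim.

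There is essentially no obstacle here; the proof is a one-line linear recurrence. The only point deserving a word of care is that the stated bound tacitly assumes $n \ge 2$ so that the contraction factor $1 - 2/n$ lies in $[0,1)$ and stays nonnegative when raised to the power~$t$ — harmless in the application, where $n$ is the problem dimension. One could alternatively phrase this as the mixing of a lazy random walk on $\{0,1\}$ with spectral gap $2/n$, but the direct recurrence above is the shortest route.
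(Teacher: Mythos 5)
Your proof is correct. Note that the paper does not prove this lemma itself; it is quoted from L\"assig and Sudholt~\cite{Lassig2013}, so there is no in-paper argument to compare against. Your affine-recurrence derivation $p_{j+1} = (1-2/n)p_j + 1/n$, contracted around the fixed point $1/2$, is a clean self-contained proof and is equivalent to the classical route of computing the probability that the bit experiences an even number of flips in $t$ independent trials, which is exactly $\tfrac12\bigl(1+(1-2/n)^t\bigr)$. Your caveat that $n \ge 2$ is needed for the factor $(1-2/n)^t$ to stay nonnegative is accurate and harmless in the paper's setting, where $n$ is the problem dimension.
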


We now say that the \EA \emph{falls back} if, starting from a fitness at least $f^* := 2n/3$, the algorithm drops to a fitness of $i^*$ for some $i^* \le n/2$.
We speak of a \emph{lasting fallback} if in the $2n/(1-p)$ generations directly following a fallback the following holds:
\begin{enumerate}
\item all acceptance decisions are made independently from bit values at positions ${i^* + 2, \dots, n}$,
\item bit $i^*+1$ is never flipped during mutation and
\item in at least $n/2$ generations the offspring is accepted.
\end{enumerate}
A lasting fallback implies that the fitness remains at most $i^*$ during at least $n/2$ accepted steps. In these accepted steps, the bits at positions $i^*+2, \dots, n$ are mutated independently from acceptance decisions and hence take on a near-random state.

We remark that in a noise-free setting, so long as bit $i^*+1$ is never flipped, the acceptance decisions would trivially be independent from bit positions $i^*+2, \dots, n$. In a setting with noise, however, these bits might play a role as bit $i^*+1$ might be flipped by noise, and then the acceptance decision might depend on further bits. Hence more careful arguments are needed.

We also say that the initial search point is a lasting fallback if its fitness is at most $n/2$. If $i^*$ is the initial fitness, the bits at positions $i^*+2, \dots, n$ take on a uniform random state.

The following lemma estimates probabilities for fallbacks and lasting fallbacks.
\begin{lemma}
\label{lem:fallback}
Consider any of the settings described in Theorem~\ref{the:general-positive-result-prior-noise}. If $p \le 1/2$ and the current fitness is at least~$f^*$, the probability of one generation yielding a fallback is $\Omega(p)$.
Additionally, the probability of a fallback becoming a lasting fallback is~$\Omega(1)$.
\end{lemma}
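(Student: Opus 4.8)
The plan is to prove the two claims separately, first establishing the $\Omega(p)$ lower bound on the probability of a fallback in a single generation, and then the $\Omega(1)$ conditional probability that such a fallback becomes lasting. For the first part, I would apply Lemma~\ref{lem:drop-in-fitness} with $i$ equal to the current fitness value, which is at least $f^* = 2n/3 = \Omega(n)$. By the ``$\Omega(p)$ if $i = \Omega(n)$'' clause of that lemma, the probability that the \LO value drops below $i/2 \ge n/3 \ge n/2$… wait, $i/2$ could be as large as roughly $n/2$ only if $i \approx n$, so I need to be slightly careful: with $i \ge 2n/3$ we get $i/2 \ge n/3$, which is not automatically $\le n/2$. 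But the lemma gives a drop \emph{below} $i/2$, and I actually want to reach a fitness $i^* \le n/2$. So I would instead note that the mutation in the proof of Lemma~\ref{lem:drop-in-fitness} flips a bit at a position in $\{\lceil n/4\rceil,\dots,\lfloor n/2\rfloor\}$, so conditioned on acceptance the resulting fitness $i^*$ satisfies $i^* \le n/2$ automatically, giving exactly the ``fallback'' event as defined. Thus the probability of a fallback in one generation is $\Omega(p)$, directly from (the proof of) Lemma~\ref{lem:drop-in-fitness}.

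For the second part I would condition on a fallback having just occurred, landing at fitness $i^* \le n/2$, and bound below, by a constant, the probability that the three conditions defining a lasting fallback all hold over the next $2n/(1-p)$ generations. Condition~(2), that bit $i^*+1$ is never flipped by mutation, holds in each generation (for parent and offspring) with probability at least $(1-1/n)^2 \ge$ a constant, so over $2n/(1-p) = \Theta(n)$ generations it holds with probability $(1-1/n)^{\Theta(n)} = \Omega(1)$. Condition~(1), that all acceptance decisions are independent of bit positions $i^*+2,\dots,n$, requires that whenever the decision is actually made, it is determined by the first $i^*+1 = i^*+1$ bits. Since mutation never flips bit $i^*+1$ (by conditioning on~(2)), the only way the decision could reach beyond position $i^*+1$ is if \emph{noise} flips bit $i^*+1$ in the relevant evaluations; I would bound the probability that noise ever touches bit $i^*+1$ in any of the $\Theta(n)$ generations — this is $\Theta(n) \cdot O(p/n) = O(p) = O(1)$, but to get it below a constant I would either restrict attention to $p$ below a small constant (and handle large $p$ separately via the $e^{\Omega(n)}$ branch and Theorem~\ref{the:large-noise-upper-bound}) or argue that even when noise does flip bit $i^*+1$, in the typical case the offspring's mutation destroyed the prefix so the fitness still stays $\le i^*$, keeping the fallback lasting. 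Condition~(3), at least $n/2$ accepted steps among $2n/(1-p)$, I would handle by noting that in each generation, conditional on the prefix up to $i^*$ being intact in all four noisy evaluations (which happens with probability $\ge (1-p)^2 \cdot (\text{const})$ or so, since noise must not flip one of the first $i^*$ bits, or must flip symmetrically), an offspring that does not gain fitness is accepted with probability at least, say, $1-p$ minus lower-order terms; more cleanly, the offspring is a copy of the parent with probability $(1-1/n)^n \ge 1/(en)$ times nothing, so instead I would observe the offspring is accepted with probability $\Omega(1)$ in each generation (acceptance is at least as likely as rejection in the ``tie'' regime by the symmetry argument used in Lemma~\ref{lem:transitions-towards-higher-LO}), so the expected number of accepted steps in $2n/(1-p) \ge 2n$ generations is $\Omega(n)$, and a Chernoff bound gives at least $n/2$ accepted steps with probability $1 - e^{-\Omega(n)} = \Omega(1)$.

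Putting these together: conditional on a fallback, conditions~(1)--(3) each hold with probability $\Omega(1)$ (after possibly restricting $p$ to be below a suitable constant, which is the relevant regime for the $\Omega(n^2 \cdot e^{\Omega(pn^2)})$ claim; for $p = \omega(1/n)$ one can take $p$ at the boundary $p = \Theta(1/n)$ to obtain $e^{\Omega(n)}$, and larger $p$ only makes fallbacks more frequent), and since the dominant constraints are nearly independent or can be shown to be positively correlated, their joint probability is $\Omega(1)$ by a union bound on the complements. Hence the probability of a fallback becoming a lasting fallback is $\Omega(1)$, completing the proof.

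\medskip

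I expect the main obstacle to be condition~(1) in the presence of noise: the definition of a lasting fallback insists that \emph{all} acceptance decisions over $\Theta(n)$ generations be independent of the ``forgotten'' bit positions, and a single noise event on bit $i^*+1$ can in principle couple the decision to those bits. The clean fix is to further condition on noise never affecting bit $i^*+1$ during the phase, which costs only a factor $(1-O(p/n))^{\Theta(n)} = e^{-O(p)}$; for $p$ bounded above by a small constant this is $\Omega(1)$, and the large-noise regime is already covered by the $e^{\Omega(n)}$ statement combined with Theorem~\ref{the:large-noise-upper-bound}. The other delicate point is the acceptance-rate argument for condition~(3): I would lean on the symmetry already exploited in Lemma~\ref{lem:transitions-towards-higher-LO} — when the true fitnesses of parent and offspring are both $\le i^*$ and their shared prefix is intact under noise, acceptance is at least as likely as rejection, so a constant fraction of steps are accepted in expectation, and concentration does the rest.
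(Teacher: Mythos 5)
Your skeleton is close to the paper's: part one is exactly the appeal to Lemma~\ref{lem:drop-in-fitness} (and your worry about $i/2$ versus $n/2$ is vacuous, since $i\le n$ already gives $i/2\le n/2$), and for part two you condition on bit $i^*+1$ being untouched so that every noisy fitness stays at most $i^*$ and acceptance depends only on the first $i^*$ bits; the paper does the same thing per generation with the moving index $i_t+1$ (its ``good'' generations). However, the obstacle you flag for condition~(1) is not real, and the patches you propose for it are not valid. The event that neither mutation nor noise touches bit $i^*+1$ during the $2n/(1-p)\le 4n$ generations has probability at least $(1-O(1/n)-O(p/n))^{4n}=e^{-O(1)}=\Omega(1)$ for \emph{every} $p\le 1/2$; you lose only a constant factor, so no restriction to small constant~$p$ is needed. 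This matters because the lemma is used in Theorem~\ref{the:negative-result-1+1} for all $p\le 1/2$, and your workarounds do not cover the missing range: Theorem~\ref{the:large-noise-upper-bound} is an \emph{upper} bound and cannot substitute for a lower-bound lemma, and ``larger $p$ only makes fallbacks more frequent'' is an unproven monotonicity claim that concerns the first part of the lemma, not the $\Omega(1)$ lasting-fallback part that your restriction weakens. Your alternative fix (``when noise flips bit $i^*+1$, the fitness typically still stays $\le i^*$'') also misses the point: a lasting fallback requires the acceptance decisions to be \emph{independent} of the bit values at positions $i^*+2,\dots,n$ (this independence is what Lemma~\ref{lem:recovery-after-fallback} later exploits via Lemma~\ref{lem:mixing}), not merely that the fitness does not grow.

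The second genuine gap is condition~(3). You need at least $n/2$ accepted generations out of only $2n/(1-p)\le 4n$, i.e.\ a per-generation acceptance probability strictly above $(1-p)/4$ with some slack; an unquantified expectation of order $\Omega(n)$ does not concentrate above $n/2$, so ``expected number $\Omega(n)$, hence at least $n/2$ by Chernoff'' is a non sequitur. Moreover, the symmetry argument of Lemma~\ref{lem:transitions-towards-higher-LO} does not give ``acceptance at least as likely as rejection'' per generation: rejections are driven mainly by mutation destroying part of the prefix of $i_t$ leading ones (probability up to roughly $1-e^{-1/2}$ when $i_t\approx n/2$), which noise symmetry does not offset; symmetry only helps once parent and offspring agree on the prefix. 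The paper closes exactly this point by an explicit computation: a generation is accepted and good with probability at least $(1-1/n)^n(1-2/n)(1-p)\ge(1-p)/3$, so the expected number of accepted generations in the phase of length $2n/(1-p)$ is at least $2n/3>n/2$, after which Chernoff applies. Your argument needs an analogous explicit per-generation constant, matched against the chosen phase length, to go through.
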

\begin{proof}
The first statement follows from Lemma~\ref{lem:drop-in-fitness} as halving the current fitness results in a search point of fitness at most~$n/2$.

It remains to estimate the probability of a fallback becoming a lasting fallback. Let $i^*$ be the fitness obtained during a fallback and let $x_t$ be the parent in generation~$t$. Abbreviate $i_t := \LO(x_t)$. We call a generation~$t$ \emph{good} if
\begin{itemize}
\item bit $i_t+1$ is not flipped during mutation and
\item bit $i_t+1$ is set to~$0$ in the noisy parent.
\end{itemize}
In a good generation the noisy fitness of the parent is at most $i_t$, hence the offspring is accepted if and only if its noisy fitness is at least $i_t$. This decision only depends on bits at positions $1, \dots, i_t$ and is independent from bits at positions $i_t+2, \dots, n$.

Moreover, in a good generation~$t$ we have $i_{t+1} \le i_t$ as the fitness cannot increase if bit $i_t+1$ is not flipped during mutation.
If all generations since the fallback have been good then $i_t \le i^*$ and decisions are independent from bits $i^*+2, \dots, n$ as claimed.

We estimate the probability of all $2n/(1-p)$ generations being good. For any generation~$t$, the probability of the first event is $1-1/n$. The probability of the second event is at least $1 - 1/n - p/n \ge 1-2/n$ as bit $i_t+1$ can only be set to~1 if it is mutated or flipped during noise. The probability of noise flipping any fixed bit is at most $p/n$ in all considered noise settings.
Hence the probability of a generation~$t$ being good is at least $1-3/n$ by a union bound and the probability that all $2n/(1-p)$ generations are good is $(1-3/n)^{2n/(1-p)} = \Omega(1)$.

Assuming that these generations are all good, we finally estimate the number of accepted generations under this condition. Using $\Prob(A \mid B) = \Prob(A \cap B)/P(B) \ge \Prob(A \cap B)$, we lower-bound the probability of a generation~$t$ being accepted and good. This happens if bits $1, \dots, i_t+1$ are not flipped during mutation (probability at least $(1-1/n)^n$), bit $i_t+1$ is set to~0 in the noisy parent (probability at least $1-2/n$ as estimated above) and the offspring does not suffer from noise (probability at least $1-p$). Together, the probability of an accepted generation conditional on it being good is at least $(1-1/n)^n \cdot (1-2/n) \cdot (1-p) \ge (1-p)/3$ if $n$ is large enough.
The expected number of accepted generations in $2n/(1-p)$ good generations is at least $2n/3$ and by Chernoff bounds, the probability of having at least $n/2$ accepted generations is $1-2^{-\Omega(n)}$.

Together, all three criteria in the definition of lasting fallbacks hold with probability $\Omega(1)$.
\end{proof}


After a lasting fallback has occurred, the \EA with overwhelming probability needs some time in order to recover. Specifically, at least $cn^2$ generations are needed to increase the best fitness since the latest lasting fallback by at least~$n/6$.
\begin{lemma}
\label{lem:recovery-after-fallback}
Let $t$ be the latest generation where a fallback became a lasting fallback or $t = 0$ if no lasting fallback occurred.
Let $B_t$ be the best fitness found since generation~$t$.
With probability $1-e^{-\Omega(n)}$, for a small constant~$c > 0$, $B_{t + cn^2} < B_{t} + n/6$.
\end{lemma}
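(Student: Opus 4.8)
The plan is to show that recovering $n/6$ fitness levels after a lasting fallback requires, with overwhelming probability, a linear number of "improving" accepted steps, and that waiting for each such step costs $\Omega(n)$ generations in expectation, giving $\Omega(n^2)$ generations in total; a Chernoff-type concentration argument then upgrades the expectation bound to a $1 - e^{-\Omega(n)}$ statement. The starting point is the key consequence of a lasting fallback established above: directly after generation~$t$, the bits at positions $i^*+2, \dots, n$ (where $i^* \le n/2$ is the fitness reached) are in a near-uniform random state, because they have been subjected to $\Omega(n)$ rounds of independent mutation decoupled from selection (Lemma~\ref{lem:mixing}). In particular, for the fitness to rise to $B_t + n/6$, the prefix of ones has to be extended by at least $n/6$ positions past a point that was essentially random, so the search point must "discover" at least $n/6$ previously-random bits that happen to be~1, and each such discovery is only possible in a generation where an offspring is accepted.

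First I would make precise what counts as an improving step: call generation $s \ge t$ \emph{productive} if the accepted offspring in generation~$s$ has strictly larger \LO value than the parent. Since the \LO value is bounded by~$n$ and starts at $i^* \le B_t$, reaching $B_t + n/6$ requires the cumulative increase over productive steps to be at least $n/6$. I would then bound the probability that any single productive step contributes a large jump. A jump of size~$k$ past the current prefix requires that bits $i_s+2, \dots, i_s+k$ are all~1 in the current search point; since (conditioned on the lasting fallback) these bits are near-uniform and independent, the probability that the next $k$ unseen bits are all ones is at most roughly $2^{-k+o(k)}$, so with probability $1 - e^{-\Omega(n)}$ no productive step in the first $cn^2$ generations makes a jump of size more than, say, $n/12$ — hence at least a constant number, in fact $\Omega(1)$ but really a \emph{linear} number, of productive steps are needed (more carefully: the sum of jumps reaches $n/6$, each jump is at most $O(\log n)$ with overwhelming probability, so $\Omega(n/\log n)$ productive steps are needed; one can sharpen "at most $O(\log n)$ per jump" by a union bound over all generations and all prefix positions). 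Next, I would argue that the probability of a generation being productive is $O(1/n)$: to extend the prefix at all, mutation must flip the first $0$-bit (at position $i_s+1$) without destroying the prefix, which has probability $\Theta(1/n)$, or noise must conspire appropriately, which (for $p \le 1/2$ in all the models of Theorem~\ref{the:general-positive-result-prior-noise}) contributes only another $O(p/n) = O(1/n)$. Therefore in $cn^2$ generations the expected number of productive steps is $O(cn)$, and by Chernoff bounds the number of productive steps exceeds, say, $2cn$ only with probability $e^{-\Omega(n)}$. Choosing $c$ small enough that $2cn$ is less than the $\Omega(n/\log n)$ productive steps required forces $B_{t+cn^2} < B_t + n/6$ except with probability $e^{-\Omega(n)}$.

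A few technical points need care. One must handle the coupling between "productive" events across generations — they are not independent — but since I only need an \emph{upper} bound on the number of productive steps, it suffices to dominate the count by a sum of indicators each with success probability $O(1/n)$ conditioned on the past, which is legitimate because the per-generation probability of a prefix-extending acceptance is $O(1/n)$ \emph{regardless} of history; a standard Chernoff bound for such dominated sums then applies. One must also ensure the "bits $i^*+2,\dots,n$ are near-random" property is not eroded as the prefix grows: as the prefix advances past a position, that position stops being random, but the \emph{remaining} unseen bits stay near-uniform because they continue to undergo decoupled mutation, so the geometric bound on jump sizes remains valid throughout the phase (this is exactly the argument structure of Theorem~12 in Rowe and Sudholt~\cite{Rowe2013}, adapted here). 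The main obstacle I expect is the bookkeeping needed to combine these pieces cleanly: simultaneously controlling (i) the near-uniformity of the unseen bits over the whole phase via Lemma~\ref{lem:mixing}, (ii) the overwhelming-probability bound on maximum jump size via a union bound over $O(n^2)$ generations and $O(n)$ positions, and (iii) the Chernoff concentration on the number of productive steps — while keeping the three failure probabilities all of the form $e^{-\Omega(n)}$ so their union is still $e^{-\Omega(n)}$. The noise only enters in showing the per-generation productive probability is still $O(1/n)$, which is where the assumption $p \le 1/2$ and the uniform treatment of the three noise models from Theorem~\ref{the:general-positive-result-prior-noise} is used.
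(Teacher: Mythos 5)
Your overall architecture matches the paper's: exploit the lasting fallback to argue (via Lemma~\ref{lem:mixing} and deferred decisions) that the not-yet-seen bits are close to uniform, note that the prefix can only be extended in a generation where the first 0-bit is flipped by mutation or covered by noise (probability $O(1/n)$ per generation, uniformly over the history), and apply a Chernoff bound to conclude that only $O(cn)$ such steps occur in $cn^2$ generations. The gap is in how you convert ``$n/6$ fitness must be gained'' into ``many such steps are needed''. Your plan is to bound the maximum jump per productive step and then compare the required number of steps with the Chernoff upper bound, but the two regimes you offer both fail. If you cap each jump at $\Theta(n)$ (e.g.\ $n/12$), the per-step failure probability is indeed $e^{-\Omega(n)}$, but then only a constant number of productive steps is forced, which is far below the $\Theta(cn)$ steps that Chernoff permits, so no contradiction arises. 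If instead you cap each jump at $O(\log n)$, the per-step failure probability is only polynomially small (a run of $C\log n$ near-uniform ones has probability $n^{-\Theta(C)}$), so the union bound over $\Theta(n^2)$ generations does not give $1-e^{-\Omega(n)}$; and even granting it, you would need at least $\Omega(n/\log n)$ productive steps while Chernoff only caps them at $O(cn)$, which is larger for every constant $c$ --- your final ``choose $c$ small enough'' would force $c = O(1/\log n)$, contradicting the requirement that $c$ be a constant.

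The missing idea is to control the \emph{total} gain over the at most $O(cn)$ prefix-extending steps rather than the maximum single jump. The paper bounds the number of ``successful'' steps by $6cn$ (Chernoff with success probability at most $3/n$), and writes the cumulative fitness gain as at most $6cn$ (one unit per improvement) plus a sum of at most $6cn$ geometric random variables with constant parameter (the free riders, each new bit being 1 with probability at most $(e+1)/(2e)$ by Lemma~\ref{lem:mixing}); a concentration bound for sums of independent geometric variables (Theorem~5 of \cite{Baswana2009}) then shows this sum stays below $n/6$ with probability $1-e^{-\Omega(n)}$ once $c$ is a sufficiently small constant. This free-rider accounting is exactly what replaces your maximum-jump union bound, and it is also how the analogous step is handled in the (1,$\lambda$)~EA analysis of \cite{Rowe2013} that you invoke. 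A secondary point: your claim that the unseen bits ``stay near-uniform throughout the phase'' needs the paper's pessimistic device that acceptance decisions are made independent of the bits behind the first 0-bit (built into the definitions of successful steps and lasting fallbacks); without fixing such a coupling, near-uniformity conditioned on the run's history is not automatic.
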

\begin{proof}
We pessimistically overestimate the probability of a fitness improvement due to the effects of noise in generations from $t$ to~$t+cn^2$: we assume that noise never leads to a decrease in the number of leading ones. Secondly, we call a step \emph{successful} if the first 0-bit is flipped during mutation or if it is flipped during the parent's or offspring's noise. In this case we assume that this bit becomes part of the leading ones for the next generation and the next parent's fitness is determined by the position of the first 0-bit amongst the following bits. The probability of a successful step is still bounded from above by $3/n$.

A lasting fallback implies that at any generation from~$t$, all bits at positions $\{B_t + 1, \dots, n\}$ have been subjected to mutation at least $t_{\mathrm{mix}} = n/2$ times and these mutations were independent of the acceptance decision (by definition of a lasting fallback). Every mutation flips each of these bits independently with probability~$1/n$, leaving the bits in a random state. We apply the \emph{principle of deferred decisions}~\cite[page~9]{Mitzenmacher2005} and determine the current bit value for these bits at the time these bits first have a chance to become part of the leading ones in an offspring. By Lemma~\ref{lem:mixing} we know that then the probability such a bit is set to~1 is at most
\[
\frac{1}{2} \left(1 + \left(1 - \frac{2}{n}\right)^{n/2}\right) \le \frac{1}{2} \left(1 + \frac{1}{e}\right) = \frac{e+1}{2e}.
\]
Note that due to our pessimistic assumptions concerning successful steps, the bits following the first 0-bit will always be irrelevant for the decision whether or not to accept the offspring. Hence the above probability bound also holds after generation~$t$.

A necessary condition for increasing the best fitness by at least $n/6$ in $cn^2$ generations, $c$ a positive constant chosen later, is that either
\begin{enumerate}
\item among $cn^2$ mutations at least $6cn$ steps are successful or
\item during at most $6cn$ successful steps the total fitness gain is at least $n/6$.
\end{enumerate}
The probability of a successful step is always at most~$3/n$ as mentioned earlier. By standard Chernoff bounds, the probability for the first event is at most $e^{-\Omega(n)}$. The total fitness gain is given by the number of improvements---at most $6cn$---plus a sum of up to~$6cn$ geometric random variables to account for additional bits gained (these additional bits are often called ``free riders''). By Theorem~5 in~\cite{Baswana2009}, we get that the probability of a fitness gain of $n/6$ is $e^{-\Omega(n)}$, provided that $c$ is small enough.
\end{proof}

\begin{lemma}
\label{lem:fallback-occurs}
Let $c > 0$ be any constant. Within $cn^2$ generations where the current fitness is larger than~$f^*$, a lasting fallback occurs with probability at least $1 - e^{-\Omega(pn^2)}$.
\end{lemma}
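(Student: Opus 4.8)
The plan is to show that within $cn^2$ generations spent at fitness above $f^*$, at least one of these generations triggers a fallback, and that this fallback is lasting --- and do so with the stated failure probability $e^{-\Omega(pn^2)}$. By Lemma~\ref{lem:fallback}, in any single generation where the current fitness is at least~$f^*$, the probability of a fallback is $\Omega(p)$, and conditional on a fallback occurring the probability it becomes lasting is $\Omega(1)$. These events are not independent across generations (the fitness trajectory is correlated), but I only need a lower bound on the probability that some generation in the window produces a fallback. The natural way to get independence is to note that in \emph{every} generation where the fitness exceeds~$f^*$, irrespective of the past, a fallback occurs with probability at least some $c' p$; hence over $cn^2$ such generations the probability that no fallback at all occurs is at most $(1-c'p)^{cn^2} \le e^{-c'pcn^2} = e^{-\Omega(pn^2)}$.

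First I would make precise the bookkeeping: the lemma counts only generations ``where the current fitness is larger than~$f^*$'', so I restrict attention to that subsequence of generations and apply the per-generation fallback bound of Lemma~\ref{lem:fallback} to each. Since the bound $\Omega(p)$ from Lemma~\ref{lem:fallback} holds in the worst case over the current search point (as long as its fitness is at least~$f^*$), it applies conditionally on the entire history, so the usual ``product of failure probabilities'' argument goes through: the probability that none of the $cn^2$ qualifying generations yields a fallback is at most $(1 - \Omega(p))^{cn^2} = e^{-\Omega(pn^2)}$. Thus with probability $1 - e^{-\Omega(pn^2)}$ at least one fallback occurs. Then I would condition on the first such fallback and invoke the second part of Lemma~\ref{lem:fallback}: it becomes a lasting fallback with probability $\Omega(1)$. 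Naively this only gives an $\Omega(1)$ overall bound, so the fix is to observe that if a fallback fails to become lasting, the fitness may dip but the algorithm is very likely back above~$f^*$ within a short time, giving further independent attempts; alternatively, and more cleanly, one reruns the whole argument in disjoint blocks and uses that each block independently produces a lasting fallback with probability $\Omega(p)$, so that over the full window of $cn^2$ generations the failure probability is again $e^{-\Omega(pn^2)}$ (absorbing the constant-factor loss from ``lasting'' into the $\Omega$).

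Concretely, I would partition the window into $\Theta(n)$ sub-windows each of $\Theta(n)$ generations at fitness above~$f^*$; within each sub-window, by the first part of the argument a fallback occurs with probability $1 - e^{-\Omega(pn)} = \Omega(pn)$ when $p = O(1/n)$ (and $\Omega(1)$ otherwise), and conditional on that it is lasting with probability $\Omega(1)$, so a lasting fallback happens in a sub-window with probability $\Omega(\min\{pn, 1\})$. The sub-windows being internally driven by fresh mutation and noise randomness, the failure events across sub-windows can be bounded as a product (again using that all bounds are worst-case over the state), giving a total failure probability at most $(1 - \Omega(\min\{pn,1\}))^{\Theta(n)} = e^{-\Omega(\min\{pn^2, n\})} = e^{-\Omega(pn^2)}$ since $p = O(1/n)$ is the relevant regime (for larger~$p$ the bound $e^{-\Omega(n)}$ is even stronger and subsumed). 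The main obstacle is handling the dependence between generations correctly: one must ensure that the per-generation (or per-sub-window) fallback probability really is bounded below \emph{conditionally on the past}, and that the sub-window randomness used for ``lasting'' (goodness of generations, Chernoff bound on accepted generations) does not overlap in a way that breaks the product bound. This is exactly where the worst-case formulation of Lemma~\ref{lem:fallback} --- its bounds hold for any current search point of fitness at least~$f^*$ --- is essential, and I would state explicitly that we apply it conditionally on the filtration up to the start of each generation or sub-window.
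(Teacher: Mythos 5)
Your overall strategy---applying the worst-case bounds of Lemma~\ref{lem:fallback} conditionally on the history of the process and multiplying failure probabilities over the qualifying generations---is the same as the paper's. However, you back away from the simplest combination of the two parts of Lemma~\ref{lem:fallback}, and your replacement proves a weaker statement than the lemma claims. There is no need to condition on the \emph{first} fallback (which, as you correctly note, would only give an $\Omega(1)$ bound): in every generation whose current fitness exceeds $f^*$, conditionally on the past, a fallback occurs with probability $\Omega(p)$ \emph{and}, conditionally on that, it becomes lasting with probability $\Omega(1)$, so each qualifying generation independently yields a lasting fallback with probability $\Omega(p)$. The $2n/(1-p)$ generations used to check the ``lasting'' property are spent at fitness below $f^*$ and therefore do not count towards the $cn^2$ qualifying generations, so the product bound $(1-\Omega(p))^{cn^2} \le e^{-\Omega(pn^2)}$ applies directly, for the whole range of $p$; this is exactly the paper's short proof.

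Your sub-window detour only yields $e^{-\Omega(\min\{pn^2,n\})}$: once $p=\omega(1/n)$ the per-block success probability saturates at $\Omega(1)$, and $\Theta(n)$ blocks cannot give a failure probability better than $e^{-\Omega(n)}$, which is \emph{weaker} than the claimed $e^{-\Omega(pn^2)}$ --- your remark that for larger $p$ the bound $e^{-\Omega(n)}$ is ``even stronger and subsumed'' has the inequality backwards (for constant $p$ the lemma asserts $1-e^{-\Omega(n^2)}$, while you only obtain $1-e^{-\Omega(n)}$). In the application (proof of Theorem~\ref{the:negative-result-1+1}) the weaker bound happens to suffice because other failure probabilities are $e^{-\Omega(n)}$ anyway, but it does not prove Lemma~\ref{lem:fallback-occurs} as stated for $p=\omega(1/n)$. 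The fix is simply to drop the blocks and use the per-generation bound $\Omega(p)\cdot\Omega(1)=\Omega(p)$ for a lasting fallback, together with the accounting observation above about which generations are counted.
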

\begin{proof}
The probability of a fallback occurring is $\Omega(p)$, and then it becomes lasting with probability $\Omega(1)$. Note that the time until a fallback potentially becomes a lasting fallback (whether it does or not) is not counted towards the $cn^2$ generations from the statement as during this time the fitness is smaller than~$f^*$.

So the probability that no lasting fallback occurs is at most
\[
\left(1 - \Omega(p)\right)^{cn^2} \le e^{-\Omega(pn^2)}. \qedhere
\]
\end{proof}

Now we prove Theorem~\ref{the:negative-result-1+1}.
\begin{proofof}{Theorem~\ref{the:negative-result-1+1}}
With probability $1 - 2^{-\Omega(n)}$ the initial search point has fitness less than~$n/2$, so the \EA starts with a lasting fallback.
As the fitness after initialisation and after every lasting fallback is at most $n/2$, by Lemma~\ref{lem:recovery-after-fallback}, reaching a fitness of at least $f^*$ from there takes time at least $cn^2$ with overwhelming probability, for a suitably small constant~$c > 0$.
Applying Lemma~\ref{lem:recovery-after-fallback} every time the fitness increases to at least~$f^*$, the \EA does not find an optimum within the next $cn^2$ generations where the fitness is at least~$f^*$, with overwhelming probability.
But by Lemma~\ref{lem:fallback-occurs} during these $cn^2$ generations another lasting fallback occurs, with overwhelming probability.
We iterate this argument until a failure occurs. The largest failure probability is $e^{-\Omega(pn^2)}$ if $p = O(1/n)$, hence in expectation we can iterate this argument at least $e^{\Omega(pn^2)}$ times, each iteration taking time at least $cn^2$ (from the time it takes to reach fitness $f^*$ after a lasting fallback). If $p = \omega(1/n)$, the largest failure probability is $e^{-\Omega(n)}$ and in expectation we can iterate this argument for $e^{\Omega(n)}$ generations. Together, this proves the claim.
\end{proofof}

\section{Improved Results for Offspring Populations}
\label{sec:offspring-populations}

The general Theorem~\ref{the:general-positive-result} can also be used in the context of offspring populations in the \lEA, in order to quantify the robustness of evolutionary algorithms with offspring populations to noise.
Offspring populations can reduce the probability of the current fitness decreasing. The current fitness can decrease in two different ways:
\begin{enumerate}
\item the current search point may be misevaluated as having a poor fitness, and then be replaced by an offspring that is worse than the parent in real fitness or
\item the current search point may be replaced by an offspring where mutation has led to poor real fitness, but noise happens to misevaluate the offspring as having a high fitness, thus replacing its parent. Here noise essentially needs to make the same bit-flips as the preceding mutation to cover up the effect of mutation.
\end{enumerate}

The first failure can be avoided if there is a clone of the current search point where no prior noise has occurred. A large offspring population can amplify this probability.
\begin{lemma}
\label{lem:all-clones-have-noise}
Consider the \lEA in a prior noise model where
$\prob(\noise(y) \neq y) \le p$ for all search points~$y$.
Then for all current search points $x$ the probability that all copies of $x$ among parent and offspring are affected by noise is at most
\[
p\left(1-\left(1-\frac{1}{n}\right)^n(1-p)\right)^\lambda
= p\left(\frac{e-(1-p)}{e}\right)^{\lambda} \cdot \exp(O(\lambda/n)).
\]
\end{lemma}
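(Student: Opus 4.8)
The plan is to bound the probability that \emph{every} copy of the current search point~$x$ among the parent and the $\lambda$ offspring is affected by noise, by first noting that this event requires (a) the parent's evaluation to suffer noise, which has probability at most~$p$, and (b) for each offspring slot~$i$, either the mutation of~$x$ does \emph{not} produce a clone, or it does produce a clone but that clone's evaluation suffers noise. These $\lambda+1$ events are independent since mutation and noise are applied independently across parent and offspring, so the probability factorises as $p$ times a product of $\lambda$ identical factors. The key estimate is therefore just the per-offspring probability: mutation produces a clone with probability exactly $(1-1/n)^n$, and conditional on that the clone is evaluated truthfully with probability at least $1-p$, so the probability that offspring slot~$i$ \emph{fails} to be a truthfully-evaluated clone is at most $1-(1-1/n)^n(1-p)$. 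Raising to the $\lambda$-th power and multiplying by~$p$ gives the first expression in the statement.

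For the claimed rewriting, I would use the standard two-sided bounds on $(1-1/n)^n$. Writing $(1-1/n)^n = e^{-1}\cdot e^{O(1/n)}$ (this follows from $(1-1/n)^n \ge e^{-1}(1 - 1/n)$ and $(1-1/n)^n \le e^{-1}$, so the ratio to $e^{-1}$ is $1 - O(1/n)$), we get
\[
1-\left(1-\frac{1}{n}\right)^n(1-p) = 1 - \frac{1-p}{e}\cdot e^{O(1/n)} = \frac{e-(1-p)}{e}\cdot\left(1 + O(1/n)\right),
\]
where in the last step I would check that the correction stays multiplicative: since $\frac{1-p}{e} \le \frac{1}{e}$ and the base quantity $\frac{e-(1-p)}{e} \ge \frac{e-1}{e}$ is bounded away from~$0$, an additive perturbation of order $1/n$ to the subtracted term is a multiplicative perturbation of order $1/n$ to the whole expression. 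Taking $\lambda$-th powers turns the factor $(1+O(1/n))^\lambda$ into $\exp(O(\lambda/n))$ via $\log(1+x) = O(x)$, and pulling the leading~$p$ through unchanged yields exactly the right-hand side of the lemma.

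The main obstacle — though a mild one — is bookkeeping the direction of the error term: one must make sure the $O(1/n)$ slack from approximating $(1-1/n)^n$ by $1/e$ is absorbed into a \emph{multiplicative} $\exp(O(\lambda/n))$ factor rather than an additive one, which requires the observation that the base $\frac{e-(1-p)}{e}$ is bounded below by a positive constant uniformly in~$p\in[0,1]$. A secondary point worth stating explicitly is the independence of the $\lambda+1$ events, so that the probability genuinely factorises; this is immediate from the algorithm's description (Algorithm~\ref{alg:lea}), where all mutations and all applications of $\noise$ are independent, but it is the one place where one uses the structure of the \lEA rather than pure arithmetic.
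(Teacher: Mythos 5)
Your proposal is correct and follows essentially the same route as the paper: the paper conditions on the number of clones among the offspring and collapses the resulting sum via the binomial theorem to $p(1-q(1-p))^\lambda$ with $q=(1-1/n)^n$, which is exactly the per-offspring factorisation you obtain directly from independence. Your handling of the asymptotic rewriting (using $(1-1/n)^n \ge (1-1/n)/e$ and the fact that $\frac{e-(1-p)}{e}$ is bounded away from zero so the $O(1/n)$ slack becomes a multiplicative $\exp(O(\lambda/n))$ factor) also mirrors the paper's calculation.
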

\begin{proof} 
Let $q := (1-1/n)^n$ abbreviate the probability of creating a clone of the parent for an offspring. The probability of creating exactly $i$ clones is ${\binom{\lambda}{i} q^i (1-q)^{\lambda-i}}$, and then the probability that all $i+1$ copies of~$x$ (including the parent) are affected by noise is at most~$p^{i+1}$. Hence the sought probability is
\begin{align*}
\sum_{i=0}^\lambda \binom{\lambda}{i} q^i (1-q)^{\lambda-i} p^{i+1}
=\;& p \sum_{i=0}^\lambda \binom{\lambda}{i} (pq)^i (1-q)^{\lambda-i}\\
=\;& p (1-q+pq)^\lambda\\
=\;& p (1-q(1-p))^\lambda
\end{align*}
where we have used the binomial theorem in the penultimate equality. Plugging in $(1-1/n)^n$ for $q$ yields the claimed result.
%
For the second bound we use $(1-1/n)^n = (1-1/n)(1-1/n)^{n-1} \ge (1-1/n) \cdot 1/e$,
\begin{align*}
& \left(1-\left(1 - \frac{1}{n}\right)^n (1-p)\right)^\lambda\\
\le\;& \left(1-\frac{1}{e} \left(1 - \frac{1}{n}\right) (1-p)\right)^\lambda\\
=\;&\left(1 - \frac{1}{e}(1-p)\right)^{\lambda} \left(\frac{1-\frac{1}{e}\left(1 - \frac{1}{n}\right) (1-p)}{1-\frac{1}{e}(1-p)}\right)^\lambda\\
=\;&\left(1 - \frac{1}{e}(1-p)\right)^{\lambda} \left(1 + \frac{\frac{1}{en} (1-p)}{1-\frac{1}{e}(1-p)}\right)^\lambda\\
=\;&\left(\frac{e-(1-p)}{e}\right)^{\lambda} \left(1 + \frac{\frac{1}{n}(1-p)}{e-(1-p)}\right)^\lambda\\
\le\;&\left(\frac{e-(1-p)}{e}\right)^{\lambda} \exp\left(\frac{\lambda}{n} \cdot \frac{1-p}{e-(1-p)}\right). \qedhere
\end{align*}
\end{proof}

Our aim is to apply Theorem~\ref{the:general-positive-result} where the failure event is the union of the event described in Lemma~\ref{lem:all-clones-have-noise} and other events described later. However, we still need a bound on the worst-case median optimisation time, or (by Theorem~\ref{the:median-vs-expectation}) the worst-case expected optimisation time, assuming that the algorithm always retains at least one copy of the current search point.

Note that we cannot simply use a runtime result for the \lEA without noise as noise can still affect the generated offspring; the only condition we can rely on is that we cannot lose all copies of the current search point. If noise is disruptive, the \lEA may behave like having a smaller effective offspring population, the size of which is random. Note that we cannot pessimistically use a bound on the \EA to upper bound the time of the \lEA in this setting as different offspring population sizes can affect search dynamics in unforeseen ways. \citet{Jansen2005a} presented a problem class where different offspring population sizes lead to very different performance.

The following theorem gives improved upper bounds for one-bit noise and bit-wise noise\footnote{We exclude asymmetric bit-wise noise as the probability of flipping a 1-bit may be $\omega(1/n)$ in case there are $o(n)$ leading ones, and only $o(n)$ 1-bits in total. We cannot exclude that this happens, though it seems highly unlikely in the light of Lemma~\ref{lem:mixing}. We also restrict bit-wise noise to $q/n \le 1/n$.}.
\begin{theorem}
\label{the:lambda-on-LO}
The expected number of function evaluations for the \lEA with prior noise parameter~$p \le 1/2$ on \LO with $\log_{\frac{e}{e-1/2}}(n) \le \lambda = O(n)$ is
\[
O\left(n^2 \cdot e^{O(pn/\lambda)}\right)
\]
in each of the following settings:
\begin{enumerate}
\item one-bit prior noise with probability~$p < 1$ and
\item bit-wise prior noise $(p', q/n)$ with $q/n \le 1/n$ and $p := p'\min\{q, 1\}$.
\end{enumerate}
This is polynomial if $p = O((\lambda \log n)/n)$ and $O(n^2)$ if $p = O(\lambda/n)$.
\end{theorem}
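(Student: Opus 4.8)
The plan is to invoke Theorem~\ref{the:general-positive-result} with a carefully chosen failure event, exactly in the spirit of the discussion preceding the statement. Call a generation a \emph{failure} if any of the following occurs: (i) all $\lambda+1$ copies of the current search point~$x$ among parent and offspring are affected by noise (the event of Lemma~\ref{lem:all-clones-have-noise}); (ii) the parent's noisy fitness exceeds~$\LO(x)$; or (iii) noise causes the accepted search point to have smaller \LO-value than~$\LO(x)$ while that would not have happened without noise, i.e.\ the accepted offspring suffered a destructive mutation that noise ``repaired'' to a competitive noisy value. Outside of failures the current fitness is non-decreasing: by the complement of~(i) there is a clone of~$x$ evaluated without noise, hence of noisy value exactly~$\LO(x)$; by the complement of~(ii) the parent is evaluated at value at most~$\LO(x)$; so the selected and accepted offspring has noisy value $\ge\LO(x)$, and by the complement of~(iii) its true \LO-value is then also $\ge\LO(x)$.

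The first task is to bound the per-generation failure probability by $O(p/n)$. For~(i), Lemma~\ref{lem:all-clones-have-noise} gives $p\bigl(\tfrac{e-(1-p)}{e}\bigr)^\lambda\exp(O(\lambda/n))$; since $p\le 1/2$ and $\lambda\ge\log_{\frac{e}{e-1/2}}n$ the first factor after~$p$ is at most~$1/n$, and $\exp(O(\lambda/n))=O(1)$ by $\lambda=O(n)$, so~(i) has probability $O(p/n)$. Event~(ii) requires noise to flip bit $\LO(x)+1$ in the parent; in each considered model a fixed bit is flipped with probability at most $p/n$ (for bit-wise noise $(p',q/n)$ with $q/n\le1/n$ this probability is $p'q/n=p/n$ with $p=p'q$), so~(ii) has probability at most $p/n$. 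Event~(iii) is the delicate one. A repair to a noisy value \emph{strictly above} $\LO(x)$ additionally requires mutation to flip bit $\LO(x)+1$, which costs an extra factor $1/n$, so a union bound over the $\lambda=O(n)$ offspring already gives $O(p/n)$. A repair to a noisy value \emph{exactly} $\LO(x)$ can only cause harm if the offending offspring is \emph{selected} over the faithfully-evaluated clones at noisy value $\LO(x)$, of which there are $\Omega(\lambda)$ in expectation; summing the per-offspring repair probability $O(p/n)$ over offspring while averaging the tie-break factor $1/(\text{number of faithful clones}+1)$, whose expectation is $O(1/\lambda)$ because that count is binomially distributed with mean $\Omega(\lambda)$, again yields $O(p/n)$.

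The second task is to bound the worst-case median optimisation time~$M$ of the process conditioned on never experiencing a failure, which by Theorem~\ref{the:median-vs-expectation} reduces to bounding its worst-case \emph{expected} time. There the current fitness is non-decreasing, and in a generation with $\LO(x)=\ell$ at least one offspring flips the first $0$-bit while keeping the prefix and is not disrupted by noise with probability $\ge 1-(1-\Omega(1/n))^\lambda=\Omega(\min\{\lambda/n,1\})$ by~\eqref{eq:theta-min}; conditioning on no failure changes such probabilities by only a constant factor, and a progress step is ``wasted'' only if some other offspring is noise-boosted, which costs at most a further constant factor since $p\lambda/n=O(1)$ for $p\le1/2$ and $\lambda=O(n)$. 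Hence each fitness level is left within $O(n/\lambda+1)=O(n/\lambda)$ generations, and the standard \LeadingOnes argument gives expected time $O(n^2/\lambda)$ and thus $M=O(n^2/\lambda)$ generations.

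Finally, combining the pieces: Theorem~\ref{the:general-positive-result} with failure probability $p^*=O(p/n)$ and base worst-case median $M=O(n^2/\lambda)$ generations, times the $\nu=\lambda+1$ evaluations per generation, gives an expected number of evaluations of $(\lambda+1)\cdot 2M\cdot e^{p^*M/(1-p^*)}=O(n^2)\cdot e^{O(pn/\lambda)}$, which is the claimed bound; it is $O(n^2)$ when $p=O(\lambda/n)$ and polynomial when $p=O((\lambda\log n)/n)$. I expect the main obstacle to be item~(iii): a naive union bound over the $\lambda$ offspring yields only $O(p\lambda/n)$, which for large~$\lambda$ would degrade the exponent to $e^{O(pn)}$ rather than $e^{O(pn/\lambda)}$, so one genuinely has to use that a repairing offspring must beat the $\Omega(\lambda)$ correctly-evaluated clones in the selection step — the precise reason offspring populations improve robustness here.
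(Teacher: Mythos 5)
Your proposal is correct and follows essentially the same route as the paper's proof: the same failure-event framework (Theorem~\ref{the:general-positive-result} combined with Theorem~\ref{the:median-vs-expectation} and Lemma~\ref{lem:all-clones-have-noise}), the same $O(p/n)$ bound on the per-generation failure probability, the same $O(n^2/\lambda)$ bound on the conditional (failure-free) optimisation time, and the same final assembly into $O(n^2)\cdot e^{O(pn/\lambda)}$ evaluations. The only notable difference is bookkeeping: where the paper bounds its event $E_2$ (the \emph{chosen} offspring being a noise-repaired destructive mutant at noisy value $\LO(x)$) tersely by $p/n$, you justify the absence of the factor~$\lambda$ explicitly through the selection-dilution argument $\E[1/(F+1)] = O(1/\lambda)$ over the $\Omega(\lambda)$ faithfully evaluated clones, which is a more detailed account of the same step rather than a different approach.
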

The exponent is smaller compared to the upper bound for the \EA by a factor of order $\lambda n$, and thus the threshold for~$p$ for which polynomial times are guaranteed increases by the same factor. The threshold between polynomial and superpolynomial times could be higher as we do not have a corresponding lower bound.

Theorem~\ref{the:lambda-on-LO} improves and generalises the best known result for the \lEA~\cite[Corollary~24]{Giessen2016} which requires $p = O(1/n)$ and $\lambda \ge 72 \log n$ and gives a time bound of $O(\lambda n + n^2)$. This is $O(n^2)$ as the authors also assume $\lambda = o(n)$. Our result covers the whole parameter range for~$p$ up to~$1/2$ and also identifies a functional relationship between $p$ and~$\lambda$ that guarantees robustness to noise.

\begin{proofof}{Theorem~\ref{the:lambda-on-LO}}
We estimate the probability of the following failure events in order to apply a union bound later on.

\paragraph{Failure event $E_1$:} all copies of the current search point are affected by noise.
By Lemma~\ref{lem:all-clones-have-noise}, this probability is at most
\[
p_1 := \mathord{O}\mathord{\left(p\left(\frac{e-(1-p)}{e}\right)^{\lambda}\right)}
\le \mathord{O}\mathord{\left(p\left(\frac{e-1/2}{e}\right)^{\lambda}\right)}
= \mathord{O}\mathord{\left(\frac{p}{n}\right)}.
\]

\paragraph{Failure event $E_2$:} the best offspring is evaluated as having the parent's fitness, and the offspring $y$ chosen to replace the parent carries disruptive mutations that were undone by noise, i.\,e.\ $\LO(y) < \LO(\noise(y)) = \LO(x)$. The probability for this to happen is at most
\[
p_2 := \frac{p}{n}
\]
as noise has to flip at least one specific bit.

\paragraph{Failure event $E_3$:} there is an offspring $y$ that carries disruptive mutations, but is being evaluated as being better than the parent, i.\,e.\ $\LO(y) < \LO(x)$ and $\LO(\noise(y)) > \LO(x)$.
For each offspring where mutation flips one of the leading ones, two events may occur: if mutation flips the first 0-bit, noise in an offspring has to undo all mutations of the leading ones. This has probability at most $p/n^2$. Otherwise, noise has to undo all mutations of the leading ones and flip the first 0-bit at the same time. This is impossible under one-bit noise, and has probability at most $p/n^2$ under bit-wise noise. Along with a union bound over these two events and $\lambda$ offspring,
\[
p_3 \le \frac{2p\lambda}{n^2} = \mathord{O}\mathord{\left(\frac{p}{n}\right)}.
\]
As long as no failure occurs, the current fitness of the \lEA cannot decrease. We now show that, conditional on no failure occurring, the expected worst-case number of generations of the \lEA is bounded by $O(n + n^2/\lambda) = O(n^2/\lambda)$.

The probability of one offspring increasing the current fitness is at least ${(1-p)/(en)}$ as it suffices to flip the first 0-bit and not to flip any of the other bits, and to have the offspring being evaluated correctly. The probability that this happens in at least one of the $\lambda$ offspring and the parent is evaluated correctly is at least
\[
(1-p)\left(1 - \left(1 - \frac{1-p}{en}\right)^\lambda\right) \ge \frac{(1-p)^2\lambda/(en)}{1+(1-p)\lambda/(en)}
= \Omega\left(\frac{\lambda}{n}\right)
\]
where the inequality follows from~\cite[Lemma~6]{Badkobeh2015}.
The expected time to increase the best fitness is thus $O(n/\lambda)$, and since the fitness only has to be increased at most $n$ times, an upper bound of $O(n^2/\lambda)$ generations follows, for every initial search point. The same bound also holds for the worst-case median optimisation time by Theorem~\ref{the:median-vs-expectation}.

Now the result follows from applying Theorem~\ref{the:general-positive-result} with a time bound of $O(n^2/\lambda)$ and a failure probability bound of $p_1 + p_2 + p_3 = O(p/n)$, and multiplying the number of generations by $\lambda$ for the number of function evaluations.
\end{proofof}

\subsection{Experiments for LeadingOnes}

We also performed experiments to see the threshold behaviour more clearly and to get further insights into the search dynamics in the presence of noise.

Figure~\ref{fig:averages-1+lambda-on-LO} shows the average optimisation times over 1000 runs of the \lEA on 100-bit \LO with $\lambda \in \{1, 2, 4, 8, 16\}$ for both
one-bit prior noise with probability~$p$ and bit-wise prior noise $(1, q/n)$. For both noise models the parameter was varied exponentially: $p
 \in \{2^{-20}, 2^{-19}, \dots, 2^{-1}\}$ and $q \in \{2^{-20}, 2^{-19}, \dots, 2^{0}\}$.
Runs were stopped after $10n^2 = 10^5$ generations or when the optimum was found. For the \EA with one-bit noise we can see that for small noise values like $p \in \{2^{-20}, \dots, 2^{-15}\}$ the averages seem unaffected by the noise parameter, as noise occurs too rarely to have a noticeable effect. When increasing~$p$, the average time increases slightly before shooting up around $p =2^{-8}$ and hitting the generation limit at $p=2^{-6}$ in nearly all runs. This clearly shows that and how the expected optimisation time grows exponentially in $pn^2$ in this regime.

\begin{figure*}[tb]
\centering
\subfigure[one-bit noise]{
\begin{tikzpicture}[scale=0.76]
\begin{axis}[legend cell align=left, legend pos=north west, xlabel={$\log(p)$}, 
ymin=0, ymax=100000, xmin=-20, xmax=0,
forget plot style={opacity=0.35,mark=none},
]
\pgfplotstableread[header=false]{experiments/runtime-(1+1)EA-n=100-onebit-p-increasing-to-0.5-stopped-after-10n2-1000runs.txt}\pOneOne
\pgfplotstableread[header=false]{experiments/runtime-(1+2)EA-n=100-onebit-p-increasing-to-0.5-stopped-after-10n2-1000runs.txt}\pOneTwo
\pgfplotstableread[header=false]{experiments/runtime-(1+4)EA-n=100-onebit-p-increasing-to-0.5-stopped-after-10n2-1000runs.txt}\pOneFour
\pgfplotstableread[header=false]{experiments/runtime-(1+8)EA-n=100-onebit-p-increasing-to-0.5-stopped-after-10n2-1000runs.txt}\pOneEight
\pgfplotstableread[header=false]{experiments/runtime-(1+16)EA-n=100-onebit-p-increasing-to-0.5-stopped-after-10n2-1000runs.txt}\pOneSixteen
\addplot table[x expr=(-21+\thisrow{1}), y index=7]{\pOneOne};
\addplot[plotcolor1,forget plot] table[x expr=(-21+\thisrow{1}), y expr=\thisrow{7} + \thisrow{11}]{\pOneOne};
\addplot[plotcolor1,forget plot] table[x expr=(-21+\thisrow{1}), y expr=\thisrow{7} - \thisrow{11}]{\pOneOne};
\addplot table[x expr=(-21+\thisrow{1}), y index=7]{\pOneTwo};
\addplot[plotcolor2,forget plot] table[x expr=(-21+\thisrow{1}), y expr=\thisrow{7} + \thisrow{11}]{\pOneTwo};
\addplot[plotcolor2,forget plot] table[x expr=(-21+\thisrow{1}), y expr=\thisrow{7} - \thisrow{11}]{\pOneTwo};
\addplot table[x expr=(-21+\thisrow{1}), y index=7]{\pOneFour};
\addplot[plotcolor3,forget plot] table[x expr=(-21+\thisrow{1}), y expr=\thisrow{7} + \thisrow{11}]{\pOneFour};
\addplot[plotcolor3,forget plot] table[x expr=(-21+\thisrow{1}), y expr=\thisrow{7} - \thisrow{11}]{\pOneFour};
\addplot table[x expr=(-21+\thisrow{1}), y index=7]{\pOneEight};
\addplot[plotcolor4,forget plot] table[x expr=(-21+\thisrow{1}), y expr=\thisrow{7} + \thisrow{11}]{\pOneEight};
\addplot[plotcolor4,forget plot] table[x expr=(-21+\thisrow{1}), y expr=\thisrow{7} - \thisrow{11}]{\pOneEight};
\addplot table[x expr=(-21+\thisrow{1}), y index=7]{\pOneSixteen};
\addplot[plotcolor5,forget plot] table[x expr=(-21+\thisrow{1}), y expr=\thisrow{7} + \thisrow{11}]{\pOneSixteen};
\addplot[plotcolor5,forget plot] table[x expr=(-21+\thisrow{1}), y expr=\thisrow{7} - \thisrow{11}]{\pOneSixteen};
\legend{\EA, (1+2)~EA, (1+4)~EA, (1+8)~EA, (1+16)~EA}
\end{axis}
\end{tikzpicture}
}
\subfigure[bit-wise noise]{
\begin{tikzpicture}[scale=0.76]
\begin{axis}[legend cell align=left, legend pos=north west, xlabel={$\log(q)$}, 
ymin=0, ymax=100000, xmin=-20, xmax=0,
forget plot style={opacity=0.35,mark=none},
]
\pgfplotstableread[header=false]{experiments/runtime-rerun-(1+1)EA-n=100-bitwise-p-increasing-to-0.5-stopped-after-10n2-1000runs.txt}\pOneOne
\pgfplotstableread[header=false]{experiments/runtime-(1+2)EA-n=100-bitwise-p-increasing-to-0.5-stopped-after-10n2-1000runs.txt}\pOneTwo
\pgfplotstableread[header=false]{experiments/runtime-(1+4)EA-n=100-bitwise-p-increasing-to-0.5-stopped-after-10n2-1000runs.txt}\pOneFour
\pgfplotstableread[header=false]{experiments/runtime-(1+8)EA-n=100-bitwise-p-increasing-to-0.5-stopped-after-10n2-1000runs.txt}\pOneEight
\pgfplotstableread[header=false]{experiments/runtime-(1+16)EA-n=100-bitwise-p-increasing-to-0.5-stopped-after-10n2-1000runs.txt}\pOneSixteen
\addplot table[x expr=(-21+\thisrow{1}), y index=7]{\pOneOne};
\addplot[plotcolor1,forget plot] table[x expr=(-21+\thisrow{1}), y expr=\thisrow{7} + \thisrow{11}]{\pOneOne};
\addplot[plotcolor1,forget plot] table[x expr=(-21+\thisrow{1}), y expr=\thisrow{7} - \thisrow{11}]{\pOneOne};
\addplot table[x expr=(-21+\thisrow{1}), y index=7]{\pOneTwo};
\addplot[plotcolor2,forget plot] table[x expr=(-21+\thisrow{1}), y expr=\thisrow{7} + \thisrow{11}]{\pOneTwo};
\addplot[plotcolor2,forget plot] table[x expr=(-21+\thisrow{1}), y expr=\thisrow{7} - \thisrow{11}]{\pOneTwo};
\addplot table[x expr=(-21+\thisrow{1}), y index=7]{\pOneFour};
\addplot[plotcolor3,forget plot] table[x expr=(-21+\thisrow{1}), y expr=\thisrow{7} + \thisrow{11}]{\pOneFour};
\addplot[plotcolor3,forget plot] table[x expr=(-21+\thisrow{1}), y expr=\thisrow{7} - \thisrow{11}]{\pOneFour};
\addplot table[x expr=(-21+\thisrow{1}), y index=7]{\pOneEight};
\addplot[plotcolor4,forget plot] table[x expr=(-21+\thisrow{1}), y expr=\thisrow{7} + \thisrow{11}]{\pOneEight};
\addplot[plotcolor4,forget plot] table[x expr=(-21+\thisrow{1}), y expr=\thisrow{7} - \thisrow{11}]{\pOneEight};
\addplot table[x expr=(-21+\thisrow{1}), y index=7]{\pOneSixteen};
\addplot[plotcolor5,forget plot] table[x expr=(-21+\thisrow{1}), y expr=\thisrow{7} + \thisrow{11}]{\pOneSixteen};
\addplot[plotcolor5,forget plot] table[x expr=(-21+\thisrow{1}), y expr=\thisrow{7} - \thisrow{11}]{\pOneSixteen};
\legend{\EA, (1+2)~EA, (1+4)~EA, (1+8)~EA, (1+16)~EA}
\end{axis}
\end{tikzpicture}
}
\caption{Average number of generations over 1000 runs for the \lEA with $\lambda \in \{1, 2, 4, 8, 16\}$ on \LO ($n=100$) with one-bit prior noise with probability~$p \in \{2^{-20}, 2^{-19}, \dots, 2^{-1}\}$ and bit-wise prior noise $(1, q/n)$ with $q \in \{2^{-20}, 2^{-19}, \dots, 2^{0}\}$.
Runs were stopped after $10n^2$ generations. Transparent lines show means $\pm$ standard deviation.}
\label{fig:averages-1+lambda-on-LO}
\end{figure*}
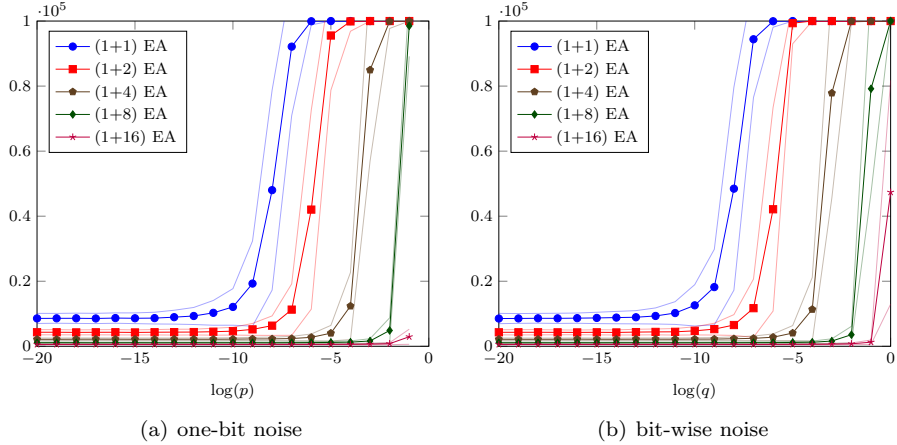

Figure~\ref{fig:averages-1+lambda-on-LO} further shows how offspring populations can shift the threshold between efficient and inefficient times towards higher values of~$p$. Even very small offspring population sizes~$\lambda$ have a significant effect. For instance, the (1+8)~EA is still efficient for $p=1/4$ and only becomes inefficient for $p=1/2$. The (1+16)~EA is efficient even for $p=1/2$. Note that the curves for all \lEA{}s have a very similar shape, independent of~$\lambda$; they just appear to be shifted towards different values of~$p$. This matches our theoretical results as the exponential term $e^{O(pn/\lambda)}$ contains the ratio $p/\lambda$, indicating that the noise strength can be compensated by the offspring population size in a linear fashion.

 Comparing plots for one-bit noise and bit-wise noise, the curves look almost identical.

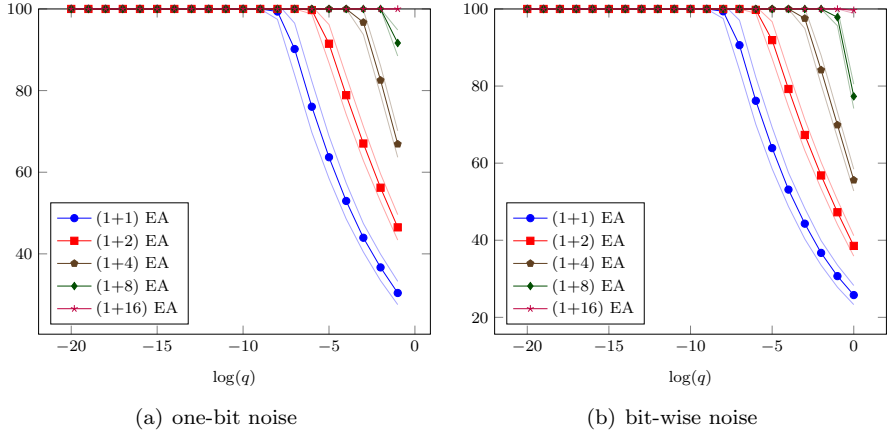
\begin{figure*}[bt]
\centering
\subfigure[one-bit noise]{
\begin{tikzpicture}[scale=0.76]
\begin{axis}[legend cell align=left, legend pos=south west, xlabel={$\log(q)$}, 
ymax=100,
forget plot style={opacity=0.35,mark=none},
]
\pgfplotstableread[header=false]{experiments/bestfitness2-(1+1)EA-n=100-onebit-p-increasing-to-0.5-stopped-after-10n2-1000runs.txt}\pOneOne
\pgfplotstableread[header=false]{experiments/bestfitness2-(1+2)EA-n=100-onebit-p-increasing-to-0.5-stopped-after-10n2-1000runs.txt}\pOneTwo
\pgfplotstableread[header=false]{experiments/bestfitness2-(1+4)EA-n=100-onebit-p-increasing-to-0.5-stopped-after-10n2-1000runs.txt}\pOneFour
\pgfplotstableread[header=false]{experiments/bestfitness2-(1+8)EA-n=100-onebit-p-increasing-to-0.5-stopped-after-10n2-1000runs.txt}\pOneEight
\pgfplotstableread[header=false]{experiments/bestfitness2-(1+16)EA-n=100-onebit-p-increasing-to-0.5-stopped-after-10n2-1000runs.txt}\pOneSixteen
\addplot table[x expr=(-21+\thisrow{1}), y index=7]{\pOneOne};
\addplot[plotcolor1,forget plot] table[x expr=(-21+\thisrow{1}), y expr=\thisrow{7} + \thisrow{11}]{\pOneOne};
\addplot[plotcolor1,forget plot] table[x expr=(-21+\thisrow{1}), y expr=\thisrow{7} - \thisrow{11}]{\pOneOne};
\addplot table[x expr=(-21+\thisrow{1}), y index=7]{\pOneTwo};
\addplot[plotcolor2,forget plot] table[x expr=(-21+\thisrow{1}), y expr=\thisrow{7} + \thisrow{11}]{\pOneTwo};
\addplot[plotcolor2,forget plot] table[x expr=(-21+\thisrow{1}), y expr=\thisrow{7} - \thisrow{11}]{\pOneTwo};
\addplot table[x expr=(-21+\thisrow{1}), y index=7]{\pOneFour};
\addplot[plotcolor3,forget plot] table[x expr=(-21+\thisrow{1}), y expr=\thisrow{7} + \thisrow{11}]{\pOneFour};
\addplot[plotcolor3,forget plot] table[x expr=(-21+\thisrow{1}), y expr=\thisrow{7} - \thisrow{11}]{\pOneFour};
\addplot table[x expr=(-21+\thisrow{1}), y index=7]{\pOneEight};
\addplot[plotcolor4,forget plot] table[x expr=(-21+\thisrow{1}), y expr=\thisrow{7} + \thisrow{11}]{\pOneEight};
\addplot[plotcolor4,forget plot] table[x expr=(-21+\thisrow{1}), y expr=\thisrow{7} - \thisrow{11}]{\pOneEight};
\addplot table[x expr=(-21+\thisrow{1}), y index=7]{\pOneSixteen};
\addplot[plotcolor5,forget plot] table[x expr=(-21+\thisrow{1}), y expr=\thisrow{7} + \thisrow{11}]{\pOneSixteen};
\addplot[plotcolor5,forget plot] table[x expr=(-21+\thisrow{1}), y expr=\thisrow{7} - \thisrow{11}]{\pOneSixteen};
\legend{\EA, (1+2)~EA, (1+4)~EA, (1+8)~EA, (1+16)~EA}
\end{axis}
\end{tikzpicture}
}
\subfigure[bit-wise noise]{
\begin{tikzpicture}[scale=0.76]
\begin{axis}[legend cell align=left, legend pos=south west, xlabel={$\log(q)$}, ymax=100,
forget plot style={opacity=0.35,mark=none},
]
\pgfplotstableread[header=false]{experiments/bestrealfitness2-(1+1)EA-n=100-bitwise-p-increasing-to-0.5-stopped-after-10n2-1000runs.txt}\pOneOne
\pgfplotstableread[header=false]{experiments/bestrealfitness2-(1+2)EA-n=100-bitwise-p-increasing-to-0.5-stopped-after-10n2-1000runs.txt}\pOneTwo
\pgfplotstableread[header=false]{experiments/bestrealfitness2-(1+4)EA-n=100-bitwise-p-increasing-to-0.5-stopped-after-10n2-1000runs.txt}\pOneFour
\pgfplotstableread[header=false]{experiments/bestrealfitness2-(1+8)EA-n=100-bitwise-p-increasing-to-0.5-stopped-after-10n2-1000runs.txt}\pOneEight
\pgfplotstableread[header=false]{experiments/bestrealfitness2-(1+16)EA-n=100-bitwise-p-increasing-to-0.5-stopped-after-10n2-1000runs.txt}\pOneSixteen
\addplot table[x expr=(-21+\thisrow{1}), y index=7]{\pOneOne};
\addplot[plotcolor1,forget plot] table[x expr=(-21+\thisrow{1}), y expr=\thisrow{7} + \thisrow{11}]{\pOneOne};
\addplot[plotcolor1,forget plot] table[x expr=(-21+\thisrow{1}), y expr=\thisrow{7} - \thisrow{11}]{\pOneOne};
\addplot table[x expr=(-21+\thisrow{1}), y index=7]{\pOneTwo};
\addplot[plotcolor2,forget plot] table[x expr=(-21+\thisrow{1}), y expr=\thisrow{7} + \thisrow{11}]{\pOneTwo};
\addplot[plotcolor2,forget plot] table[x expr=(-21+\thisrow{1}), y expr=\thisrow{7} - \thisrow{11}]{\pOneTwo};
\addplot table[x expr=(-21+\thisrow{1}), y index=7]{\pOneFour};
\addplot[plotcolor3,forget plot] table[x expr=(-21+\thisrow{1}), y expr=\thisrow{7} + \thisrow{11}]{\pOneFour};
\addplot[plotcolor3,forget plot] table[x expr=(-21+\thisrow{1}), y expr=\thisrow{7} - \thisrow{11}]{\pOneFour};
\addplot table[x expr=(-21+\thisrow{1}), y index=7]{\pOneEight};
\addplot[plotcolor4,forget plot] table[x expr=(-21+\thisrow{1}), y expr=\thisrow{7} + \thisrow{11}]{\pOneEight};
\addplot[plotcolor4,forget plot] table[x expr=(-21+\thisrow{1}), y expr=\thisrow{7} - \thisrow{11}]{\pOneEight};
\addplot table[x expr=(-21+\thisrow{1}), y index=7]{\pOneSixteen};
\addplot[plotcolor5,forget plot] table[x expr=(-21+\thisrow{1}), y expr=\thisrow{7} + \thisrow{11}]{\pOneSixteen};
\addplot[plotcolor5,forget plot] table[x expr=(-21+\thisrow{1}), y expr=\thisrow{7} - \thisrow{11}]{\pOneSixteen};
\legend{\EA, (1+2)~EA, (1+4)~EA, (1+8)~EA, (1+16)~EA}
\end{axis}
\end{tikzpicture}
}
\caption{Average best fitness during 1000 runs for the \lEA with $\lambda \in \{1, 2, 4, 8, 16\}$ on \LO ($n=100$) with one-bit prior noise with probability~$p \in \{2^{-20}, 2^{-19}, \dots, 2^{-1}\}$ and bit-wise prior noise $(1, q/n)$ with $q \in \{2^{-20}, 2^{-19}, \dots, 2^{0}\}$. Runs were stopped after $10n^2$ generations. Transparent lines show means $\pm$ standard deviation.}
\label{fig:average-best-fitness}
\end{figure*}

Another interesting performance measure not covered by our theoretical results is to inspect the best fitness found during a run before either finding an optimum or being stopped at $10n^2$ generations. Figure~\ref{fig:average-best-fitness} shows averages over these values. For the \EA the best fitness steadily decreases when increasing the noise parameter beyond the threshold for inefficient running times, reaching values of $30.414$ for one-bit noise with $p=1/2$ and $25.781$ for bit-wise noise with $q=1$. For comparison, the average best fitness found during $10n^2 = 10^5$ uniform random samples was $16.926$.
Again, we see that offspring populations help by shifting the curves towards higher noise strengths.

\section{An Example Where Noise Helps}
\label{sec:noise-helps}

%

The results so far show that on \LO, noise is disruptive and larger noise values lead to higher expected optimisation times.

The final contribution of this paper is to look at noise from a very different angle. We will show that noise can be beneficial for escaping from local optima. To this end, we consider a known class of functions that lead to a highly rugged fitness landscape with an underlying gradient pointing towards the location of the global optimum.
Such landscapes are known as ``big valley'' structures, which is an important characteristic of many
hard problems from combinatorial optimisation~\cite{Ochoa2016,Reeves1999}.

Pr{\"u}gel-Bennett defined such a class of problems known as \hurdle problems~\cite{PRUGELBENNETT2004135} as an example function where genetic algorithms with crossover outperform hill climbers. \hurdle functions are functions of unitation, that is, they only depend on the number of 1-bits. The fitness is given as
\[
\hurdle(x) = -\left\lceil \frac{\zeros{x}}{w} \right\rceil - \frac{\zeros{x} \bmod w}{w}
\]
where $\zeros{x}$ denotes the number of 0-bits in~$x$ and $w$ is a parameter called \emph{hurdle width} that defines the distance between subsequent peaks. A sketch of the function is shown in Figure~\ref{fig:hurdle}.

\begin{figure}[ht]
\centerline{
\begin{tikzpicture}[yscale=0.65, xscale=0.95]
\begin{axis}[xmin=0,xmax=20,ymax=0,ymin=-5-3/4,grid=both,xtick distance=4,xlabel={$\zeros{x}$}, ylabel={$\hurdle(x)$}]
\addplot+[blue] coordinates {
(0, 0) (1, -1-1/4) (2, -1-2/4) (3, -1-3/4) (4, -1) (5, -2-1/4) (6, -2-2/4) (7, -2-3/4) (8, -2) (9, -3-1/4) (10, -3-2/4) (11, -3-3/4) (12, -3) (13, -4-1/4) (14, -4-2/4) (15, -4-3/4) (16, -4) (17, -5-1/4) (18, -5-2/4) (19, -5-3/4) (20, -5)
};
\end{axis}
\end{tikzpicture}
}
\caption{Sketch of a \hurdle function with hurdle width~$w=4$ and problem size~$n=20$.}
\label{fig:hurdle}
\end{figure}
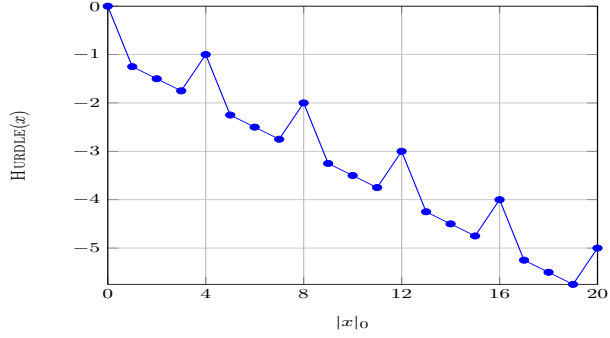

Here all search points with $i \bmod w = 0$ zeros are local optima, and all search points with $j$ zeros, $i - w < j < i$, have worse fitness. Hence an evolutionary algorithm needs to flip at least $w$ bits in order to find a search point of better fitness. \citet{Nguyen2018} proved that the \EA has expected time $\Theta(n^w)$ if $2 \le w \le n/2$.

In the following, we consider the well-known algorithm \emph{Randomised Local Search (RLS)}, which works like the \EA, but only flips exactly one bit in each mutation (chosen uniformly at random). We choose RLS instead of the \EA to keep the analyses simple and to make the point that even a very badly performing algorithm can be turned into a highly efficient algorithm through beneficial effects of noise. We will in particular show that RLS under noise is drastically faster than the \EA without noise. Section~\ref{sec:RLS-vs-EA} will further discuss whether results for RLS under noise can be transferred to the \EA under noise.

It is obvious that RLS has infinite expected time on any \hurdle function with non-trivial hurdle width $w \ge 2$, and \citet{Nguyen2018} showed via Chernoff bounds that local searchers get stuck in a non-optimal local optimum with probability $1-2^{-\Omega(n)}$ if $w \le (1 - \Omega(1))n/2$.

However, prior noise can help to escape from such a local optimum: RLS with one-bit prior noise can misevaluate either the parent or the offspring, which allows the algorithm to accept a search point with $i \bmod w = w-1$ ones. Then it can climb to the next local optimum from there, until the global optimum is found.
This is made precise in the following theorem.
\begin{theorem}
\label{the:noise-helps}
The expected optimisation time of RLS with one-bit prior noise~$p \le 1/(6n)$ on \hurdle with hurdle width~$w \ge 2\log n$ is $O(n^2/(pw^2) + n \log n)$.
\end{theorem}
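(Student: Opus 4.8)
The plan is to follow the number of zeros $k := \zeros{x}$ of the current search point and to exploit the structure of \hurdle as a function of~$k$: the states with $k = mw$ zeros ($1 \le m \le n/w$) are local optima for RLS, and from any state with $(m-1)w < k < mw$ exactly one Hamming neighbour, the one with $k-1$ zeros, has strictly higher fitness (so it is the only one RLS accepts), \emph{except} that at $k = mw-1$ the other neighbour, with $mw$ zeros, also has strictly higher fitness. Since RLS starts, with probability $1-2^{-\Omega(n)}$, with roughly $n/2$ zeros, and, disregarding rare noise excursions, only ever decreases~$k$, the algorithm has to escape the local optima at $mw$ zeros for $m$ running through at most $n/w$ values before it reaches the optimum $1^n$. (Throughout, one generation uses $O(1)$ fitness evaluations, so bounding generations bounds the optimisation time up to a constant factor.)

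First I would lower-bound the probability of an \emph{escape step} at a local optimum with $mw$ zeros: a generation in which mutation flips a 0-bit (probability $mw/n$) and the offspring $y$ with $mw-1$ zeros, despite having real fitness $\hurdle(x)-(w-1)/w < \hurdle(x)$, is accepted. I would enumerate the disjoint one-bit-noise events that cause acceptance: noise flipping a 0-bit of $x$ (so $\tilde x$ ties with~$y$), or noise flipping a 1-bit of $x$ (so $\tilde x$ looks like $mw+1$ zeros, strictly worse than~$y$), or no noise in $x$ together with noise flipping a 1-bit of~$y$ (so $\tilde y$ ties with~$x$); summing their probabilities and using $p \le 1/(6n)$ and $mw \le n$ gives an acceptance probability of at least $(1-p)p \ge p/2$, hence an escape probability of at least $pmw/(2n)$ per generation. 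The corresponding, even simpler, case analysis for bit-wise noise $(p',q/n)$ gives the same bound up to constants.

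Next I would analyse what happens right after an escape to $mw-1$ zeros. Both single-bit mutations are now accepted without any help from noise: flipping a 0-bit moves to $mw-2$ zeros — from where a \emph{pure descent} follows down to the next local optimum at $(m-1)w$ zeros, since from every state with $(m-1)w < k \le mw-2$ moving to more zeros is rejected — while flipping a 1-bit moves back to the local optimum at $mw$ zeros. Hence an escape ``succeeds'' (commits to the descent) with probability $(mw-1)/n = \Theta(mw/n)$, so the expected number of escapes before one succeeds is $\Theta(n/(mw))$, and the expected time spent at the local optimum with $mw$ zeros before committing is $O\!\left(\frac{n}{pmw}\cdot\frac{n}{mw}\right) = O\!\left(\frac{n^{2}}{pm^{2}w^{2}}\right)$. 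Summing over $m\le n/w$ and using $\sum_{m\ge1} m^{-2} = O(1)$ bounds the total ``waiting at local optima'' time by $O(n^{2}/(pw^{2}))$, while the total length of all descents is at most $\sum_{k=1}^{n} n/k = O(n\log n)$ (dominated by the last valley, where a descent position $k$ can be as small as~$1$; this is where the $n\log n$ term comes from). Adding these contributions and the $2^{-\Omega(n)}$-probability event that the initial point already has fewer than~$w$ zeros gives the claimed bound.

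The main obstacle will be controlling the ways noise can \emph{hurt}: while waiting at a local optimum, noise can push~$k$ up to $mw+1$ and beyond; and during a descent, a noise event at position $mw-2$ can push $k$ up to $mw-1$, from where the algorithm may fall back to the local optimum, ``derailing'' the entire valley. For the first, one argues that such up-excursions have expected length $O(n/(mw))$ (the downward drift $\Theta(mw/n)$ dominates the noise rate $O(p)$ because $p\le 1/(6n)$) and start only with probability $O(p)$ per generation, so they add at most $O(n^{3}/w^{3}) = O(n^{2}/(pw^{2}))$ overall (using $pn\le 1/6\le w$). For the second, the descent spends only $O(n/(mw))$ generations at position $mw-2$, each requiring a noise event to derail, so the derailment probability for valley~$m$ is $O(1/(mw))$, and $\sum_{m\le n/w} 1/(mw) = O((\log n)/w) = O(1)$ — exactly where the hypothesis $w \ge 2\log n$ is needed — so derailments inflate the expected running time only by a constant factor.
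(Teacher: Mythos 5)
Your argument is essentially correct and reaches the stated bound, but by a genuinely different route than the paper. The paper defines a single potential function $g(i)$ over the number of zeros, with a term $n/(ip)$ for every peak and exponentially weighted slope terms $a_i\,n^2/(i^2p(1-p)^2)$, $a_i = 2^{(i\bmod w)-w+1}$, proves a drift of $\Omega(1)$ in every state (three cases: $i \bmod w = 0$, $i\bmod w = w-1$, and the rest), and finishes with additive drift analysis; all noise-induced backsliding --- falling back from the slope onto the previous peak, excursions above a peak --- is absorbed automatically by the geometric weights, and this is where the paper uses $w \ge 2\log n$ (via $2^{-w}\le n^{-2}$, to make the slope term harmless in the drift at peaks). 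You instead run a per-hurdle renewal argument: escape probability $\Theta(pmw/n)$ at the peak with $mw$ zeros (your case analysis of which one-bit noise outcomes force acceptance is sound and matches the paper's computation), commit probability $\Theta(mw/n)$ at $mw-1$ zeros, hence $O\big(n^2/(pm^2w^2)\big)$ per hurdle, $O\big(n^2/(pw^2)\big)$ in total, plus $O(n\log n)$ for the descents, with explicit bookkeeping of excursions and derailments. Your route makes it more transparent where the time is spent and shows that the hypothesis $w\ge 2\log n$ is barely needed in this part (each derailment probability $O(1/(mw))$ alone gives an $O(1)$ attempts multiplier, even without summing over $m$); the paper's route trades that intuition for a shorter, uniform verification that needs no case-by-case control of rare events.

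Two tail events in your accounting still need a word. First, an up-excursion from the peak at $mw$ zeros could in principle climb all the way to $(m+1)w-1$ zeros, from which the move to the next-worse local optimum at $(m+1)w$ zeros is accepted \emph{without} noise; your accounting treats excursions purely as wasted time and implicitly assumes they end back at $mw$. A gambler's-ruin bound fixes this: each up-step inside the excursion needs noise, so its probability relative to the downward drift is $O(pn/(mw)) = O(1/w)$, hence overtopping the next hurdle before returning has probability $\big(O(1/w)\big)^{w-2}$, superpolynomially small, and repeating earlier hurdles therefore inflates the expectation only negligibly. Second, the descent is only ``pure'' in the absence of noise: accepted up-moves at interior valley positions $k \le mw-2$ occur with probability $O(p)$ per generation; since $p \le 1/(6n)$ is dominated by the down-probability $\Omega(k/n)$, the $O(n/k)$ per-position bound and hence the $O(n\log n)$ descent total survive, but this deserves a one-line drift remark. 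With these two small patches your proof is complete.
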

Note that in particular for $p = 1/(6n)$ and $w = \Omega(n/\sqrt{\log n})$ this is $O(n \log n)$. Then RLS is as efficient as on the underlying function \textsc{OneMax} without any hurdles.
\begin{proofof}{Theorem~\ref{the:noise-helps}}
The algorithm can escape from a local optimum with $i$ zeros, $i \bmod w = 0$, if the offspring has $i-1$ zeros (probability $i/n$) and additionally
\begin{enumerate}
\item the offspring is misevaluated as having $i$ zeros (probability $p (n-i+1)/n$) or
\item the parent is misevaluated as having $i-1$ zeros (probability $pi/n$).
\end{enumerate}
The probability of the union of these events is
\[
\frac{p(n-i+1)}{n} + \frac{pi}{n} - \frac{p^2i(n-i+1)}{n^2} = p \left(1+\frac{1}{n}-\frac{pi(n-i+1)}{n^2}\right) \ge p\left(1+\frac{1}{n}-p\right) \ge p
\]
as the event of both offspring and parent being misevaluated as described is counted twice in the enumeration. Together, the probability of escaping from a local optimum with $i$ zeros is at least $pi/n$.

We now define a potential function $g$ such that $g(i)$ estimates or overestimates the expected optimisation time from a state with $i$ zeros, bar constant factors. Let $a_{i} := 2^{(i \bmod w)-w+1}$, then
\[
g(i) := \begin{cases}
0 & \text{if $i=0$},\\
g(i-1) + \frac{n}{ip} & \text{if $i > 0$, $i \bmod w = 0$,}\\
g(i-1) + \frac{n}{i} + a_i \frac{n^2}{i^2p(1-p)^2} & \text{otherwise.}
\end{cases}
\]
The term $a_i \frac{n^2}{i^2p(1-p)^2}$ is necessary since on a slope towards a local optimum there is a chance to increase the number of zeros and to possibly return to a worse, previously visited local optimum. The term is largest, $\frac{n^2}{i^2p(1-p)^2}$, for $i=w-1 \bmod w$ as from there returning to a local optimum with $i+1$ zeros is very likely. This needs to be accounted for in our choice of potential function. The term decreases exponentially for decreasing $i \bmod w$ since this risk is reduced as the algorithm moves away from a local optimum.

Note that $g(0) \le g(1) \le \dots \le g(n)$, with $g(n)$ being composed of the following sums.
The additive terms $\frac{n}{i}$ for all $i > 0, i \bmod w > 0$ sum up to at most $\sum_{i=1}^n \frac{n}{i} = O(n \log n)$. For each hurdle with a peak at $i$ zeros, $g(n)$ contains an additive term $\frac{n}{ip}$ as well as terms
\[
\sum_{j=1}^{w-1} 2^{j-w+1} \frac{n^2}{(i-w+j)^2p(1-p)^2}
\le O(1) \cdot \frac{n^2}{i^2p(1-p)^2}
\]
as $\sum_{d=0}^{i-1} 2^{-d} i^2/(i-d)^2 = O(1)$.
Adding up the terms for each hurdle with $w, 2w, 3w, \dots, (n/w)w$ zeros yields
\begin{align*}
g(i) \le g(n) =\;&
O\bigg(n \log n + \sum_{j=1}^{n/w} \bigg(\frac{n}{jwp} + \frac{n^2}{(jw)^2p(1-p)^2}\bigg)\bigg)\\
=\;&
O\bigg(n \log n + \frac{n}{wp} \sum_{j=1}^{n/w} \frac{1}{j} +
\frac{n^2}{w^2p(1-p)^2} \sum_{j=1}^{n/w}\frac{1}{j^2}\bigg)\\
=\;&
O\left(n \log n + \frac{n \log(n/w)}{wp} +
\frac{n^2}{w^2p}\right)\\
=\;&
O\left(n \log n +
\frac{n^2}{w^2p}\right)
\end{align*}
where the penultimate line follows from $\sum_{j=1}^{n/w} 1/j^2 \le \sum_{j=1}^\infty 1/j^2 = \pi^2/6 = O(1)$ and in the last line we used $\log(n/w) = O(n/w)$ to absorb the middle term.
We show in the following that the potential decreases in expectation by $\Omega(1)$.

For $0 < i \bmod w < w-1$, the potential decreases by ${g(i)-g(i-1)}$ if mutation creates a search point with $i-1$ zeros and the mutant is evaluated correctly (probability at least $i/n \cdot (1-p)$). It is increased by $g(i+1)-g(i)$ only if mutation creates a search point with $i+1$ zeros (probability $(n-i)/n \le 1$) and either the parent or the offspring is misevaluated (probability at most $2p$), as otherwise the offspring will be rejected.
Thus for all $i$ with $i \bmod w \notin \{0, w-1\}$, using $a_{i+1} = 2a_i$,
\begin{align*}
& E(g(X_t) - g(X_{t+1}) \mid X_t = i, i \bmod w \notin \{0, w-1\})\\
\ge\;& \frac{i}{n} (1-p)(g(i) - g(i-1)) - 2p (g(i+1)-g(i))\\
=\;& \frac{i}{n} (1-p)\left(\frac{n}{i} + \frac{a_i n^2}{i^2p(1-p)^2}\right) - 2p \left(\frac{n}{i+1} + \frac{a_{i+1} n^2}{(i+1)^2p(1-p)^2}\right)\\
\ge\;& 1-p + (1-p) \frac{a_i n}{ip(1-p)^2} - 2p \left(\frac{n}{i} + \frac{2a_{i} n^2}{i^2p(1-p)^2}\right)\\
=\;& 1-p - \frac{2pn}{i} + \frac{a_in}{ip(1-p)^2} \left(1-p - \frac{4pn}{i}\right).
\end{align*}
As $p \le 1/(6n)$, the bracket is at least $1-1/(6n) - 2/3 \ge 0$, hence the drift is at least
\begin{align*}
& E(g(X_t) - g(X_{t+1}) \mid X_t = i, i \bmod w \notin \{0, w-1\})\\
\ge\;& 1-p - \frac{2pn}{i} \ge 1 - \frac{1}{6n} - \frac{1}{3} \ge \frac{1}{2}.
\end{align*}
For $i \bmod w = 0$, the potential is decreased by $g(i) - g(i-1) = \frac{n}{ip}$ with probability at least $pi/n$, and it is increased by $g(i+1)-g(i)$ only if either the parent or the offspring is misevaluated and the offspring increases the number of zeros. The probability of an increase is bounded by~$2p$. Thus
\begin{align*}
& E(g(X_t) - g(X_{t+1}) \mid X_t = i, i \bmod w = 0)\\
\ge\;& \frac{n}{ip} \cdot \frac{ip}{n} - 2p (g(i+1)-g(i))\\
=\;& 1 - 2p (g(i+1)-g(i))\\
=\;& 1 - 2p \cdot \left(\frac{n}{i+1} + 2^{-w+2} \cdot \frac{n^2}{(i+1)^2p(1-p)^2}\right)\\
\ge\;& 1 - 2pn - 2^{-w+3} \cdot \frac{n^2}{i^2(1-p)^2}\\
\intertext{and using $p \le 1/(6n)$, $i \ge w$ and $w \ge 2\log n$ this is at least}
\ge\;& \frac{2}{3} - \frac{8}{w^2(1-p)^2}
\ge \frac{2}{3} - o(1).
\end{align*}
For $i \bmod w = w-1$ the potential is decreased by ${g(i)-g(i-1)}$ if mutation decreases the number of zeros and both parent and offspring are evaluated truthfully. The potential is increased by ${g(i+1)-g(i)}$ only if mutation creates a search point with $i+1$ zeros (probability at most~1). Thus
\begin{align*}
& E(g(X_t) - g(X_{t+1}) \mid X_t = i, i \bmod w = w-1)\\
\ge\;& \frac{i(1-p)^2}{n} \cdot (g(i)-g(i-1)) - (g(i+1)-g(i))\\
=\;& \frac{i(1-p)^2}{n} \cdot \left(\frac{n}{i} + \frac{n^2}{i^2p(1-p)^2}\right) - \frac{n}{(i+1)p}\\
=\;& (1-p)^2 + \frac{n}{ip} - \frac{n}{(i+1)p}\\
\ge\;& (1-p)^2 = 1 - O(1/n).
\end{align*}
For all states $i > 0$, the expected decrease in $g(X_t)$ is at least $c$ for a suitable constant~$c > 0$. Once $g(X_t) = 0$ is reached, an optimum is found. Standard additive drift analysis (see, e.\,g.~\cite[Theorem~1]{Lehre2018} for a self-contained statement and proof) then implies that the expected time until $g(X_t) = 0$ is reached is at most $g(n)/c = O(g(n)) = O(n \log n + n^2/(w^2p))$.
\end{proofof}

The reason why prior noise is helpful is that, intuitively speaking, it can ``smooth out'' the fitness landscape, blurring rugged peaks and allowing the algorithm to see the underlying gradient.
Hence noise can be useful for problems with a \emph{big valley} structure~\cite{Ochoa2016,Reeves1999}.
This effect has been observed in continuous spaces before~\cite{Rana1996} where it was termed ``annealing of peaks''. In discrete spaces the only other examples the author is aware of showing a positive effect of noise are
deceptive functions and needle-in-a-haystack functions~\cite{Qian2016}.

To put our result in perspective, we have shown that noise can mitigate a poor choice of algorithm. In our case, an elitist algorithm became a non-elitist algorithm because of noise. This is helpful for \hurdle as here non-elitism is advantageous, while even a small amount of non-elitism is clearly detrimental for \LO. Note that, as argued in~\citep[Section~4]{Akimoto2015}, noise can never improve an \emph{optimal} algorithm for a particular problem. If noise was able to improve the performance of an optimal algorithm, we could simply simulate the effect of noise in the algorithm and obtain a better performing algorithm.

%

\subsection{Experiments}

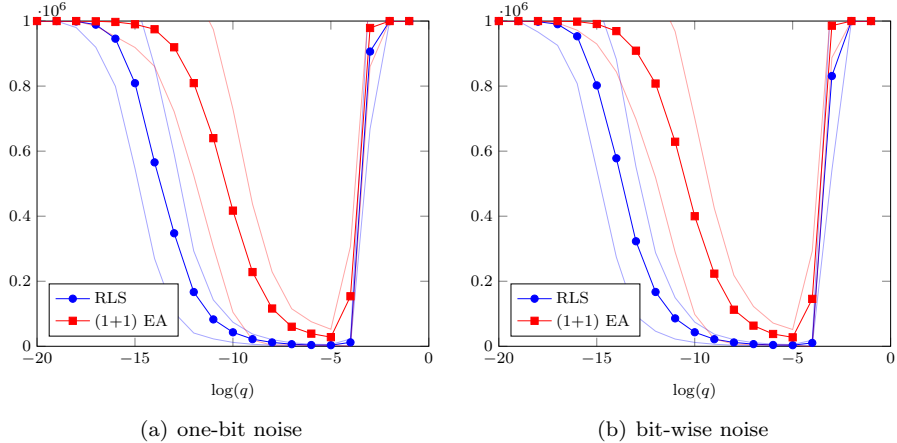
\begin{figure*}[bt]
\centering
\subfigure[one-bit noise]{
\begin{tikzpicture}[scale=0.76]
\begin{axis}[legend cell align=left, legend pos=south west, xlabel={$\log(q)$}, ymin=0, ymax=1000000,
xmin=-20,xmax=0,
forget plot style={opacity=0.35,mark=none},
]
\pgfplotstableread[header=false]{experiments/runtime-RLS-hurdle-14-n=100-onebit-p-increasing-to-0.5-stopped-after-n3-1000runs.txt}\pRLS
\pgfplotstableread[header=false]{experiments/runtime-(1+1)EA-hurdle-14-n=100-onebit-p-increasing-to-0.5-stopped-after-n3-1000runs.txt}\pEA
\addplot table[x expr=(-21+\thisrow{1}), y index=7]{\pRLS};
\addplot[plotcolor1,forget plot] table[x expr=(-21+\thisrow{1}), y expr=\thisrow{7} + \thisrow{11}]{\pRLS};
\addplot[plotcolor1,forget plot] table[x expr=(-21+\thisrow{1}), y expr=\thisrow{7} - \thisrow{11}]{\pRLS};
\addplot table[x expr=(-21+\thisrow{1}), y index=7]{\pEA};
\addplot[plotcolor2,forget plot] table[x expr=(-21+\thisrow{1}), y expr=\thisrow{7} + \thisrow{11}]{\pEA};
\addplot[plotcolor2,forget plot] table[x expr=(-21+\thisrow{1}), y expr=\thisrow{7} - \thisrow{11}]{\pEA};
\legend{RLS, \EA}
\end{axis}
\end{tikzpicture}
}
\subfigure[bit-wise noise]{
\begin{tikzpicture}[scale=0.76]
\begin{axis}[legend cell align=left, legend pos=south west, xlabel={$\log(q)$}, ymin=0, ymax=1000000,
xmin=-20,xmax=0,
forget plot style={opacity=0.35,mark=none},
]
\pgfplotstableread[header=false]{experiments/runtime-(1+1)EA-hurdle-14-n=100-bitwise-p-increasing-to-0.5-stopped-after-n3-1000runs.txt}\pEA
\pgfplotstableread[header=false]{experiments/runtime-RLS-hurdle-14-n=100-bitwise-p-increasing-to-0.5-stopped-after-n3-1000runs.txt}\pRLS
\addplot table[x expr=(-21+\thisrow{1}), y index=7]{\pRLS};
\addplot[plotcolor1,forget plot] table[x expr=(-21+\thisrow{1}), y expr=\thisrow{7} + \thisrow{11}]{\pRLS};
\addplot[plotcolor1,forget plot] table[x expr=(-21+\thisrow{1}), y expr=\thisrow{7} - \thisrow{11}]{\pRLS};
\addplot table[x expr=(-21+\thisrow{1}), y index=7]{\pEA};
\addplot[plotcolor2,forget plot] table[x expr=(-21+\thisrow{1}), y expr=\thisrow{7} + \thisrow{11}]{\pEA};
\addplot[plotcolor2,forget plot] table[x expr=(-21+\thisrow{1}), y expr=\thisrow{7} - \thisrow{11}]{\pEA};
\legend{RLS, \EA}
\end{axis}
\end{tikzpicture}
}
\caption{Average optimisation times during 1000 runs for RLS and the \EA on \hurdle with $n=100$ and hurdle width $w=14$ with one-bit prior noise with probability $p \in \{2^{-20}, 2^{-19}, \dots, 2^{-1}\}$ and bit-wise prior noise $(1, q/n)$ with $q \in \{2^{-20}, 2^{-19}, \dots, 2^{0}\}$. Runs were stopped after $10^6$ generations. Transparent lines show means $\pm$ standard deviation.}
\label{fig:hurdle-14}
\end{figure*}

We also provide experiments for \hurdle to see how well the theory predicts the average optimisation time, and to answer questions not covered by Theorem~\ref{the:noise-helps}.

Figure~\ref{fig:hurdle-14} shows the expected optimisation time of RLS and the \EA, for \hurdle with $n=100$ bits and a hurdle width of $w=\lceil 2\log n \rceil = 14$.
Runs were stopped after $n^3 = 10^6$ generations or when the optimum was found.
For one-bit noise with noise strength~$p$, the plots show that the algorithm is very efficient in the region $p \in \{2^{-10}, \dots, 2^{-4}\} \approx \{1/(10n), \dots, 6.4/n\}$ as predicted by Theorem~\ref{the:noise-helps}. The time further seems to increase with $1/p$ as $p$ is decreased, which matches the term $n^2/(pw^2)$ in the running time bound.

We can further see that as $p$ becomes too large, i.\,e., for $p \ge 2^{-3}$, the average time increases sharply. This matches known results for \onemax where $p=\omega((\log n)/n)$ leads to superpolynomial expected times~\cite{Droste2004}.

Figure~\ref{fig:hurdle-14} further shows that the choice of the noise model is insignificant: the results are nearly identical for one-bit prior noise~$p$ and bit-wise prior noise $(1, q/n)$ across all values of $p=q$.

\subsection{On the Performance of the \EA}
\label{sec:RLS-vs-EA}

The \EA shows a similar behaviour to RLS, except that there is a smaller window of efficient parameter ranges. The reader may think that Theorem~\ref{the:noise-helps} could also be proven for the \EA with a more complicated proof that considers all transition probabilities.

However, this is not the case. The problem for the \EA is that, compared to RLS, it is much more prone to climbing back up into the previous local optimum after making a fitness-decreasing jump towards the optimum. For instance, if $w=O(1)$ then there is always a constant probability of jumping to a local optimum with $w$ zeros from any search point with $1 \le i < w$ zeros. And the probability of moving close to the optimum is only of order $O(1/n)$, thus the conditional probability of moving closer to the global optimum in a generation where the \EA either moves closer or jumps to a state with $w$ zeros is still only $O(1/n)$. The algorithm may need to make several such steps in order to arrive at the optimum, and it loses all progress made if a jump back to state~$w$ occurs. This problem becomes less and less important as $w$ increases.

Note that the same fundamental challenge also exists for RLS as it can also move back to the previous local optimum. However, it can only increase the number of zeros by 1 in any step, and if the number of zeros is less than $w-1 \bmod w$, such a move will decrease the fitness and thus will only be accepted if noise makes the offspring appear competitive to the parent. In Theorem~\ref{the:noise-helps} the noise probability~$p$ is chosen low enough such that the latter is unlikely.

In the experiments from Figure~\ref{fig:hurdle-14}, the hurdle width $w=14$ is quite large in relation to the problem size $n=100$, so that the above issue does not affect performance too much. Decreasing the hurdle width shows a different picture: Figure~\ref{fig:hurdle-6} shows the performance of both algorithms for a smaller hurdle width of $w=6$ under one-bit noise.

\begin{figure*}[bt]
\centering
\begin{tikzpicture}[scale=0.76]
\begin{axis}[legend cell align=left, legend pos=south west, xlabel={$\log(p)$}, ymin=0, ymax=1000000,
xmin=-20,xmax=0,
forget plot style={opacity=0.35,mark=none},
]
\pgfplotstableread[header=false]{experiments/runtime-RLS-hurdle-6-n=100-onebit-p-increasing-to-0.5-stopped-after-n3-1000runs.txt}\pRLS
\pgfplotstableread[header=false]{experiments/runtime-(1+1)EA-hurdle-6-n=100-onebit-p-increasing-to-0.5-stopped-after-n3-1000runs.txt}\pEA
\addplot table[x expr=(-21+\thisrow{1}), y index=7]{\pRLS};
\addplot[plotcolor1,forget plot] table[x expr=(-21+\thisrow{1}), y expr=\thisrow{7} + \thisrow{11}]{\pRLS};
\addplot[plotcolor1,forget plot] table[x expr=(-21+\thisrow{1}), y expr=\thisrow{7} - \thisrow{11}]{\pRLS};
\addplot table[x expr=(-21+\thisrow{1}), y index=7]{\pEA};
\addplot[plotcolor2,forget plot] table[x expr=(-21+\thisrow{1}), y expr=\thisrow{7} + \thisrow{11}]{\pEA};
\addplot[plotcolor2,forget plot] table[x expr=(-21+\thisrow{1}), y expr=\thisrow{7} - \thisrow{11}]{\pEA};
\legend{RLS, \EA}
\end{axis}
\end{tikzpicture}
\caption{Average optimisation times during 1000 runs for RLS and the \EA on \hurdle with $n=100$ and hurdle width $w=6$ with one-bit prior noise with probability $p \in \{2^{-20}, 2^{-19}, \dots, 2^{-1}\}$. Runs were stopped after $10^6$ generations. Transparent lines show means $\pm$ standard deviation. The \EA failed in all runs, except for a single run at $\log(p)=-7$ that succeeded after 519377 generations.}
\label{fig:hurdle-6}
\end{figure*}
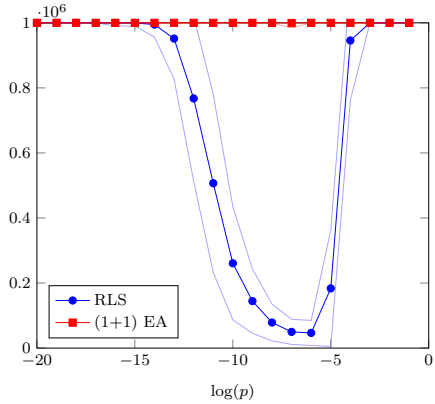

While RLS is still effective in the regime $p \in \{2^{-10}, \dots, 2^{-5}\}$ (even though the hurdle width is lower than required by Theorem~\ref{the:noise-helps}), the \EA failed in all runs, except for a single run at $\log(p)=-7$ that succeeded after 519{,}377 generations.

This conclusively shows why Theorem~\ref{the:noise-helps} had to be limited to RLS. As an aside, we have obtained a rare case where the performance of the \EA is drastically worse than that of RLS. So far, only very artificial examples were known~\cite{Doerr2008} and some of them, examples of \emph{monotone functions}, needed a significantly higher mutation rate~\cite{monotone-journal,lengler_steger_2018}.

\subsection{Offspring Populations are Harmful for Hurdle}

Finally, we consider the role of offspring populations on \hurdle, defining the (1+$\lambda$)~RLS as a variant of the \lEA where mutation flips exactly one bit. For consistency we refer to RLS as (1+1)~RLS.

The proof of Theorem~\ref{the:noise-helps} relies on the fact that a fitness-decreasing step leaving a local optimum towards the global optimum is accepted because of noise. While this effect was helpful on \LO, it is detrimental for \hurdle. This is shown empirically in Figure~\ref{fig:hurdle-offspring}.

\begin{figure*}[bt]
\centering
\begin{tikzpicture}[scale=0.8]
\begin{axis}[legend cell align=left, legend pos=outer north east, xlabel={$\log(p)$}, ymin=0, ymax=1000000,
xmin=-20,xmax=0,
ylabel={time},
forget plot style={opacity=0.35,mark=none},
]
\pgfplotstableread[header=false]{experiments/runtime-RLS-hurdle-14-n=100-onebit-p-increasing-to-0.5-stopped-after-n3-1000runs.txt}\pRLS
\pgfplotstableread[header=false]{experiments/runtime-(1+2)RLS-hurdle-14-n=100-bitwise-p-increasing-to-0.5-stopped-after-n3-1000runs.txt}\pRLSTwo
\pgfplotstableread[header=false]{experiments/runtime-(1+4)RLS-hurdle-14-n=100-bitwise-p-increasing-to-0.5-stopped-after-n3-1000runs.txt}\pRLSFour
\pgfplotstableread[header=false]{experiments/runtime-(1+8)RLS-hurdle-14-n=100-onebit-p-increasing-to-0.5-stopped-after-n3-1000runs.txt}\pRLSEight
\pgfplotstableread[header=false]{experiments/runtime-(1+16)RLS-hurdle-14-n=100-onebit-p-increasing-to-0.5-stopped-after-n3-1000runs.txt}\pRLSSixteen
\addplot table[x expr=(-21+\thisrow{1}), y index=7]{\pRLS};
\addplot[plotcolor1,forget plot] table[x expr=(-21+\thisrow{1}), y expr=\thisrow{7} + \thisrow{11}]{\pRLS};
\addplot[plotcolor1,forget plot] table[x expr=(-21+\thisrow{1}), y expr=\thisrow{7} - \thisrow{11}]{\pRLS};
\addplot table[x expr=(-21+\thisrow{1}), y index=7]{\pRLSTwo};
\addplot[plotcolor2,forget plot] table[x expr=(-21+\thisrow{1}), y expr=\thisrow{7} + \thisrow{11}]{\pRLSTwo};
\addplot[plotcolor2,forget plot] table[x expr=(-21+\thisrow{1}), y expr=\thisrow{7} - \thisrow{11}]{\pRLSTwo};
\addplot table[x expr=(-21+\thisrow{1}), y index=7]{\pRLSFour};
\addplot[plotcolor3,forget plot] table[x expr=(-21+\thisrow{1}), y expr=\thisrow{7} + \thisrow{11}]{\pRLSFour};
\addplot[plotcolor3,forget plot] table[x expr=(-21+\thisrow{1}), y expr=\thisrow{7} - \thisrow{11}]{\pRLSFour};
\addplot table[x expr=(-21+\thisrow{1}), y index=7]{\pRLSEight};
\addplot[plotcolor4,forget plot] table[x expr=(-21+\thisrow{1}), y expr=\thisrow{7} + \thisrow{11}]{\pRLSEight};
\addplot[plotcolor4,forget plot] table[x expr=(-21+\thisrow{1}), y expr=\thisrow{7} - \thisrow{11}]{\pRLSEight};
\addplot table[x expr=(-21+\thisrow{1}), y index=7]{\pRLSSixteen};
\addplot[plotcolor5,forget plot] table[x expr=(-21+\thisrow{1}), y expr=\thisrow{7} + \thisrow{11}]{\pRLSSixteen};
\addplot[plotcolor5,forget plot] table[x expr=(-21+\thisrow{1}), y expr=\thisrow{7} - \thisrow{11}]{\pRLSSixteen};
\legend{(1+1)~RLS, (1+2)~RLS, (1+4)~RLS, (1+8)~RLS, (1+16)~RLS}
\end{axis}
\end{tikzpicture}
\caption{Average optimisation times during 1000 runs for (1+$\lambda$)~RLS on \hurdle with $n=100$ and hurdle width $w=14$ with one-bit prior noise with probability $p \in \{2^{-20}, 2^{-19}, \dots, 2^{-1}\}$. Runs were stopped after $10^6$ generations. Transparent lines show means $\pm$ standard deviation.}
\label{fig:hurdle-offspring}
\end{figure*}
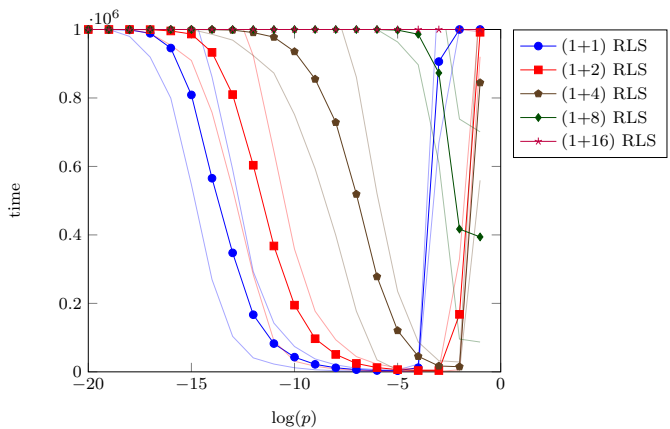

An increased offspring population shifts the curves towards higher noise parameters, while maintaining the unimodal shape of the curve, with steep increases for too large values. This shift is very similar to the one observed for the \lEA on \LO.

For instance, for $p \in \{2^{-10}, \dots, 2^{-4}\}$ where (1+1)~RLS is efficient, the (1+8)~RLS fails to find the optimum before time runs out in almost all runs, and the (1+16)~RLS only found the optimum in a single run at
$p=0.5$, with a time of 795{,}151 generations.
We conclude that, in this context, offspring populations can be harmful.

\section{Conclusions}

We have presented a simple method for proving upper bounds under several prior noise models, based on estimating the probability that during the median worst-case optimisation time no noise occurs. Despite its simplicity, it matches and generalises the best known results~\cite{Qian2018,Bian2018} and provides a unified approach for one-bit noise, bit-wise noise, and asymmetric bit-wise noise.
Along with our negative result for \LO, the expected optimisation time of the \EA on \LO is $\Theta(n^2) \cdot \exp(\Theta(\min\{p n^2, n\}))$ for one-bit noise~$p \le 1/2$, asymmetric one-bit noise with $p = O(1/n)$, and bit-wise noise $(p', q/n)$ where $q/n \le 1/2$ and $p = p'\min\{q, 1\}$. This confirms that the threshold between polynomial and superpolynomial expected times is $p = \Theta((\log n)/n^2)$ and $p = \Omega(1/n)$ leads to exponential expected times.

Offspring populations can cope with noise up to $p \le 1/2$ if the population size is at least $\lambda \ge
\log_{\frac{e}{e-1/2}}(n) \approx 3.42 \log n$.
We obtained an upper bound of
$O\big(n^2 \cdot e^{O(pn/\lambda)}\big)$, guaranteeing
polynomial expected times for $p = O((\lambda \log n)/n)$. An open problem is whether the upper bound is tight in the same sense as for the \EA.

Finally, we showed that on the \hurdle problem class, a highly rugged problem with a clear ``big valley'' structure, prior noise is helpful as it allows RLS to escape from local optima and to follow the underlying gradient.
Experiments complemented our theoretical results and also showed that RLS under noise outperforms the \EA both with and without noise. Experiments further showed that on \hurdle, in stark contrast to \LO, offspring populations in RLS can be harmful as here they reduce the beneficial effects of noise.

Open problems for future work include showing a lower bound for the expected optimisation time of the \lEA on \LO, and obtaining tighter results on the performance of evolutionary algorithms with parent populations, i.\,e., the \muea, on \LO and other problems.

\subsubsection*{Acknowledgements}
The author thanks the anonymous GECCO reviewers for their many thorough and constructive comments that helped to improve the preliminary version~\cite{Sudholt2018a}. Many thanks to Chao Bian and Chao Qian for pointing out mistakes in the proofs of Lemma~\ref{lem:fallback} and Lemma~\ref{lem:recovery-after-fallback} in~\cite{Sudholt2018a} and to Benjamin Doerr for insightful discussions.

\bibliographystyle{abbrvnat}

\end{document}